\def\eqref#1{equation~(\ref{#1})}
\def\1{\bf{1}}
\newcommand{\Norm}[1]{\left\| #1 \right\|}
\newcommand{\norm}[1]{\left\| #1 \right\|_2}
\def\fB{{\mathcal{B}}}
\def\fD{{\mathcal{D}}}
\def\fF{{\mathcal{F}}}
\def\fL{{\mathcal{L}}}
\def\fN{{\mathcal{N}}}
\def\fO{{\mathcal{O}}}
\def\BE{{\mathbb{E}}}
\def\BI{{\mathbb{I}}}
\def\BN{{\mathbb{N}}}
\def\BP{{\mathbb{P}}}
\def\BR{{\mathbb{R}}}
\DeclareMathOperator*{\argmax}{arg\,max}
\DeclareMathOperator*{\argmin}{arg\,min}
\theoremstyle{plain}
\newtheorem{thm}{Theorem}[section]
\newtheorem{dfn}{Definition}[section]
\newtheorem{lem}{Lemma}[section]
\newtheorem{asm}{Assumption}[section]
\def\Ddots{\mathinner{\mkern1mu\raise\p@
\vbox{\kern7\p@\hbox{.}}\mkern2mu
\raise4\p@\hbox{.}\mkern2mu\raise7\p@\hbox{.}\mkern1mu}}
\newcommand*{\rom}[1]{\expandafter\@slowromancap\romannumeral #1@}
\title{Stochastic Bilevel Optimization with Heavy-Tailed Noise}
\date{}
\author{Zhuanghua Liu \quad\quad\quad Luo Luo}
\begin{document}

\maketitle

\begin{abstract}
This paper considers the smooth bilevel optimization in which the lower-level problem is strongly convex and the upper-level problem is possibly nonconvex.
We focus on the stochastic setting where the algorithm can access the unbiased stochastic gradient evaluation with heavy-tailed noise, which is prevalent in many machine learning applications, such as training large language models and reinforcement learning.
We propose a nested-loop normalized stochastic bilevel approximation (N$^2$SBA) for finding an $\epsilon$-stationary point with the stochastic first-order oracle (SFO) complexity of $\tilde{\mathcal{O}}\big(\kappa^{\frac{7p-3}{p-1}} \sigma^{\frac{p}{p-1}} \epsilon^{-\frac{4 p - 2}{p-1}}\big)$, where $\kappa$ is the condition number, $p\in(1,2]$ is the order of central moment for the noise, and $\sigma$ is the noise level.
Furthermore, we specialize our idea to solve the nonconvex-strongly-concave minimax optimization problem, achieving an $\epsilon$-stationary point with the SFO complexity of~$\tilde{\mathcal O}\big(\kappa^{\frac{2p-1}{p-1}} \sigma^{\frac{p}{p-1}} \epsilon^{-\frac{3p-2}{p-1}}\big)$.
All the above upper bounds match the best-known results under the special case of the bounded variance setting, i.e., $p=2$.
We also conduct the numerical experiments to show the empirical superiority of the proposed methods.
\end{abstract}

\section{Introduction}

We consider the following smooth bilevel optimization problem
\begin{equation}\label{bilevel_obj}
\begin{split}
    \min_{x \in \BR^{d_x}} &\varphi(x) \coloneqq f(x, y^*(x)) ,\\
     {\rm s.t.}~&y^*(x) \coloneqq \argmin_{y \in \BR^{d_y}} g(x,y),
\end{split}
\end{equation}
where both the upper-level function and the lower-level function are jointly continuously differentiable and have the form of 
\begin{align}\label{bilevel_stochastic}
  f(x,y)  \coloneqq \BE[F(x, y; \xi)]  \qquad\text{and}\qquad g(x,y)  \coloneqq \BE[G(x, y; \zeta)]  
\end{align}
where $\xi\sim\fD_f$ and $\zeta\sim D_g$ are the stochastic indices. In particular, we focus on the nonconvex-strongly-convex setting, i.e., 
the lower-level function $g(x,y)$ is smooth and strongly convex with respect to $y$, while the upper-level function $f$ is smooth but possibly nonconvex.
This formulation is ubiquitous in a wide range of machine learning applications, including reinforcement learning~\citep{sutton1998introduction,li2017deep,hong2020two}, hyperparameter optimization~\citep{franceschi2018bilevel,pedregosa2016hyperparameter,feurer2019hyperparameter}, meta learning~\citep{finn2017model,fallah2020convergence,rajeswaran2019meta,ji2022theoretical}, and neural architecture search~\citep{liu2018darts,zoph2016neural,zhang2021idarts}.

There has been increasing interest in developing stochastic algorithms for large-scale bilevel optimization under the bounded variance assumption~\citep{hao2024bilevel,gong2024accelerated,gong2024nearly,ghadimi2018approximation,chen2021closing,chen2022single,hong2020two,khanduri2021near,ji2021bilevel,yang2021provably,yang2022decentralized,shen2023penalty,kwon2023fully,chen2025near,chen2025condition,liang2023lower,zhanggeneralized,huangoptimal,yaoovercoming}.
The early methods construct the estimator for the hyper-gradient of the objective~{\citep{ghadimi2018approximation,hong2020two,ji2021bilevel,huang2025efficiently,yang2022decentralized,wang2024efficient,gong2024accelerated,gong2024nearly,hao2024bilevel}}, 
which typically requires iterating with the (stochastic) Hessian-vector product (HVP) oracle to avoid directly computing the inverse Hessian.
However, accessing the HVP oracle may be expensive in many real-world applications \citep{sow2022convergence,song2019maml,finn2017model,nichol2018first}. 
To address this issue, the recent works introduce the penalized formulation for the bilevel problem to design the fully first-order methods, which efficiently achieve an approximate first-order stationary point without any second-order or HVP oracle calls  \citep{kwon2023fully,chen2025near,liu2022bome,shen2023penalty}.

\begin{table*}[t]
\caption{We compare our proposed N$^2$SBA with existing methods by showing the upper complexity bounds for achieving an $\epsilon$-stationary point in the stochastic nonconvex-strongly-convex bilevel optimization problem (\ref{bilevel_obj}), where $\kappa$ is the condition number and~$\sigma \coloneqq \max\{\sigma_f, \sigma_g\}$. The second column indicates the oracles used in the algorithm, i.e., the stochastic Hessian-vector product  (SHVP) and the stochastic first-order oracle (SFO).} 
\label{tbl:bilevel_res}\vskip -0.05cm
\centering
\begin{threeparttable}
\begin{tabular}{cccccc}
\toprule
Methods  & Oracle  &  Complexity & $p$ 
\\
\midrule
\makecell{~~BSA \citep{ghadimi2018approximation}}~~&~~SFO/SHVP~~& $\fO\left(\dfrac{\kappa^9 \sigma^3}{\epsilon^{6}}\right)$ & 2
  \\\addlinespace
\makecell{TTSA \citep{hong2020two}\tnote{\S}}   &  SFO/SHVP & $\fO\left(\dfrac{{\rm poly}(\kappa,\sigma)}{\epsilon^{5}}\right)$ & 2  
  \\\addlinespace
  \makecell{stocBiO \citep{ji2021bilevel}}  &  SFO/SHVP & $\fO\left(\dfrac{\kappa^9 \sigma^2}{ \epsilon^{4}}\right)$ & 2  
  \\\addlinespace
   \makecell{F$^2$SA \citep{kwon2023fully}\tnote{\S}}  &  SFO & $\tilde{\fO}\left(\dfrac{{\rm poly}(\kappa, \sigma)}{ \epsilon^{7}}\right)$ & 2  
  \\\addlinespace
   \makecell{F$^2$BSA \citep{chen2025near}\tnote{\dag}}   &  SFO & $\tilde{\fO}\left(\dfrac{\kappa^{11} \sigma^2}{ \epsilon^{6}}\right)$ & 2  
  \\\addlinespace
   \makecell{N$^2$SBA (Theorem \ref{thm:bilevel_expect})}   & SFO &~~~$\tilde{\fO}\left(\dfrac{\kappa^{\frac{7p-3}{p-1}} \sigma^{\frac{p}{p-1}}} {\epsilon^{\frac{4 p - 2}{p-1}}}\right)$~~~&~~~(1, 2]~~~\\ \addlinespace
\bottomrule 
 \end{tabular}\vskip0.15cm
 \begin{tablenotes}
        \item[$\S$]  
        We use the notation ${\rm poly}(\kappa, \sigma)$ in the complexity bounds of TTSA and F$^2$BSA  since the dependency on the condition number and the noise level has not been explicitly presented by the analyses of \citet{hong2020two} and \citet{kwon2023fully}. \vskip0.1cm
        \item[$\dag$]  The original analysis of \citet[Theorem 4.2]{chen2025near} show that F$^2$SBA attains an $\epsilon$-stationary point with the SFO complexity of $\tilde{\fO}\left({\kappa^{12} \sigma^2}{ \epsilon^{-6}}\right)$. In fact, this result can be improved to $\tilde{\fO}\left({\kappa^{11} \sigma^2}{ \epsilon^{-6}}\right)$. Please refer to Appendix \ref{appendix:F2BDA} for details.
  \end{tablenotes}
\end{threeparttable}
\vskip -0.25cm
\end{table*}

It is worth noting that the bounded variance assumption used in existing analyses for stochastic bilevel optimization is restrictive and may not hold in practice.
The empirical studies on many real-world applications, such as image classification~\citep{simsekli2019tail}, large language model (LLM) training~\citep{zhang2020adaptive,ahn2023linear}, and reinforcement learning~\citep{garg2021proximal} have shown that the stochastic gradient typically follows the heavy-tailed distributions.
These observations motivate us to adopt the weaker $p$-th bounded central moment ($p$-BCM) assumption on the noise of the stochastic gradient, i.e., 
for all~$x\in\BR^{d_x}$ and $y\in\BR^{d_y}$, it holds that
\begin{align*}
    \BE[\Norm{\nabla F(x, y; \xi) - \nabla f(x, y)}^p] \leq \sigma_f^p
    \qquad \text{and} \qquad
    \BE[\Norm{\nabla G(x, y; \zeta) - \nabla g(x, y)}^p] \leq \sigma_g^p,
\end{align*} 
for some $\sigma_f$, $\sigma_g > 0$ and $p \in (1, 2]$ is the order of central moment for the noise.
In the special case of $p = 2$, the $p$-BCM assumption reduces to the standard bounded variance assumption \citep{nemirovski2009robust,ghadimi2013stochastic}. 
In the case of $p \in (1, 2)$, the variance may be unbounded, leading to heavy-tailed noise distributions such as the L\'evy $\alpha$-stable distribution \citep{levy1925calcul,mandelbrot1960pareto}.
While the stochastic minimization problem under heavy-tailed has been extensively studied in recent years~\citep{gorbunov2020stochastic,gorbunov2022clipped,sadiev2023high,gorbunov2024high,zhang2020adaptive,nguyen2023high,cutkosky2020momentum,liu2023stochastic,hubler2024gradient,liunonconvex2025,sun2024gradient,he2025complexity}, 
the stochastic bilevel optimization remains unexplored beyond the bounded variance setting.

In this paper, we propose an efficient stochastic first-order method called nested-loop normalized stochastic bilevel approximation (N$^2$SBA) for nonconvex-strongly-convex bilevel optimization with heavy-tailed noise.
Our algorithm employs a nested-loop structure such that the outer loop can be regarded as inexact normalized stochastic gradient descent on the surrogate nonconvex function, 
while the inner loop solves the subproblems by clipped stochastic gradient descent.
We show that our N$^2$SBA can achieve an $\epsilon$-stationary point with the stochastic first-order oracle (SFO) complexity of $\tilde\fO\big(\kappa^{\frac{7p-3}{p-1}} \sigma^{\frac{p}{p-1}} \epsilon^{-\frac{4 p - 2}{p-1}}\big)$ in expectation, where $\kappa$ denotes the condition number and $\sigma = \max\{\sigma_f, \sigma_g\}$.
We further establish the high probability convergence guarantee for our method, matching the complexity of the in-expectation result up to the logarithmic factor. 
For the special case of the bounded variance setting (i.e., $p = 2$), 
our result matches the state-of-the-art SFO complexity of $\tilde\fO(\kappa^{11} \sigma^{2} \epsilon^{-6})$ established by \citet{chen2025near}.
We compare the result of the proposed N$^2$SBA with related work in Table~\ref{tbl:bilevel_res}.

We also study the stochastic nonconvex-strongly-concave minimax optimization problem
\begin{equation}\label{minimax_obj}
    \min_{x \in \BR^{d_x}} \max_{y \in \BR^{d_y}} f(x, y) \coloneqq \BE[F(x, y; \xi)],    
\end{equation}
which can be regarded as the special case of the bilevel problem (\ref{bilevel_obj}) by taking $G=-F$ in equation (\ref{bilevel_stochastic}).
This formulation covers various machine learning applications, including generative adversarial networks (GANs) \citep{goodfellow2020generative}, robust statistics \citep{xu2009robustness,shafieezadeh2015distributionally}, online learning \citep{cesa2006prediction}, and adversarial robustness~\citep{madry2017towards}.
Most existing studies on stochastic minimax optimization with heavy-tailed noise focus on the convex–concave setting~\citep{gorbunov2022clipped,sadiev2023high,gorbunov2023high}.
\citet{gorbunov2022clipped} and \citet{sadiev2023high} investigated problems under the more general star-cocoercive condition, which is different from our nonconvex–strongly-concave setting.
Following the idea of N$^2$SBA, we develop nested-loop normalized stochastic gradient descent ascent (N$^2$SGDA) for stochastic nonconvex-strongly-concave minimax optimization with heavy-tailed noise.
We show that N$^2$SGDA can achieve an $\epsilon$-stationary point of the function  $\Phi(x) \coloneqq \max_{y\in\BR^{d_y}}f(x, y)$ with the SFO complexity of $\tilde{\fO}\big(\kappa^{\frac{2p-1}{p-1}} \sigma^{\frac{p}{p-1}} \epsilon^{-\frac{3p-2}{p-1}}\big)$ in expectation (with the high probability). 
This result is near-optimal to $\sigma$ and $\epsilon$, since it matches the lower bound of $\Omega\big(\sigma^{\frac{p}{p-1}} \epsilon^{-\frac{3p-2}{p-1}}\big)$ up to logarithmic factor for finding an $\epsilon$-stationary point of stochastic nonconvex minimization problem under the $p$-BCM noise~\citep{zhang2020adaptive,liunonconvex2025}.
We compare the result of the proposed N$^2$SBA with related work in Table~\ref{tbl:minimax_res}.

\begin{table*}[t]
\caption{
We compare our proposed N$^2$SGDA with existing methods by showing the upper complexity bounds for achieving an $\epsilon$-stationary point of $\Phi(x)\coloneqq\max_{y\in\BR^{d_y}}f(x,y)$ for the stochastic nonconvex-strongly-concave minimax optimization problem (\ref{minimax_obj}).
Here, we denote $\sigma\coloneqq \sigma_f$ for the ease of presentation.
} 
\label{tbl:minimax_res}\vskip-0.1cm
\centering
\begin{tabular}{cccccc}
\toprule
Methods  & ~~Oracle~~ & Complexity & 
$p$ 
\\
\midrule
\makecell{~~~~SGDmax \citep{jin2020local}~~~~} & SFO & $\tilde\fO\left(\dfrac{\kappa^3 \sigma^2}{\epsilon^{4}}\right)$ & 2
  \\\addlinespace
\makecell{SGDA \citep{lin2020gradient}} & SFO & $\fO\left(\dfrac{\kappa^3 \sigma^2}{\epsilon^{4}}\right)$ & 2
  \\\addlinespace
 N$^2$SGDA (Theorem~\ref{thm:minimax_high_prob}) & SFO  &  ~~~~$\tilde{\fO}\Bigg(\dfrac{\kappa^{\frac{2p-1}{p-1}}  \sigma^{\frac{p}{p-1}}}{\epsilon^{\frac{3p-2}{p-1}}} \Bigg)$~~~~ & ~~~(1,2]~~~   \\\addlinespace
\bottomrule    
\end{tabular}
\vskip -0.25cm
\end{table*}

\paragraph{Paper Organization} 
In Section \ref{sec:preliminaries}, we introduce preliminaries for our problem setting.
In Section \ref{sec:bilevel_opt}, we propose N$^2$SBA for solving stochastic nonconvex-strongly-convex bilevel optimization problems with heavy-tailed noise and establish both in-expectation and high-probability theoretical guarantees.
In Section~\ref{sec:minimax_opt}, we propose the N$^2$SGDA method for solving the stochastic nonconvex-strongly-concave minimax optimization problem that obtains the near-optimal SFO complexity with respect to the accuracy and noise level.
In Section \ref{sec:experiments}, we conduct numerical experiments to validate the effectiveness of our methods.
Finally, we conclude our work and outline several potential directions for future directions in Section \ref{sec:conclusion}.

\section{Preliminaries}\label{sec:preliminaries}

In this section, we formalize the problem setting for our stochastic bilevel optimization problem. 
We begin by introducing the standard assumptions on smoothness and lower boundedness, following \citep{ghadimi2018approximation,ji2021bilevel,chen2024finding}. 

\begin{asm} \label{asm:bilevel}
We suppose the bilevel optimization problem (\ref{bilevel_obj}) holds that
\begin{enumerate}[label=(\alph*)]
    \item the upper-level function $f(x,y)$ is $C_f$-Lipschitz in $y$, $L_f$-gradient Lipschitz, and twice continuously differentiable;
    \item the lower-level function $g(x,y)$ is $L_g$-gradient Lipschitz, $\rho_g$-Hessian Lipschitz, and $\mu$-strongly convex in $y$;
    \item the objective $\varphi(x)$ is lower bounded, i.e., $\inf_{x \in \BR^{d_x}} \varphi(x) > - \infty$.
\end{enumerate}
\end{asm}

Under Assumption \ref{asm:bilevel}, we define largest smoothness constant and the condition number of the problem (\ref{bilevel_obj}) as follows.

\begin{dfn}  \label{dfn:kappa-bilevel}
Under Assumption \ref{asm:bilevel}, we define the largest smoothness constant and the condition number of problem (\ref{bilevel_obj}) as $\ell: =\max \{ C_f,L_f,L_g,\rho_g  \}$ and $\kappa:= \ell / \mu$, respectively. 
\end{dfn}

Assumption \ref{asm:bilevel} also indicates the hyper-gradient $\nabla \varphi(x)$ has the following closed-form expression \citep{ghadimi2018approximation}.

\begin{restatable}[{\citet[Lemma 2.2]{ghadimi2018approximation}}]{prop}{propFsmooth}
    \label{prop:F-smooth}
Under Assumption \ref{asm:bilevel}, the function $\varphi$ in problem (\ref{bilevel_obj}) is differentiable and 
the hyper-gradient $\nabla \varphi(x)$ has the closed-form expression  
\begin{equation*}
    \nabla \varphi(x) = \nabla_x f(x, y^*(x)) - \nabla^2_{xy} g(x, y^*(x)) (\nabla^2_{yy} g(x, y^*(x)))^{-1} \nabla_y f(x, y^*(x)),
\end{equation*}
where $y^*(x)=\argmin_{y \in \BR^{d_y}} g(x,y)$ is uniquely defined.
Furthermore, the hyper-gradient~$\nabla \varphi(x)$ is $L_\varphi$-Lipschitz continuous with $L_\varphi = \fO(\kappa^3\ell)$.
\end{restatable}

We target to find the $\epsilon$-stationary point of $\varphi$, which is defined as follows.

\begin{dfn} \label{dfn:sta-hyper}
Given a differentiable function $\varphi(x): \BR^d \rightarrow \BR $,
we call the point $\hat x\in\BR^d$ is an $\epsilon$-stationary point of $\varphi(x)$ if it holds $ \Vert \nabla \varphi(\hat x) \Vert \le \epsilon$.
\end{dfn}

We impose the $p$-th bounded central moment ($p$-BCM) assumption  \citep{zhang2020adaptive} on the gradient noise for the upper-level and lower-level functions.

\begin{asm}\label{asm:pBCM}
    For given $x\in\BR^{d_x}$, $y\in\BR^{d_y}$,
    we suppose the algorithm can draw the stochastic indices $\xi\sim\fD_f$ and $\zeta\sim\fD_g$ and access the stochastic first-order oracle $\nabla F(x,y;\xi)$ and $\nabla G(x,y;\zeta)$ such that
    \begin{equation*}
        \BE[\nabla F(x,y;\xi)] = \nabla f(x,y), ~~~ \BE[\Norm{\nabla F(x,y;\xi) - \nabla f(x,y)}^p] \leq \sigma_f^p,
    \end{equation*}
    \begin{equation*}
        \BE[\nabla G(x,y;\zeta)] = \nabla g(x,y), ~~~ \BE[\Norm{\nabla G(x,y;\zeta) - \nabla g(x,y)}^p] \leq \sigma_g^p,
    \end{equation*}
    where $p \in (1,2]$ and $\sigma_f, \sigma_g > 0$.
\end{asm}

Additionally, we introduce the notation $\sigma \coloneqq \max\{\sigma_f, \sigma_g\}$ for the stochastic bilevel optimization problem~(\ref{bilevel_obj}).
We remark that Assumption~\ref{asm:pBCM} with $p = 2$ aligns with the classical bounded variance condition, while the variance of $\nabla F(x, y; \xi)$ and~$\nabla G(x, y; \zeta)$ are possibly unbounded in the case of~$p \in (1,2)$ \citep{levy1925calcul,mandelbrot1960pareto}.

\begin{algorithm}[t]
\caption{Nested-loop Normalized Stochastic Bilevel Approximation (N$^2$SBA)}
\label{alg:nsgd_bilevel}
\begin{algorithmic}[1] 
\STATE \textbf{Input:} initial points $x_0\in\BR^{d_x}$, $\hat {y}_{0,0} = \hat{z}_{0, 0} \in \BR^{d_y}$; penalized parameter $\lambda>0$; 
stepsizes $\eta_x>0, \{\eta_{y,t}>0\}_{t=0}^{T-1}, \{\eta_{z,t}>0\}_{t=0}^{T-1}$;  
sequences of clipping parameters $\{\tau_{t,k}\}_{k=0}^{K-1}$, $\{\tau_{t,k}'\}_{k=0}^{K-1}$; iteration numbers $T,K>0$; batch size $M>0$  \\[1mm]
\STATE \textbf{for} $t = 0, \dots, T-1$ \\[1mm]
\STATE \quad \label{line:inner-start}\textbf{for} $k = 0, \dots, K-1$ \\[1mm]
\STATE  \quad \quad   draw $\xi'_t\sim\fD_f$,~~$\zeta'_t\sim\fD_g$,~~and~~$\zeta_t\sim\fD_g$  \\[1mm]
\STATE  \quad \quad   \label{bilevel_y_start} $g_{y, t, k} = \nabla_y F(x_t,\hat{y}_{t, k}; \xi'_t) + \lambda\nabla_y G(x_{t}, \hat{y}_{t,k}; \zeta'_t)$ \\[1mm]
\STATE  \quad \quad   \label{bilevel_y_end} $\hat{y}_{t,k+1} =  \hat{y}_{t,k} - \eta_{y,t} {\rm clip}(g_{y, t, k},\tau_{t,k}')$ \\[1mm]
\STATE  \quad \quad  \label{bilevel_z_start} $g_{z, t,k} = \lambda \nabla_y G (x_{t}, \hat{z}_{t, k}; \zeta_t)$ \\[1mm]
\STATE  \quad \quad  \label{bilevel_z_end} $\hat{z}_{t, k+ 1} = \hat{z}_{t, k} - \eta_{z,t}{\rm clip}(g_{z, t, k}, \tau_{t,k})$\\[1mm]
\STATE \quad \label{line:inner-end}\textbf{end for} \\[1mm]
\STATE  \quad \label{line:y_t} $y_{t} = \hat{y}_{t,K}$,~~~ $\hat{y}_{t+1,0} = \hat{y}_{t,K}$ \\[1mm]
\STATE  \quad \label{line:z_t} $z_{t} = \hat{z}_{t,K}$,~~~ $\hat{z}_{t+1,0} = \hat{z}_{t,K}$ \\[1mm]
\STATE  \quad  \label{line:outer-start}  draw $\xi_{t,i}\sim \fD_f$~~and~~$\zeta_{t,i}\sim \fD_g$ for all $i\in[M]$ \\[1mm]
\STATE  \quad  $\displaystyle{g_{x, t} = \frac{1}{M}\sum_{i=1}^M} \left(\nabla_x F (x_{t}, y_{t}; \xi_{t,i}) + \lambda (\nabla_x G(x_t, y_t; \zeta_{t,i}) - \nabla_x G(x_t, z_t; \zeta_{t,i}))\right)$ \\[1mm]
\STATE  \quad  \label{line:outer-end} $x_{t+1} = x_{t} - \eta_{x}\cdot\dfrac{g_{x, t}}{\Norm{g_{x, t}}}$
\STATE \textbf{end for} \\[1mm]
\STATE draw $\hat{x}$ from $\{x_t\}_{t=1}^T$ uniformly \\[1mm]
\STATE \textbf{Return:} $\hat{x}$ \\[1mm]
\end{algorithmic}
\end{algorithm}

\section{Nested-loop Normalized Stochastic Bilevel Approximation}\label{sec:bilevel_opt}

We propose nested-loop normalized stochastic bilevel approximation (N$^2$SBA) in Algorithm~\ref{alg:nsgd_bilevel}, where we define the clipping operator as
\begin{align*}
    {\rm clip}(g, \tau) =
    \begin{cases}
        \min\left\{1, \dfrac{\tau}{\Norm{g}}\right\}g, & g\neq0, \\
        0, & g=0,
    \end{cases}
\end{align*}
for $g\in\BR^{d_y}$ and $\tau>0$. The remainder of this section first introduces the intuition of our algorithm design, then provides the convergence guarantees both in expectation and with the high probability.

\subsection{The Algorithm Design}\label{sec:bilevel_method}
We introduce the design of our N$^2$SBA by starting with the penalized reformulation 
\citep{kwon2023fully} for problem (\ref{bilevel_obj}), i.e.,
\begin{align}  \label{eq:opt-xyz}
       \min_{x \in \BR^{d_x}} \fL_{\lambda}^*(x)\coloneqq \min_{y \in \BR^{d_y}} \left\{f(x,y) + \lambda \left(g(x,y) - \min_{z \in \BR^{d_y}} g(x,z)\right)\right\}.
\end{align}
It is worth noting that the gradient of $\fL_{\lambda}^*$ has the closed-form expression that consists of the first-order information of $f$ and $g$, i.e.,
\begin{equation*}
    \nabla \fL_{\lambda}^*(x) = \nabla_x f(x, y_\lambda^*(x)) + \lambda (\nabla_x g(x, y_\lambda^*(x)) - \nabla_x g(x, y^*(x))),
\end{equation*}
where $y_{\lambda}^*(x)\coloneqq\argmin_{y\in\BR^{d_y}} f(x,y)+\lambda (g(x,y)-g^*(x))$.
In addition, taking
$\lambda = \Theta(\kappa^3\ell\epsilon^{-1})$ 
guarantees that any $\epsilon$-stationary point of the function $\fL_{\lambda}^*$ corresponds to an $\fO(\epsilon)$-stationary point of the objective $\varphi$ \citep[Lemma 3.1]{kwon2023fully}.
Therefore, it is natural to solve the bilevel optimization problem (\ref{bilevel_obj}) by using first-order methods to address the surrogate problem (\ref{eq:opt-xyz}).
However, existing stochastic first-order bilevel optimization methods are based on the framework of stochastic gradient descent \citep{kwon2023fully,chen2025near,liu2022bome,shen2023penalty}, whose convergence guarantees require the assumption of bounded variance that does not hold in our $p$-BCM setting \citep[Remark 1]{zhang2020adaptive}.

In our proposed N$^2$SBA, we address the heavy-tailed noise by solving the surrogate problem~(\ref{eq:opt-xyz}) in the framework of inexact normalized stochastic gradient descent. 
Specifically, we first solve the sub-problems
\begin{align}\label{eq:sub-problems}
   \min_{y\in\BR^{d_x}} f(x_t,y) + \lambda g(x_t,y)
   \qquad\text{and}\qquad
    \min_{z\in\BR^{d_x}} g(x_t,z)
\end{align} 
to approximate $y_\lambda^*(x_t)$ and $y^*(x_t)$ by $y_t$ and $z_t$, respectively.
In the views of $y_t\approx y_\lambda^*(x_t)$ and~$z_t\approx y^*(x_t)$,
we then estimate $\nabla \fL_\lambda^*(x_t)$ by the mini-batch stochastic gradient of the function $\fL_\lambda(x,y,z)\coloneqq f(x,y) + \lambda \left(g(x,y) - g(x,z)\right)$ with respect to $x$, i.e.,
\begin{align*}
\displaystyle{g_{x, t} = \frac{1}{M}\sum_{i=1}^M} \left(\nabla_x F (x_{t}, y_{t}; \xi_{t,i}) + \lambda (\nabla_x G(x_t, y_t; \zeta_{t,i}) - \nabla_x G(x_t, z_t; \zeta_{t,i}))\right),    
\end{align*}
where $\xi_{t,i}\sim \fD_f$ and $\zeta_{t,i}\sim \fD_g$ for all $i\in[M]$.
Note that the $p$-BCM condition on the stochastic gradients of $f$ and $g$ (Assumption \ref{asm:pBCM}) implies the stochastic gradients of the objectives in sub-problems (\ref{eq:sub-problems}) and the function $\fL_\lambda$ also has the $p$-th bounded central moments, which typically requires the steps of clipping or normalization to guarantee the convergence of the first-order methods \citep{zhang2020adaptive,sadiev2023high,hubler2024gradient}.
Therefore, our N$^2$SBA (Algorithm \ref{alg:nsgd_bilevel}) solves the strongly-convex sub-problems~(\ref{eq:sub-problems}) by performing the clipped stochastic gradient descent (Line \ref{line:inner-start}--\ref{line:inner-end}) and minimizing the nonconvex function $\fL_\lambda^*$ by iterating with its inexact normalized stochastic gradient $g_{x,t}/\Norm{g_{x,t}}$ (Line~\ref{line:outer-start}--\ref{line:outer-end}).
\subsection{Convergence Analysis}\label{sec:bilevel_analysis}

In this subsection, we present the complexity analysis of our N$^2$SBA for solving the stochastic nonconvex-strongly-convex bilevel optimization problem with heavy-tailed noise. We use notations $\fO(\cdot)$, $\Theta(\cdot)$, $\tilde\fO(\cdot)$, and $\tilde\Theta(\cdot)$ to simplify the presentation of some parameter settings, and the corresponding explicit expressions can be found in appendix.

As discussed in Section~\ref{sec:bilevel_method}, our objective is to identify an $\epsilon$-stationary point of $\fL_{\lambda}^*(x)$ with~$\lambda = \Theta(\kappa^3\ell\epsilon^{-1})$, which corresponds to an $\fO(\epsilon)$-stationary point of $\varphi(x)$.
Hence, we first establish the in-expectation result for our algorithm, which starts from the following key lemma with respect to the function $\nabla \fL_\lambda^*(x)$.

\begin{lem}\label{lemma:n2sba_one_iter}
    Under Assumptions \ref{asm:bilevel} and \ref{asm:pBCM},  running N$^2$SBA (Algorithm \ref{alg:nsgd_bilevel}) with $\lambda \geq 2 L_f / \mu$ holds that
    \begin{equation}
    \begin{split}
         \frac{1}{T}\sum_{t=0}^{T-1}  \BE[\Norm{\nabla \fL_{\lambda}^*(x_t)}] 
          \leq & \frac{\BE[\fL_{\lambda}^*(x_0) - \fL_{\lambda}^*(x_T)]}{\eta_x T} +  \frac{4 \sigma_f + 8 \lambda \sigma_g}{M^{\frac{p-1}{p}}} + \sum_{t=0}^{T-1} \frac{4 \lambda L_g \BE[\Norm{y_t - y_{\lambda}^*(x_t)}]}{T} \\
          & + \sum_{t=0}^{T-1} \frac{2 \lambda L_g \BE[\Norm{z_t - y^*(x_t)}]}{T} + \frac{D_3 \eta_x}{2},
    \end{split}
    \label{eq:key_descent}
    \end{equation}
    where $D_3=\fO(\ell \kappa^3)$.
\end{lem}
In order to find $\fO(\epsilon)$-stationary point of $\fL_{\lambda}^*(x)$, we have to bound the terms $\BE[\Norm{y_t - y_\lambda^*(x_t)}]$ and $\BE[\Norm{z_t - y^*(x_t)}]$ on the right-hand side of inequality \eqref{eq:key_descent}.
Recall that the points~$y_\lambda^*(x_t)$ and $y^*(x_t)$ are the solutions of the convex minimization problems in equation~(\ref{eq:sub-problems}).
Therefore, we introduce the following result for using the clipped stochastic gradient descent to solve the stochastic convex problem with heavy-tailed noise.

\begin{lem}
Assume the function ${h}(y) = \BE [{H}(y; \nu)]$ is $\ell_h$-smooth and $\mu_h$-strongly convex with the stochastic index $\nu\sim\fD_h$, and the stochastic gradient estimator $\nabla H(y; \nu)$ satisfies the conditions~$\BE[\nabla H(y; \nu)] = \nabla h(y)$
and $\BE[\Norm{\nabla {H}(y; \nu) - \nabla h(y)}^p] \leq \sigma_h^p$ for some $\sigma_h>0$. 
We run the clipped stochastic gradient descent 
    \begin{equation}\label{alg:sgd}
    \begin{cases}    
        \nu_k \sim \fD_h \\
        \hat{y}_{k+1} = \hat{y}_{k} - \eta_y {\rm clip}(\nabla H(\hat{y}_k; \nu_k), \tau_k)
    \end{cases}
    \end{equation}
    for $k = 0, 1, \dots, K$ with %
    $\tau_k = \frac{\hat{R}}{120 \eta_y}\exp(-\eta_y\mu_h (1+k/2))$, 
    $\eta_y = \min \left\{\frac{1}{400 \ell_h}, \frac{\ln B_K}{\mu_h (K+1)}\right\}$, and
    $B_K=\fO\big(1+5400^{-\frac{2}{p}}\mu_h^2 K^{ \frac{2 (p - 1)}{p}} \hat{R}^2  \sigma_h^{-2}\big)$.
Then for all $\hat{R}^2 \geq \BE[\Norm{\hat{y}_{0} - y^*}^2]$,
we have
\begin{align*}
    \BE[\Norm{\hat{y}_{K} - y^*}^2] \leq 2 \hat{R}^2 \max\left\{\exp\left(- \frac{K\mu_h}{400\ell_h} \right), \frac{5400^{\frac{2}{p}} \sigma_h^2 (\ln B_K)^2 }{\mu_h^2 K^{ \frac{2 (p - 1)}{p}} \hat{R}^2} \right \}.
\end{align*}
\label{lemma:y_recur_strongly} 
\end{lem}

By appropriate parameter settings, we can apply Lemma \ref{lemma:y_recur_strongly} to show that both
$\BE\left[\Norm{{y}_{t} - y_\lambda^*(x_{t})}\right]$     
and $\BE\left[\Norm{{z}_{t} - y^*(x_{t})}\right]$
are upper bounded by $\fO\left(\epsilon^2/(\ell^2 \kappa^3)\right)$ for all $t=0,\dots,T-1$ (Lemma~\ref{lemma:yz-distance} in Appendix \ref{appendix:thm3}).
Combining with the result of Lemma \ref{lemma:n2sba_one_iter}, we derive the following in-expectation convergence of our N$^2$SBA for the stochastic bilevel optimization problem.
\begin{thm}
    Under Assumption \ref{asm:bilevel} and \ref{asm:pBCM}, we run Algorithm \ref{alg:nsgd_bilevel} (N$^2$SBA) with 
    \begin{align*}
        & \eta_x = \frac{\epsilon}{\ell \kappa^3}, ~~~ \eta_{y,t} = \tilde{\Theta} \left(\min\left\{\frac{1}{ (L_f + \lambda L_g)}, \frac{1}{\lambda \mu K}  \right\}\right), ~~~ \eta_{z,t} = \tilde{\Theta}\left(\min\left\{\frac{1}{ \lambda L_g}, \frac{1}{\lambda \mu K}  \right\}\right), \\
       & T = \fO\left(\frac{\Delta \ell \kappa^3}{\epsilon^2}\right),~~~M = \fO\left(\frac{\ell^{\frac{p}{p-1}} \kappa^{\frac{3p}{p-1}}\sigma^{\frac{p}{p-1}}}{\epsilon^{\frac{2p}{p-1}}}\right), ~~~ K = \tilde\fO\left(\frac{\ell^{\frac{p}{p-1}}  \kappa^{\frac{4p}{p-1}} \sigma^{\frac{p}{p-1}}}{\epsilon^{\frac{2p}{p-1}}}\right), \\
       & \tau_{t,k} = \Tilde{\Theta} \left(\frac{\exp(- \eta_{z,t} \lambda \mu  k )R_z}{ \eta_{z,t}}\right),~\tau_{t,k}' = \Tilde{\Theta}\left(\frac{\exp(- \eta_{y,t} \lambda \mu k )R_y}{ \eta_{y,t}}\right), ~\lambda = \max\left\{\frac{ \kappa}{ R_0}, \frac{\ell \kappa^2}{ \Delta}, \frac{\ell \kappa^3}{ \epsilon}\right\},
    \end{align*}
    where we set
    \begin{align*}
       & R_y^2 \coloneqq 4R_0^2 + \frac{2 \epsilon^4}{\ell^4 \kappa^6} + \frac{32 \epsilon^2}{\ell^2 \kappa^4},~~~ R_{z}^2 \coloneqq R_0^2 +  \frac{2 \epsilon^4}{\ell^4 \kappa^6} + \frac{2 \epsilon^2}{\ell^2 \kappa^4} \\
        & R_0^2 \geq \Norm{\hat{y}_{0,0} - y^*(x_0)}^2,~~~\Delta = \varphi(x_0) - \inf_{x \in \BR^{d_x}} \varphi(x).
    \end{align*}
    Then we have $\BE[\Norm{\nabla \varphi(\hat{x})}] = \fO( \epsilon)$. In particular, the overall SFO calls is upper bounded by 
    \begin{align*}
        \tilde\fO\left(\frac{\Delta\ell^{\frac{2p-1}{p-1}} \kappa^{\frac{7p-3}{p-1}}\sigma^{\frac{p}{p-1}}}{\epsilon^{\frac{4p-2}{p-1}}}\right).
    \end{align*}
    \label{thm:bilevel_expect}
\end{thm}

It is worth noting that our analysis is carried out under the general $p$-BCM noise condition (Assumption~\ref{asm:pBCM}). 
In the special case of $p=2$, it reduces to the standard bounded-variance setting, then Theorem~\ref{thm:bilevel_expect} yields an SFO complexity of $\tilde{\fO}(\kappa^{11}\sigma^2\epsilon^{-6})$ which aligns with the result of the best-known fully first-order methods~\citep{chen2025near}.   

The above in-expectation results guarantee the small gradient norm by taking the average on the outputs of running N$^2$SBA sufficient times,
while the results of the high-probability guarantee for a single run are more desirable in practice.
The following theorem establishes a high-probability guarantee for our N$^2$SBA algorithm.

\begin{thm}
    Under Assumption \ref{asm:bilevel} and \ref{asm:pBCM}, we run Algorithm~\ref{alg:nsgd_bilevel} (N$^2$SBA) with
\begin{align*}
        & \eta_x = \sqrt{\frac{\Delta}{\ell \kappa^3 T}},~~~\eta_{y,t} = \tilde{\Theta} \left(\min \left\{\frac{1}{L_f + \lambda L_g}, \frac{1}{\lambda \mu K}\right\} \right),\eta_{z, t} = \Tilde{\Theta}\left(\min \left\{\frac{1}{ \lambda L_g }, \frac{1}{\lambda \mu K}\right\}\right), \\ 
        &M = \fO\left(\frac{\ell^{\frac{p}{p-1}} \sigma^{\frac{p}{p-1}} \kappa^{\frac{3p}{p-1}}}{\epsilon^{\frac{2p}{p-1}}}\right),~~~ K = \tilde\fO\left(\frac{\ell^{\frac{p}{p-1}} \kappa^{\frac{4p}{p-1}} \sigma^{\frac{p}{p-1}}}{\epsilon^{\frac{2p}{p-1}}}\right),  ~~~\lambda = \max\left\{\frac{ \kappa}{ \sqrt{R}}, \frac{\ell \kappa^2}{ \Delta}, \frac{\ell \kappa^3}{ \epsilon}\right\},\\
        & \tau_{t,k} = \Tilde{\Theta}\left(\frac{\exp(- \eta_{z,t} \lambda \mu k )R_z^2}{ \eta_{z,t} }\right), ~~~\tau_{t,k}' = \tilde{\Theta}\left(\frac{\exp(- \eta_{y,t} \lambda \mu  k )R_y^2}{120 \eta_{y,t}}\right), ~~~T = \tilde\fO\left(\frac{\Delta \ell \kappa^3}{\epsilon^2}\right),
    \end{align*}
    where $R_y^2$, $R_z^2$, $R_0^2$, and $\Delta$ follow settings of Theorem \ref{thm:bilevel_expect}.
     Then we have $\frac{1}{T}\sum_{t=0}^{T-1} \nabla \varphi(x_t)= \fO( \epsilon)$ with probability at least $1 - \delta$ for all $\delta\in(0,1)$.
    In particular, the overall SFO calls is upper bounded by\footnote{The notations $\tilde\Theta(\cdot)$ and $\tilde\fO(\cdot)$ in parameters setting hide the logarithmic factors including $\log(1/\delta)$.} 
    \begin{align*}
        \tilde\fO\left(\dfrac{\Delta\ell^{\frac{2p-1}{p-1}} \kappa^{\frac{7p-3}{p-1}} \sigma^{\frac{p}{p-1}}}{\epsilon^{\frac{4p-2}{p-1}}}\right).
    \end{align*}
    \label{thm:bilevel_high_prob}
\end{thm}
We remark that our SFO complexity for the high-probability result in Theorem~\ref{thm:bilevel_high_prob} matches that of in-expectation counterpart in Theorem~\ref{thm:bilevel_expect}, up to the logarithmic factor. 
It also nearly matches the best-known in-expectation SFO complexity bounds under the bounded variance assumption (i.e., Assumption~\ref{asm:pBCM} with $p=2$) \citep{chen2025near}.

\begin{algorithm}[t]
\caption{Nested-loop Normalized Stochastic Gradient Descent Ascent (N$^2$SGDA)}
\label{alg:nsgd_minimax}
\begin{algorithmic}[1] 
\STATE \textbf{Input:} initial points $x_0\in\BR^{d_x}, \hat {y}_{0,0}  \in \BR^{d_y}$, stepsizes $\eta_x>0, \{\eta_{y,t}>0\}_{t=0}^{T-1}$, sequences of clipping parameters~$\{\tau_{t,k}\}_{k=0}^{K-1}$, iteration numbers $T,K>0$, batch size $M>0$  \\[1mm]
\STATE \textbf{for} $t = 0, \dots, T-1$ \\[1mm]
\STATE \quad \textbf{for} $k = 0, \dots, K-1$ \\[1mm]
\STATE  \quad \quad  \label{inner_iter_start} draw $\xi_t'\sim\fD_f$\\[1mm]
\STATE  \quad \quad  $g_{y, t,k} =  \nabla_y F (x_{t}, \hat{y}_{t, k}; \xi_t')$ \\[1mm]
\STATE  \quad \quad  \label{inner_iter_end} $\hat{y}_{t, k+ 1} = \hat{y}_{t, k} + \eta_{y,t}{\rm clip}(g_{y, t, k}, \tau_{t,k})$\\[1mm]
\STATE \quad \textbf{end for} \\[1mm]
\STATE  \quad \label{minimax_update_y} $y_{t} = \hat{y}_{t,K}$,~~~ $\hat{y}_{t+1,0} = \hat{y}_{t,K}$ \\[1mm]
\STATE  \quad   draw $\xi_{t,i}\sim \fD_f$ for all $i\in[M]$  \\[1mm]
\STATE  \quad  $\displaystyle{g_{x, t} = \frac{1}{M}\sum_{i=1}^M}\nabla_x F (x_{t}, y_{t}; \xi_{t,i})$ \\[1mm]
\STATE  \quad \label{line:minimax_outer-end} $x_{t+1} = x_{t} - \eta_{x}\cdot\dfrac{g_{x, t}}{\Norm{g_{x, t}}}$
\STATE \textbf{end for} \\[1mm]
\STATE draw $\hat{x}$ from $\{x_t\}_{t=1}^T$ uniformly \\[1mm]
\STATE \textbf{Return:} $\hat{x}$ \\[1mm]
\end{algorithmic}
\end{algorithm}

\section{Application to Nonconvex-Strongly-Concave Minimax Optimization}\label{sec:minimax_opt}

In this section, we consider the stochastic nonconvex-strongly-concave minimax problem
\begin{equation}\label{eq:minimax}
    \min_{x \in \BR^{d_x}} \max_{y \in \BR^{d_y}} f(x, y) \coloneqq \BE[F(x, y; \xi)],    
\end{equation}
where $\xi\sim\fD_f$ is the stochastic index and the stochastic gradient estimate $\nabla F(x,y;\xi)$ satisfies the $p$-BCM assumption.
Concretely, we impose the following assumptions on problem (\ref{eq:minimax}).
\begin{asm}
    We suppose minimax optimization problem (\ref{eq:minimax}) holds that
    \begin{enumerate}[label=(\alph*)]
    \item the objective function $f(x,y)$ is $\ell$-smooth and $\mu$-strongly concave in $y$;
    \item the primal function $\Phi(\cdot) = \max_{y \in \BR^{d_y}} f(\cdot, y)$ is lower bounded, i.e., $\inf_{x \in \BR^{d_x}} \Phi(x) > -\infty$;
    \item for given $x\in\BR^{d_x}$ and $y\in\BR^{d_y}$, the algorithm can draw the stochastic indices $\xi\sim\fD_f$ and access the stochastic first-order oracle $\nabla F(x,y;\xi)$ such that
    $\BE[\nabla F(x,y;\xi)] = \nabla f(x,y)$ and 
    $\BE[\Norm{\nabla F(x,y;\xi) - \nabla f(x,y)}^p] \leq \sigma^p$,
    where $p \in (1,2]$ and $\sigma > 0$.
    \end{enumerate}
    \label{asm:minimax}
\end{asm}

It is worth noting that the minimax problem~(\ref{eq:minimax}) is a special case of bilevel problem~(\ref{bilevel_obj}) by taking $G=-F$.
Therefore, we can directly solve minimax problem (\ref{eq:minimax}) by applying N$^2$SBA (Algorithm~\ref{alg:nsgd_bilevel}) with $\lambda=0$ and omitting the update on the variable $\hat z_{t,k}$.
We present this strategy in Algorithm \ref{alg:nsgd_minimax}, 
which is called Nested-loop Normalized Stochastic Gradient Descent Ascent (N$^2$SGDA). 

It is natural to applying the general theoretical results in Section \ref{sec:bilevel_opt} to analyze our N$^2$SGDA for minimax optimization, while this cannot achieve the tight complexity bounds.
Therefore, we need to specialize the structure of minimax problem(\ref{eq:minimax}) to refine the complexity bounds.
We first consider the properties of function $\Phi(\cdot)$.

\begin{lem}[{\citet[Lemma 4.3]{lin2020gradient}}]
Under Assumption \ref{asm:minimax},  the function $\Phi(\cdot)$ is~$(\ell + \kappa \ell)$-smooth and has the form of 
$\nabla \Phi(\cdot) = \nabla_{x} f(\cdot, y^*(\cdot))$,
where $\kappa \coloneqq \ell / \mu$ is the condition number and $y^*(\cdot) = \argmax_{y \in \BR^{d_y}} f(\cdot, y)$. 
In addition, the mapping $y^*(\cdot)$ is $\kappa$-Lipschitz.
\label{lemma:nonconvex_strongly_concave_prop}
\end{lem}
We then establish the descent lemma for function $\Phi(\cdot)$ as follows.
\begin{lem} Under Assumption \ref{asm:minimax}, running N$^2$SGDA (Algorithm \ref{alg:nsgd_minimax}) holds that
 \begin{align*}
         \frac{1}{T}\sum_{t=0}^{T-1}\BE[\Norm{\nabla \Phi(x_{t})}] \leq & \frac{\BE[\Phi(x_0) - \Phi(x_{T})]}{\eta_x T} + \frac{4 \sigma}{M^{\frac{p-1}{p}}} + \frac{2\ell \sum_{t=0}^{T-1} \BE[\Norm{ y_{t} - y^*(x_t)}]}{T} + \eta_{x} (\kappa + 1)  \ell.
    \end{align*}    
    \label{lemma:core_recur}
\end{lem}
Similar to the analysis for the bilevel problem, the iterations (line \ref{inner_iter_start}-\ref{inner_iter_end}) on $\hat y_{t,k}$ can be regarded as solving the strongly convex problem $\min_{y\in\BR^d} -f(x_t,y)$.
Hence, we combine Lemmas~\ref{lemma:y_recur_strongly} and \ref{lemma:core_recur} to achieve the in-expectation guarantee for N$^2$SGDA as follows.

\begin{thm}    \label{thm:minimax_expectation}
    Under Assumption \ref{asm:minimax}, we run Algorithm \ref{alg:nsgd_minimax} (N$^2$SGDA) with 
    \begin{align*}
       & \eta_x =  \frac{\epsilon}{\kappa \ell}, ~~~ \eta_{y,t} = \tilde{\Theta}\left( \min \left\{\frac{1}{\ell}, \frac{1}{\mu K}\right\}\right),~~~\tau_{t,k} = \Tilde{\Theta}\left(\frac{\exp(- \eta_{y,t} \mu k )R_y}{ \eta_{y,t}}\right),\\
        & T =  \fO\left(\frac{\Delta_\Phi \kappa \ell}{\epsilon^2}\right), ~~~  M = \left(\frac{\sigma}{\epsilon}\right)^{\frac{p}{p-1}}, ~~~ K = \tilde{\fO}\left(\kappa + \left( \frac{\ell^2 \sigma^2 }{\mu^2 \epsilon^2} \right)^{\frac{p}{2(p-1)}} \right),
    \end{align*}
    where we set $\Delta_\Phi = \Phi(x_0) - \min_{x \in \BR^{d_x}} \Phi(x)$, $R_y^2 = R_0^2 + {10 \epsilon^2}/{\ell^2}$,
    and $R_0^2 \geq \Norm{\hat{y}_{0,0} - y^*(x_0)}^2$.
    Then we have
   $\BE[\Norm{\nabla \Phi(\hat{x})}] \leq \epsilon$.    
    In particular, the overall SFO complexity is upper bounded by
    \begin{align*}
   & \tilde\fO\left( \dfrac{\Delta_\Phi \kappa^{\frac{2p-1}{p-1}} \ell \sigma^{\frac{p}{p-1}}}{\epsilon^\frac{3p-2}{p-1}}  + \dfrac{\Delta_\Phi \kappa^2 \ell}{\epsilon^2} \right).
    \end{align*}
\end{thm}
Note that the upper bound of N$^2$SGDA (Theorem \ref{thm:minimax_expectation}) is significantly better than that of N$^2$SBA (Theorem \ref{thm:bilevel_expect}).
Moreover, the dependency on $\sigma$ and $\epsilon$ in the result of Theorem \ref{thm:minimax_expectation} matches the lower bound of $\Omega\big(\sigma^{\frac{p}{p-1}}\epsilon^{-\frac{3p-2}{p-1}}\big)$  for finding the $\epsilon$-stationary point of the general nonconvex function under the $p$-BCM noise~\citep{liunonconvex2025}.
This also implies our upper bound is near-optimal to $\sigma$ and $\epsilon$.
Specifically, we consider the instance of the minimax problem in which the objective function takes the form
\begin{equation*}
    f(x, y) = g(x) + h(y)
\end{equation*}
where $g(x)$ is the hard instance for nonconvex optimization under the $p$-BCM~\citep{liunonconvex2025} and $h(y)$ is a smooth and $\mu$-strongly concave function.
Consequently, identifying an 
$\epsilon$-stationary point of function~$\Phi(x)$ can be reduced to finding an $\fO(\epsilon)$-stationary point of the nonconvex function~$g(x)$, which requires at least the SFO complexity of $\Omega\big(\sigma^{\frac{p}{p-1}}\epsilon^{-\frac{3p-2}{p-1}}\big)$ \citep{liunonconvex2025}.
However, the optimality with respect to $\ell$ and $\mu$ is still an open problem even in the special case of $p=2$ \citep{li2021complexity}.

We now provide the high-probability guarantee for our N$^2$SGDA method.
\begin{thm}    \label{thm:minimax_high_prob}
    Under Assumption \ref{asm:minimax}, we run Algorithm \ref{alg:nsgd_minimax} (N$^2$SGDA) with 
    \begin{align*}
       & \eta_x =  \sqrt{\frac{\Delta_\Phi}{(\kappa + 1) \ell T}}, ~~~ \eta_{y,t} = \Tilde{\Theta}\left( \min \left\{\frac{1}{ \ell }, \frac{1}{\mu K}\right\}\right),~~~ \tau_{t,k} = \Tilde{\Theta}\left(\frac{\exp(- \eta_{y,t} \mu k )R_y}{ \eta_{y,t} }\right),\\
       &T =  \Tilde\fO\left(\frac{\Delta_\Phi \kappa \ell}{\epsilon^2}\right), ~~~~
         M = \left(\frac{\sigma}{\epsilon}\right)^{\frac{p}{p-1}}, ~~~~ K = \tilde{\fO}\left(\kappa + \left( \frac{\ell^2 \sigma^2 }{\mu^2 \epsilon^2} \right)^{\frac{p}{2(p-1)}} \right),
    \end{align*}
    where we set $\Delta_\Phi = \Phi(x_0) - \min_{x \in \BR^{d_x}} \Phi(x)$, $R_y^2 = R_0^2 + {10 \epsilon^2}/{\ell^2}$,
    and $R_0^2 \geq \Norm{\hat{y}_{0,0} - y^*(x_0)}^2$.
    Then we have $\sum_{t=0}^{T-1}\norm{\nabla\Phi({x}_t)}/T  \leq \epsilon$ with probability at least $1 - \delta$.    
    In particular, the overall SFO complexity is upper bounded by
    \begin{align*}
   & \tilde\fO\left( \frac{\Delta_\Phi \kappa^{\frac{2p-1}{p-1}} \ell \sigma^{\frac{p}{p-1}}}{\epsilon^\frac{3p-2}{p-1}}  + \frac{\Delta_\Phi \kappa^2 \ell}{\epsilon^2} \right).
    \end{align*}
\end{thm}
We remark that the SFO complexity in Theorem \ref{thm:minimax_high_prob} matches its in-expectation counterpart in Theorem \ref{thm:minimax_expectation}, up to a multiplicative logarithmic factor.
In addition, its dependency on $\delta$ and $\epsilon$ also nearly matches the lower bound.

\section{Experiments}\label{sec:experiments}

In this section, we conduct numerical experiments to validate the superiority of the proposed algorithms for solving stochastic minimax and bilevel optimization problems.

\subsection{A Synthetic Problem}

We consider the synthetic nonconvex-strongly-concave minimax optimization problem
\begin{equation}\label{eq:prob:syntetic}
    \min_{x \in \BR^{d_x}}~~\max_{y \in \BR} f(x,y)=m_1 \left[ \norm{x}^2 + \sin(3 \sqrt{\Norm{x}^2 + 1})\right] + x^{\top} K y - m_2 \Norm{y}^2,
\end{equation}
where $K\in\BR^{d\times d}$ and $m_1, m_2 > 0$.
Following the setup of \citet{laguel2023high,laguel2024high},
we set $d_x = 30$ and $m_1=m_2=1$.
For the interaction matrix, we set $K = 10 \tilde{K} / \big\|\tilde{K}\big\|$, where~$\tilde{K} = (M + M^{\top})/2$ and $M\in\BR^{d \times d}$ has entries independently drawn from $\fN(0,1)$.
In experiments, we employ the stochastic gradient estimate as $\nabla F(x, y; \xi) = \nabla f(x, y) + \xi$ for given $x\in\BR^{d_x}$ and $y\in\BR$, where $\xi = \hat{\xi} - \BE[\hat{\xi}]$ and $\hat{\xi}$ is drawn from the Pareto distribution with the shape parameter $\alpha\in(1,2]$.
This construction ensures that the estimate $F(x, y; \xi)$ satisfies our Assumption \ref{asm:minimax}.

We compare our N$^2$SGDA (Algorithm \ref{alg:nsgd_minimax}) against baseline methods including stochastic gradient descent ascent (SGDA) \citep{lin2020gradient} and stochastic gradient descent with a max-oracle (SGDmax) \citep{jin2020local,nouiehed2019solving}. 
The stepsizes for all algorithms are tuned from $\{1 \times 10^{-6}, 3 \times 10^{-6}, 1 \times 10^{-5} \ldots, 1 \}$.
For N$^2$SGDA and SGDmax, we fix the number of inner iterations be 10.
We set the total iteration count to $T = 5{,}000$ for N$^2$SGDA, Clipped-SGDmax, and SGDmax, and $T = 50{,}000$ for SGDA, so that all methods incur the same total number of stochastic first-order oracle calls. 
We perform 20 independent runs for each algorithm and present the averaged results.

We present the performance of all methods under Pareto noise with $\alpha\in\{1.2,1.5,1.8\}$ in Figure~\ref{fig:pareto_result}. 
We observe that the proposed N$^2$SGDA converges faster than baseline methods SGDmax and SGDA, and its advantage is especially evident in the cases of $\alpha=1.2$ and $\alpha=1.5$.
Furthermore, the baselines SGDmax and SGDA exhibit pronounced oscillations in their trajectories, 
while the proposed N$^2$SGDA demonstrates markedly more stable convergence behavior under the heavy-tailed noise.

\begin{figure*}[t]
\centering
\begin{tabular}{ccc}
     \includegraphics[scale=0.4]{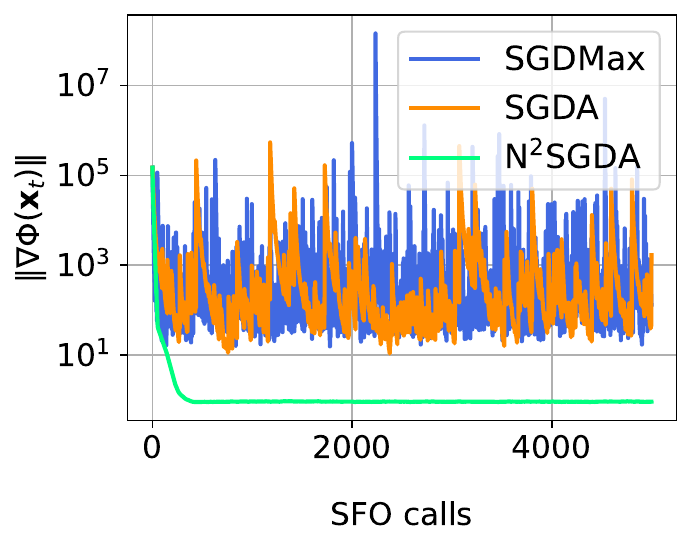}&
      \includegraphics[scale=0.4]{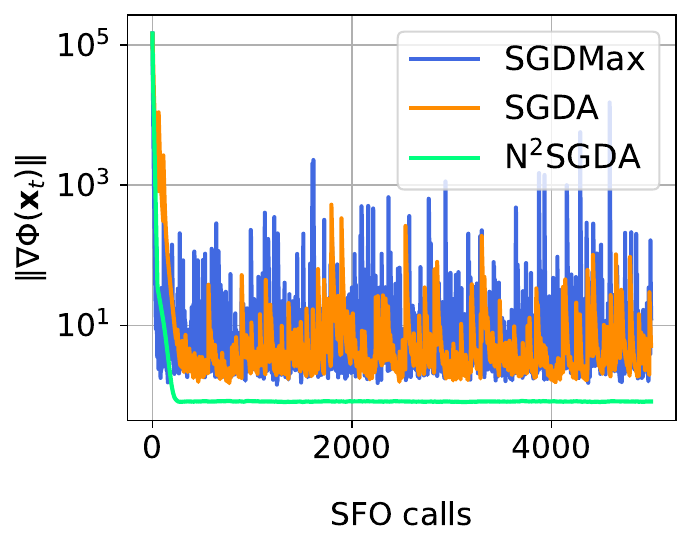}&
      \includegraphics[scale=0.4]{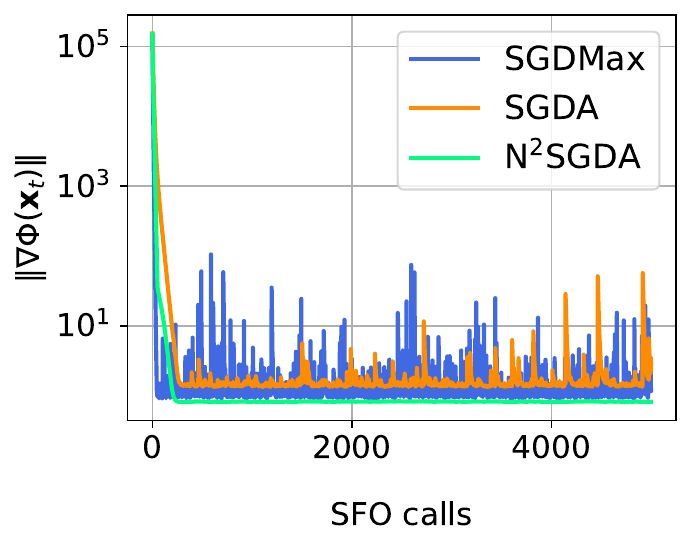} \\
      (a) $\alpha = 1.2$ & (b) $\alpha = 1.5$ & (c) $\alpha = 1.8$
\end{tabular}
\caption{We compare results of the number of SFO calls against $\Norm{\nabla \Phi(x)}$ on a synthetic nonconvex–strongly-concave minimax problem (\ref{eq:prob:syntetic}) with Pareto noise for the shape parameter $\alpha\in\{1.2, 1.5, 1.8\}$.}
\label{fig:pareto_result}
\end{figure*}

\subsection{Learnable Regularization}

We consider the problem of learnable regularization in logistic regression~\citep{grazzi2020iteration,ji2021bilevel,chen2025near}, which is formulated by the bilevel optimization problem,
\begin{align} \label{eq:l2reg}
\begin{split}
    & \min_{x \in \BR^p}  \varphi(x)\coloneqq \left\{ \frac{1}{\vert \fD^{\rm val} \vert} \sum_{(a_i,b_i) \in \fD^{\rm val}} \ell(\, \langle a_i,y^*(x) \rangle\,,b_i) \right\}, \\
    & \text{s.t~~} y^*(x) = \arg \min_{y \in \BR^{p \times c}}  \left\{ \frac{1}{\vert \fD^{\rm tr} \vert} \sum_{(a_i,b_i) \in \fD^{\rm tr}} 
\ell \left( \,\langle a_i,y \rangle\,,b_i \right) + {\rm tr} (y^\top \Sigma(x) \,y) \right\},
\end{split}
\end{align}
where  $\fD^{\rm tr}$ is the training dataset and $\fD^{\rm val}$ is the validation dataset, $a_i \in \BR^p$ is the data feature, $b_i \in \{0, 1, \dots, c-1\}$ is the corresponding class label, $c$ is the number of classes, $\Sigma(x):={\rm diag}(\exp(x_1),\dots,\exp(x_p))$ specifies the regularizers on each feature, and $\ell(\cdot, \cdot)$ denotes the cross entropy function, i.e., 
\begin{align*}
    \ell(Z, y) = - \sum_{i=1}^c \BI(i=y)\log \left( \frac{\exp(Z_i)}{\sum_{j = 1}^c \exp(Z_j)}\right),
\end{align*}
We conduct our experiments on dataset ``20 newsgroup'' that consists of  18,000 documents with 20 classes. Each document is represented as a $p$-dimensional vector ($p = 101{,}631$), where each of its coordinates corresponds to the term frequency–inverse document frequency of the word.

\begin{figure*}
    \centering
    \includegraphics[scale= 0.3]{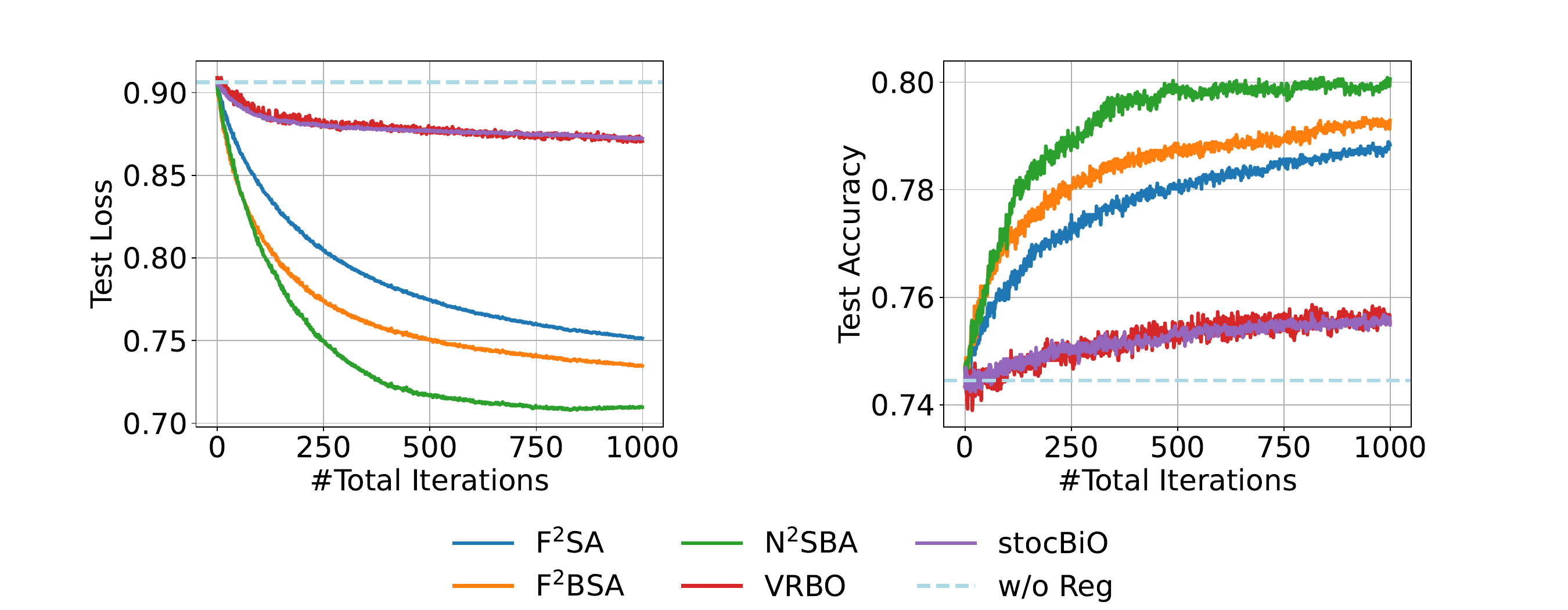} 
\caption{We present the results of stochastic oracle calls (SFO/SHVP) against test loss and test accuracy on bilevel problem (\ref{eq:l2reg}) for the problem of learnable regularization in logistic regression.}
\label{fig:l2reg}
\end{figure*}

We compare our N$^2$SBA method with the stochastic Hessian-vector product based methods stocBiO \citep{ji2021bilevel} and VRBO \citep{yang2021provably}, as well as the stochastic first-order methods F$^2$SA \citep{kwon2023fully} and F$^2$BSA \citep{chen2025near}.
For all algorithms, we set the number of inner iterations be 10 and the number of outer iterations be 1,000. 
In addition, we tune the Lagrangian parameter $\lambda$ from $\{10^1, 10^2, 10^3, 10^4\}$ for F$^2$SA, F$^2$BSA, and N$^2$SBA. 
The stepsizes of all algorithms are tuned from $\{10^{-5}, 10^{-4}., \ldots, 10^3\}$. 
We present the empirical results in Figure~\ref{fig:l2reg}.
Clearly, our proposed N${}^2$SBA method outperforms all baseline methods. 

\subsection{Data Hyper-Cleaning}

We consider the application of data hyper-cleaning on multiple data sources, which is formulated by the bilevel optimization problem  
\begin{align}
\begin{split} \label{eq:cleaning}
    \min_{x \in \BR^m} & \varphi(x)\coloneqq \ell_{\rm val}(y^*(x)), \\
    \text{s.t~~}&  y^*(x)  = \argmin_{y \in \BR^p} \sum_{i=1}^m \sigma(x_i) \ell_{\rm tr}^i(y),
\end{split}
\end{align}
where $\ell_{\rm val}(\cdot)$ is the validation loss, $\ell_{\rm tr}^i(\cdot)$ is the training loss on the $i$th data source, $m$ is the number of data sources, and
$\sigma(x_i) = {\exp(x_i)}/{\sum_{j=1}^m \exp(x_j)}$ is the softmax function.
We perform our experiment on training the GPT-2 model with 124M parameters~\citep{radford2019language} on dataset ``Alpaca'' dataset~\citep{taori2023stanford}, which contains 52,000 instructions paired with demonstration outputs generated by OpenAI's ``text-davinci-003'' engine from~$m=10,000$ data sources. 
Following the setup of \citet{chen2025near}, we split the dataset into training and validation sets in an 8:2 ratio, then introduce corruption into the training set at a proportion of $q\in(0,1)$, i.e., 
we implement the corruption by replacing the demonstration outputs with empty strings

\begin{figure*}[t]
\centering
\begin{tabular}{ccc}
\!\!\!\!\!\includegraphics[scale=0.31]{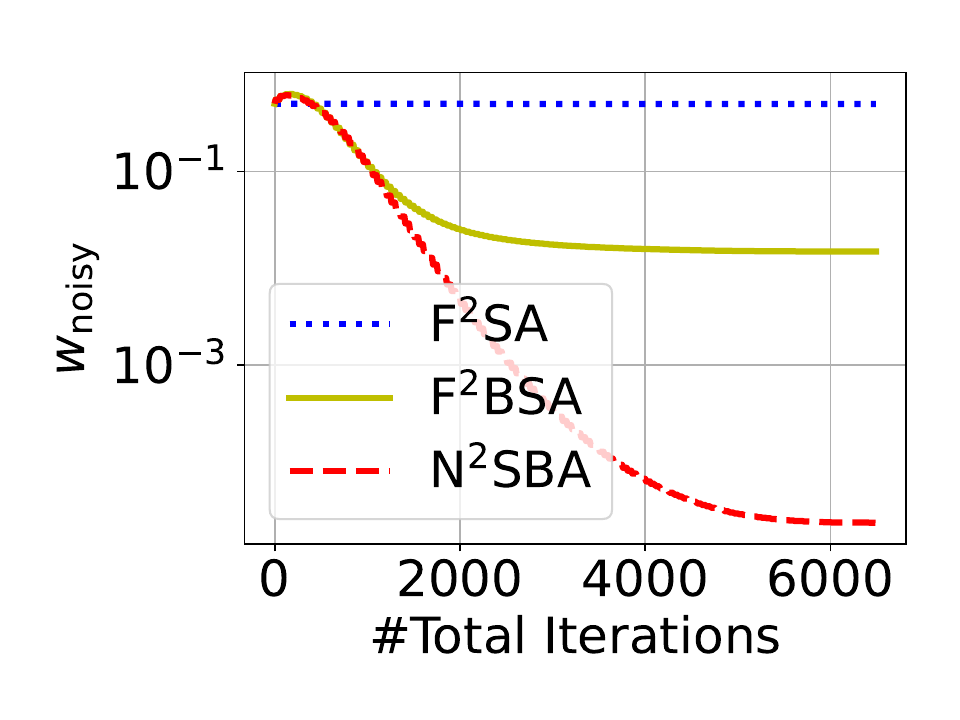} &\!\!\!\!\!\includegraphics[scale=0.31]{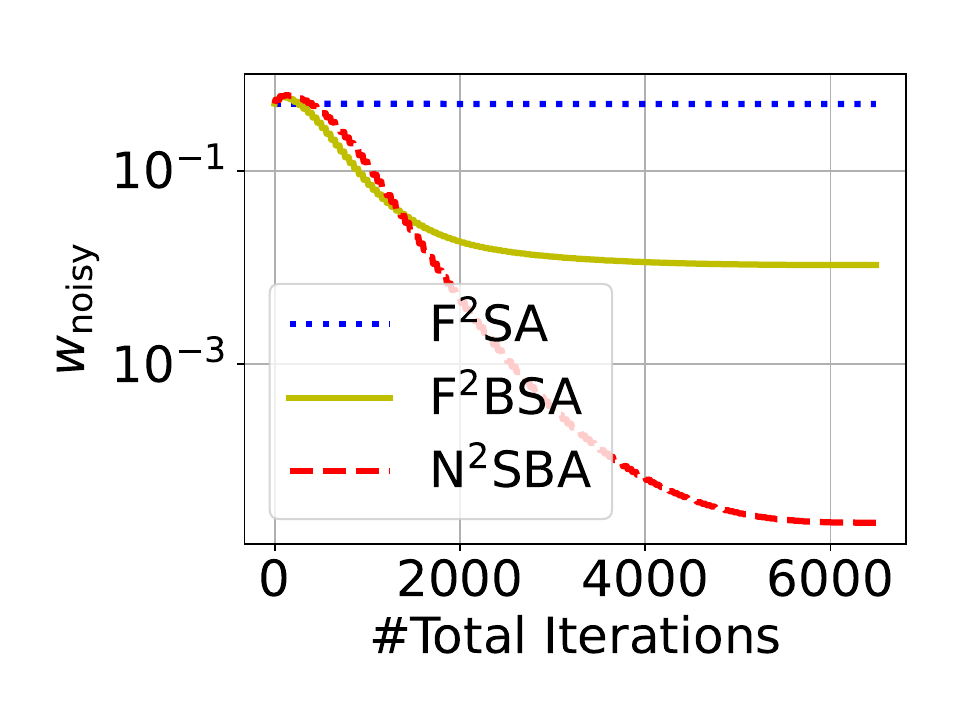}\!\!\!\!\!& \includegraphics[scale=0.31]{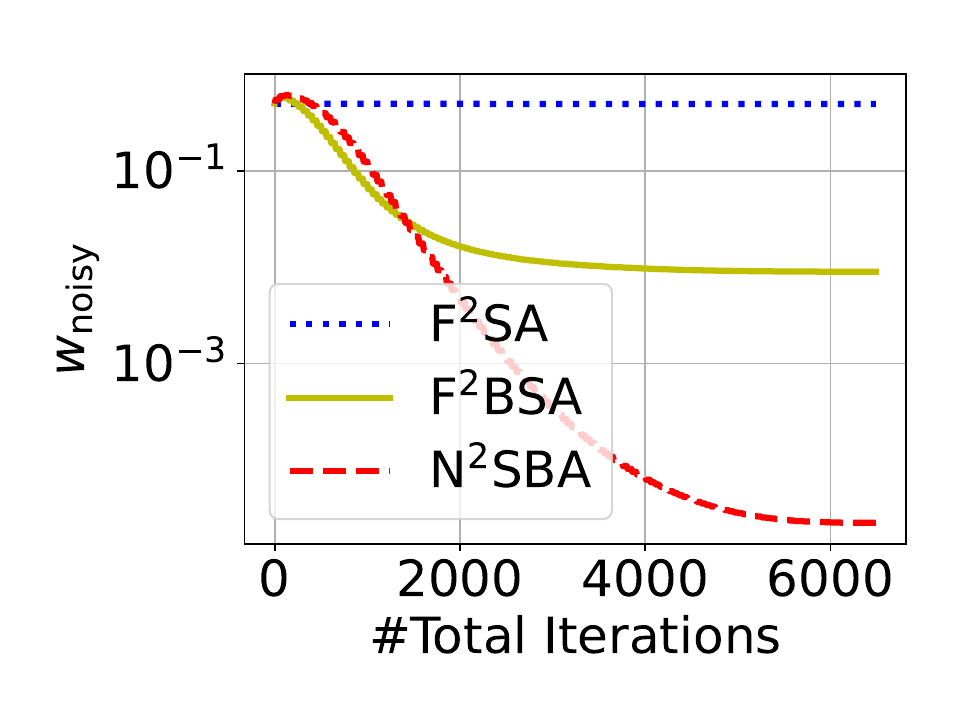} \\
(a)  $p=0.5$     &  (b) $p=0.9$ & (c) $p=0.99$
\label{fig:cleaning}
\end{tabular}
\caption{We present the results of SFO calls against the weights on garbage for the data hyper-cleaning problem (\ref{eq:cleaning}) under different corruption ratios $q\in\{0.5, 0.9, 0.99\}$.}
\end{figure*}

We run our experiment on a system of 8 GPUs.
We compare our proposed method N${}^2$SBA with F${}^2$SA~\citep{kwon2023fully} and F${}^2$BSA \citep{chen2025near}. 
Note that we do not include the SHVP-based methods since they are difficult to implement in multi-GPU training~\citep{tarzanagh2022fednest,chen2023decentralized,chen2025near}. For all algorithms, we use the batch size of 64 and set a fixed penalty parameter $\lambda = 10^3$.
We present the empirical results for different corruption ratios $q=0.5, 0.9, 0.99$ in Figure~\ref{fig:cleaning}. 
We observe that our proposed N${}^2$SBA methods converge significantly faster than F${}^2$SA and F${}^2$BSA,
which aligns with the empirical findings in the single-level nonconvex problem \citet{zhang2020adaptive} that the performance of SGD-based methods deteriorates in the presence of heavy-tailed noise, 
whereas the strategies of gradient clipping and normalization can mitigate this effect and thereby improve the convergence.

\section{Conclusion and Future Work}\label{sec:conclusion}

In this work, we propose a stochastic first-order method called N$^2$SBA for solving stochastic nonconvex–\-strongly-convex bilevel optimization problems in the presence of heavy-tailed noise. 
We establish both in-expectation and high-probability theoretical guarantees for N$^2$SBA under the $p$-BCM gradient noise assumption. 
We prove that the stochastic first-order (SFO) complexity of our method matches that of the best-known stochastic first-order bilevel optimization algorithms under the specific bounded-variance assumption. 
Furthermore, we introduce N$^2$SGDA for addressing stochastic nonconvex–\-strongly-concave minimax optimization problems with heavy-tailed noise, achieving the SFO complexity with the nearly optimal dependence on noise level and accuracy. 

In future work, it is interesting to study the lower bound for solving stochastic nonconvex–\-strongly-convex bilevel optimization problem under heavy-tailed noise. 
Another promising avenue is to extend our methodology to handle constrained bilevel and minimax optimization problems.

\newpage
\appendixThe appendix is organized as follows.
\begin{itemize}[leftmargin=1.5cm,topsep=0.05cm,itemsep=0.03cm]
    \item In Appendix~\ref{sec:supporting_lemmas}, we introduce several key lemmas that are essential for the convergence analysis of the proposed methods.
    \item In Appendix~\ref{appendix:bilevel}, we present the convergence analysis of the N$^2$SBA method for solving the nonconvex–strongly-convex bilevel optimization problem, including Lemma \ref{lemma:n2sba_one_iter}, Theorem \ref{thm:bilevel_expect}, and Theorem \ref{thm:bilevel_high_prob}.
    \item In Appendix~\ref{sec:nonconvex_strongly_concave_proof}, we provide the convergence analysis of the N$^2$SGDA method for solving the nonconvex–strongly-concave minimax optimization problem, including Lemma \ref{lemma:core_recur}, Theorem \ref{thm:minimax_expectation}, and \ref{thm:minimax_high_prob}.
    \item In Appendix~\ref{appendix:strongly_convex}, we analyze the convergence of the clipped SGD for minimizing the strongly convex function, including Lemma \ref{lemma:y_recur_strongly}.
    \item In Appendix~\ref{appendix:F2BDA}, we provide the improved upper bound of SGD under the bounded-variance condition, strengthening Theorem D.2 of \citet{chen2025near}.
    \item In Appendix~\ref{sec:concurrent}, we provide the discussion on the concurrent work \cite{zhang2025nonconvex}.
\end{itemize}

\section{Supporting Lemmas} \label{sec:supporting_lemmas}

In this section, we revisit some technical lemmas that are essential for establishing the convergence guarantee of the proposed algorithms.

\begin{lem}[{\citet[Lemma 7]{hubler2024gradient}}]
    For all $a, b \in \BR^d$ such that $b \neq 0$, it holds that
    $a^{\top} b/\Norm{b} \geq \Norm{a} - 2 \Norm{a - b}$.
    \label{lemma:lower_bound}
\end{lem}

\begin{lem}[{\citet[Lemma 10]{hubler2024gradient}}]
    Let $p \in (1, 2]$ and $X_1, \dots, X_n \in \BR^d$ be a martingale difference sequence, i.e., $\BE[X_j \mid X_{j-1}, \dots, X_1] = 0$ a.s. and $\BE[\Norm{X_j}^p] < +\infty$
    for all $j= 1, \dots, n$. 
    Define $S_n \coloneqq \sum_{j=1}^n X_j$, then we have $\BE[\Norm{S_n}^p] \leq 2 \sum_{j=1}^n \BE[\Norm{X_j}^p]$.
    \label{lemma:mds_bound}
\end{lem}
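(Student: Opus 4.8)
The plan is to reduce this global moment bound to a deterministic one-step smoothness inequality for the map $w \mapsto \Norm{w}^p$ and then propagate it along the martingale by conditioning. Introduce the natural filtration $\fF_k \coloneqq \sigma(X_1, \dots, X_k)$ with $\fF_0$ trivial, together with the partial sums $S_0 \coloneqq 0$ and $S_k \coloneqq \sum_{j=1}^k X_j$, so that $S_k = S_{k-1} + X_k$ where $S_{k-1}$ is $\fF_{k-1}$-measurable and $\BE[X_k \mid \fF_{k-1}] = 0$. The key deterministic ingredient I would use is the pointwise inequality
\[
  \Norm{a + b}^p \le \Norm{a}^p + p \Norm{a}^{p-2} \inner{a}{b} + 2 \Norm{b}^p, \qquad a, b \in \BR^d, \quad p \in (1, 2],
\]
adopting the convention $\Norm{0}^{p-2} \cdot 0 = 0$, which is the correct value of $\tfrac1p \nabla \Norm{a}^p$ at $a = 0$ when $p > 1$. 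Applying it with $a = S_{k-1}$ and $b = X_k$ gives $\Norm{S_k}^p \le \Norm{S_{k-1}}^p + p \Norm{S_{k-1}}^{p-2} \inner{S_{k-1}}{X_k} + 2 \Norm{X_k}^p$.

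Taking the conditional expectation given $\fF_{k-1}$ annihilates the cross term: since $S_{k-1}$ and $\Norm{S_{k-1}}^{p-2}$ are $\fF_{k-1}$-measurable, $\BE[\Norm{S_{k-1}}^{p-2} \inner{S_{k-1}}{X_k} \mid \fF_{k-1}] = \Norm{S_{k-1}}^{p-2} \inner{S_{k-1}}{\BE[X_k \mid \fF_{k-1}]} = 0$. Hence $\BE[\Norm{S_k}^p \mid \fF_{k-1}] \le \Norm{S_{k-1}}^p + 2\, \BE[\Norm{X_k}^p \mid \fF_{k-1}]$, and taking total expectations and telescoping from $k = 1$ to $n$ (using $S_0 = 0$) yields exactly $\BE[\Norm{S_n}^p] \le 2 \sum_{k=1}^n \BE[\Norm{X_k}^p]$. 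Thus the constant $2$ in the conclusion is inherited verbatim from the constant in the pointwise inequality, with no further slack lost in the stochastic step. I would also record the integrability bookkeeping needed to justify the conditioning under only a $p$-th-moment hypothesis: by induction $\BE[\Norm{S_{k-1}}^p] < \infty$, and by Hölder with exponents $p/(p-1)$ and $p$ one gets $\BE[\Norm{S_{k-1}}^{p-1} \Norm{X_k}] \le (\BE \Norm{S_{k-1}}^p)^{(p-1)/p} (\BE \Norm{X_k}^p)^{1/p} < \infty$, so the cross term is integrable and all conditional expectations are well defined.

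It remains to establish the pointwise inequality, which I regard as the crux. The case $a = 0$ is immediate ($\Norm{b}^p \le 2 \Norm{b}^p$), so assume $a \neq 0$ and set $\phi(t) \coloneqq \Norm{a + tb}^p$ on $[0,1]$; the claim is $\phi(1) \le \phi(0) + \phi'(0) + 2 \Norm{b}^p$, where $\phi'(0) = p \Norm{a}^{p-2} \inner{a}{b}$. Using the first-order Taylor remainder $\phi(1) - \phi(0) - \phi'(0) = \int_0^1 (1 - s) \phi''(s)\, ds$ and the computation
\[
  \phi''(s) = p \Norm{a + sb}^{p-4} \bigl[ \Norm{a + sb}^2 \Norm{b}^2 + (p-2) \inner{a + sb}{b}^2 \bigr],
\]
the factor $p - 2 \le 0$ renders the second bracketed term nonpositive; moreover the segment $s \mapsto a + sb$ passes through the origin in at most one point, where $\phi''$ stays integrable against $(1-s)$ for $p > 1$, so the remainder formula is valid. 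The main obstacle is extracting the sharp constant $2$ rather than a $p$-dependent blow-up: naively discarding the negative term yields a remainder bound that degrades like $(p-1)^{-1}\Norm{b}^p$ as $p \downarrow 1$, so one must retain the cancellation between the two bracketed terms. I would handle this by reducing to the two-dimensional plane $\spn\{a, b\}$ and verifying the resulting scalar estimate directly, which is equivalent to invoking the $p$-uniform smoothness of the Hilbert norm $\Norm{\cdot}$. Establishing this pointwise inequality with constant exactly $2$ uniformly over $p \in (1, 2]$ is where the genuine work lies, whereas the martingale propagation described above is routine.
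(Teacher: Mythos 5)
Your martingale scaffolding is correct and complete: the filtration setup, the vanishing of the cross term under conditioning, the telescoping from $S_0=0$, and the H\"older-based integrability bookkeeping are all sound, and they do transfer the constant from the pointwise inequality to the moment bound without loss. The genuine gap is the one you name yourself: the pointwise inequality $\Norm{a+b}^p \le \Norm{a}^p + p\Norm{a}^{p-2}\inner{a}{b} + 2\Norm{b}^p$ is never proved, and it carries essentially all of the content of the lemma --- the dimension-free constant $2$, uniform over $p\in(1,2]$, \emph{is} the (vector-valued) von Bahr--Esseen inequality. Your Taylor-remainder computation, as you concede, yields only a constant of order $(p-1)^{-1}$ once the negative bracketed term is discarded, and the proposed repair (``reduce to $\spn\{a,b\}$ and verify the resulting scalar estimate directly'') is not actually a reduction to a one-dimensional statement: writing $a = c + \alpha b$ with $c \perp b$, the orthogonal component $c$ persists along the whole segment, the bracket in your $\phi''$ becomes $\Norm{c}^2\Norm{b}^2 + (p-1)(\alpha+s)^2\Norm{b}^4$, and one must balance the regimes $\Norm{c}\lesssim\Norm{b}$ and $\Norm{c}\gtrsim\Norm{b}$, with the factor $(p-1)$ cancelling the $(p-1)^{-1}$ from integrating $|\alpha+s|^{p-2}$ only in the first regime. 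None of this case analysis appears, so as written your argument establishes the lemma only with some unspecified constant $C_p$ that your own estimate permits to blow up as $p\downarrow 1$, not with the stated constant $2$.

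For the comparison: the paper contains no proof of Lemma~\ref{lemma:mds_bound} at all --- it imports the statement verbatim from \citet[Lemma 10]{hubler2024gradient}, where it is the classical von Bahr--Esseen moment inequality for martingale difference sequences extended to $\BR^d$. Your decomposition (one-step smoothness-type inequality, conditioning to kill the linear term, telescoping) is precisely the standard proof skeleton for that result, so the route is right and nothing in it would fail; but to count as a proof the proposal must either carry out the planar verification in full, including the case analysis above, or do what the paper itself does and cite the pointwise inequality as known (von Bahr and Esseen, 1965, in the scalar case, together with its standard Hilbert-space extension).
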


\begin{lem}[{\citet[Example 4.1.7]{durrett2019probability}}]
    Let $(\Omega, \fF, \BP)$ be a probability space and $X$, $Y$ be independent random variables mapping to measurable spaces $(E_1, \Sigma_1)$ and $(E_2, \Sigma_2)$ respectively. 
    Furthermore, let $h: E_1 \times E_2 \to \BR^d$ be a (Lebesgue-)measurable function with $\BE[\Norm{h(X, Y)}] < \infty$.
    Then we have $\BE[h(X, Y) \mid X] \overset{a.s.}{=} g(X)$, where $g(\cdot) \coloneqq \BE[h(\cdot, Y)]$.
    \label{lemma:random_variable_mapping}
\end{lem}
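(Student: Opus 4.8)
The plan is to prove this as a standard consequence of the product-measure structure induced by independence, combining Fubini's theorem with the defining property of conditional expectation. First I would reduce to the scalar case: since both the conditional expectation $\BE[\,\cdot \mid X]$ and the assignment $g$ act coordinate-wise on $\BR^d$, it suffices to establish the identity for each of the $d$ real-valued components $h_i$ of $h$, each of which satisfies $\BE[|h_i(X,Y)|]\le\BE[\Norm{h(X,Y)}]<\infty$. I therefore assume $h\colon E_1\times E_2\to\BR$ is measurable with $\BE[|h(X,Y)|]<\infty$. Because $X$ and $Y$ are independent, the joint law of $(X,Y)$ factorizes as the product measure $\mu_X\otimes\mu_Y$, where $\mu_X$ and $\mu_Y$ denote the laws of $X$ and $Y$; the hypothesis then reads $\int_{E_1\times E_2}|h|\,d(\mu_X\otimes\mu_Y)<\infty$.

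Next I would verify the two conditions that characterize $\BE[h(X,Y)\mid X]$. For measurability and finiteness of $g$, I would invoke Fubini's theorem applied to the jointly integrable function $h$: this shows that $g(x)=\int_{E_2}h(x,y)\,d\mu_Y(y)=\BE[h(x,Y)]$ is well-defined and finite for $\mu_X$-almost every $x$, and that $g$ is $\Sigma_1$-measurable; consequently $g(X)$ is $\sigma(X)$-measurable and integrable. For the integral identity, I would use that every set in $\sigma(X)$ has the form $\{X\in B\}$ with $B\in\Sigma_1$, so it suffices to check
\begin{align*}
\BE\big[h(X,Y)\,\mathbf{1}_{\{X\in B\}}\big]=\BE\big[g(X)\,\mathbf{1}_{\{X\in B\}}\big]\qquad\text{for all }B\in\Sigma_1.
\end{align*}
Writing the left-hand side as $\int_{E_1\times E_2} h(x,y)\mathbf{1}_B(x)\,d(\mu_X\otimes\mu_Y)$ and applying Fubini's theorem, the inner integration over $y$ produces exactly $g(x)$, and the remaining integral over $x$ equals $\int_{E_1}g(x)\mathbf{1}_B(x)\,d\mu_X=\BE[g(X)\mathbf{1}_{\{X\in B\}}]$. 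Since $g(X)$ is $\sigma(X)$-measurable and satisfies this defining identity, the a.s.\ uniqueness of conditional expectation yields $\BE[h(X,Y)\mid X]\overset{a.s.}{=}g(X)$.

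I would also note a Fubini-free route via the ``standard machine'' for readers preferring a self-contained argument: verify the claim first for rectangle indicators $h=\mathbf{1}_{C\times D}$, where independence gives $\BE[\mathbf{1}_C(X)\mathbf{1}_D(Y)\mid X]=\mathbf{1}_C(X)\,\mu_Y(D)=g(X)$ directly; extend to all of $\Sigma_1\otimes\Sigma_2$ by the Dynkin $\pi$-$\lambda$ theorem (rectangles form a generating $\pi$-system, and the sets for which the identity holds form a $\lambda$-system); pass to simple functions by linearity; then to non-negative measurable $h$ by the conditional monotone convergence theorem; and finally to integrable $h$ via $h=h^+-h^-$, where integrability ensures both parts yield finite, a.s.-defined conditional expectations.

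The main obstacle I expect is the measurability and almost-everywhere finiteness of $g$, namely ruling out the $\mu_X$-null set of $x$ where $\int|h(x,y)|\,d\mu_Y(y)=\infty$ and confirming that $x\mapsto g(x)$ is genuinely $\Sigma_1$-measurable. This is precisely the point where the joint integrability hypothesis and the product-measure factorization (hence independence) are indispensable, and it is the one place where invoking Fubini's theorem does substantive work rather than mere bookkeeping.
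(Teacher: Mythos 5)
Your proof is correct, and since the paper states this lemma without proof---importing it directly from the cited source (Durrett, Example 4.1.7)---your argument coincides with the standard one there: verify that $g(X)$ is $\sigma(X)$-measurable and integrable and that it satisfies the defining integral identity over sets $\{X \in B\}$, both via the product-measure factorization $\mu_X \otimes \mu_Y$ granted by independence together with Fubini's theorem. The coordinate-wise reduction to scalar $h$, the treatment of the $\mu_X$-null set where $\int |h(x,y)|\,d\mu_Y(y) = \infty$, and the appeal to a.s.\ uniqueness of conditional expectation are all handled correctly, so there is nothing to add.
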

The following lemma provides an upper bound on the norm of the difference between a minibatch gradient estimator and its expectation.
\begin{lem}
Assume that $\nabla f(x, y) = \BE[\nabla F(x, y; \xi)] $ and $\BE[\Norm{\nabla F(x, y; \xi) - \nabla f(x, y)}] \leq \sigma^p$.
Let $g = \frac{1}{M} \sum_{i=1}^M \nabla F(x, y; \xi_i)$, then it holds that $\BE[\Norm{g - \nabla f(x, y)}] \leq {2 \sigma}/{M^{\frac{p-1}{p}}}$.
 \label{lemma:bound_var}
\end{lem}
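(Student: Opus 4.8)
The plan is to decompose the centered minibatch error into a sum of independent, mean-zero terms, apply the von Bahr--Esseen-type moment inequality of Lemma~\ref{lemma:mds_bound} to control its $p$-th moment, and then pass from the $p$-th moment down to the first moment via Jensen's inequality.

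First I would set $X_i \coloneqq \frac{1}{M}\left(\nabla F(x, y; \xi_i) - \nabla f(x, y)\right)$ for $i = 1, \dots, M$, so that $g - \nabla f(x, y) = \sum_{i=1}^M X_i$. Since the samples $\xi_1, \dots, \xi_M$ are drawn i.i.d.\ and $\BE[\nabla F(x, y; \xi_i)] = \nabla f(x, y)$, each $X_i$ has mean zero and the collection $\{X_i\}$ is independent; in particular it forms a martingale difference sequence, and $\BE[\Norm{X_i}^p] < \infty$ by the $p$-BCM hypothesis. This verifies the assumptions needed to invoke Lemma~\ref{lemma:mds_bound}.

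Next I would apply Lemma~\ref{lemma:mds_bound} to obtain
\begin{align*}
\BE\left[\Norm{g - \nabla f(x, y)}^p\right] = \BE\left[\Norm{\sum_{i=1}^M X_i}^p\right] \leq 2 \sum_{i=1}^M \BE\left[\Norm{X_i}^p\right].
\end{align*}
Each summand satisfies $\BE[\Norm{X_i}^p] = M^{-p}\,\BE[\Norm{\nabla F(x, y; \xi_i) - \nabla f(x, y)}^p] \leq M^{-p}\sigma^p$ by the assumed bound, so the right-hand side is at most $2 M \cdot M^{-p}\sigma^p = 2 \sigma^p / M^{p-1}$. Finally, since $p \in (1, 2]$ the map $t \mapsto t^{1/p}$ is concave, so Jensen's inequality gives
\begin{align*}
\BE\left[\Norm{g - \nabla f(x, y)}\right] \leq \left(\BE\left[\Norm{g - \nabla f(x, y)}^p\right]\right)^{1/p} \leq \left(\frac{2 \sigma^p}{M^{p-1}}\right)^{1/p} = \frac{2^{1/p}\, \sigma}{M^{(p-1)/p}},
\end{align*}
and bounding $2^{1/p} \leq 2$ yields the claim.

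I do not expect any genuine obstacle here; the only conceptual point worth flagging is that, unlike the bounded-variance case $p = 2$, one cannot rely on the additivity of variances of independent summands, since these variances need not be finite when $p < 2$. The role of Lemma~\ref{lemma:mds_bound} is precisely to supply the sub-additive $p$-th moment control that substitutes for variance additivity, and recognizing that the centered summands form a martingale difference sequence is the step that makes it applicable.
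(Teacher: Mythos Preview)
Your proof is correct and follows essentially the same route as the paper's: both apply Lemma~\ref{lemma:mds_bound} to the centered i.i.d.\ summands and then use Jensen's inequality to pass from the $p$-th moment to the first moment, finishing with $2^{1/p}\le 2$. The only cosmetic differences are that the paper keeps the factor $1/M$ outside the $X_j$'s and, anticipating later applications with random $(x,y)$, phrases the Jensen step conditionally on $(x,y)$ via Lemma~\ref{lemma:random_variable_mapping}; for the lemma as stated with $(x,y)$ fixed, your unconditional argument suffices.
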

\begin{proof}
    Recall that the minibatch gradient estimator is defined as 
    \begin{align*}
        g = \frac{1}{M} \sum_{i=1}^M \nabla F (x, y; \xi_i).
    \end{align*}
    We start by controlling the expected deviation of $g$ from $\nabla f(x, y)$ using Lemma~\ref{lemma:mds_bound}.
    We define $X_j(x, y) \coloneqq \nabla F(x, y; \xi_j) - \nabla f(x, y)$ for all $j \in [M]$.
    Now note that $X_1(x, y), \dots, X_M(x, y)$ are independent random variables with mean zero and hence a Martingale Difference Sequence (MDS). 
    Furthermore, note that $\BE[\Norm{X_j(x, y)}^p] \leq \sigma^p$ by the assumption.
    Therefore, we can apply the Lemma~\ref{lemma:mds_bound} to get
    \begin{align*}
       l(x, y) \coloneqq \BE\left[ \Big\lVert \sum_{j=1}^M X_j(x, y) \Big\rVert^p\right] \leq 2 \sum_{j=1}^M \BE[\Norm{X_j(x, y)}^p] \leq 2 M \sigma^p.
    \end{align*}
    Next we calculate
    \begin{align*}
        \BE[\Norm{g - \nabla f(x, y)} \mid x, y] = & \BE\left[ \Big \lVert \frac{1}{M} \sum_{j=1}^M \left( \nabla F(x, y; \xi_j) - \nabla f(x, y)\right) \Big \rVert ~\Big|~ x, y \right] \\
        \leq & \frac{1}{M} \BE\left[\Big \lVert \sum_{j=1}^M \left( \nabla F(x, y; \xi_j) - \nabla f(x, y)\right) \Big \rVert^p ~\Big|~ x, y\right]^{1/p},
    \end{align*}
where we applied Jensen's inequality in the last inequality. 
Next we define
\begin{align*}
    Y=(\xi_1, \dots, \xi_M) \quad {\rm and} \quad h(x, y, Y) = \Big \lVert\sum_{j=1}^M \left( \nabla F(x, y, \xi_j) - \nabla f(x, y)\right) \Big \rVert^p,
\end{align*}
 and note that $(x, y)$ and $Y$ are independent.
 Hence, we may apply Lemma~\ref{lemma:random_variable_mapping} which yields
 \begin{align*}
     \BE[\Norm{g - \nabla f(x, y)}] \leq \frac{1}{M} \BE[h(x, y, Y) \mid x, y]^{1/p} = \frac{1}{M} l(x, y)^{1/p} \leq  \frac{2 \sigma}{M^{\frac{p-1}{p}}}.
 \end{align*}
\end{proof} 

The clip operator enjoys the following properties.
\begin{lem}[{\citet[Lemma 5.1]{sadiev2023high}}]
Let $X$ be a random vector in $\BR^d$ and define $\Tilde{X} = {\rm clip}(X, \tau) \coloneqq \min(1, \tau / \Norm{X}) X$, 
then we have $\|\Tilde{X} - \BE[\Tilde{X}]\| \leq 2 \tau$.
Moreover, if for some $\sigma \geq 0$ and $p \in (1, 2]$ we have $\BE[X] = x \in \BR^d$, $\BE[\Norm{X-x}^p] \leq \sigma^p$, and $\Norm{x} \leq \tau / 2$, then we have
\begin{equation*}
    \big\|\BE[\Tilde{X}] - x\big\| \leq \frac{2^p \sigma^{p}}{\tau^{p - 1}},~~
    \BE\left[\big\|\Tilde{X} - x\big\|^2 \right] \leq 18 \tau^{2-p} \sigma^{p}~~\text{and}~~
    \BE\left[\big\|\Tilde{X} - \BE[\Tilde{X}]\big\|^2\right] \leq 18 \tau^{2-p} \sigma^{p}.
\end{equation*}
\label{lemma:clip_grad}
\end{lem}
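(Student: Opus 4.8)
The plan is to establish the four inequalities in sequence, using as the single recurring tool the deterministic pointwise bound $\Norm{\Tilde{X}}=\min\{\Norm{X},\tau\}\le\tau$, which follows directly from $\Tilde{X}=\min\{1,\tau/\Norm{X}\}X$ and $\min\{1,\tau/\Norm{X}\}\in[0,1]$. For the first claim I would combine this with Jensen's inequality: $\Norm{\BE[\Tilde{X}]}\le\BE[\Norm{\Tilde{X}}]\le\tau$, so the triangle inequality gives $\Norm{\Tilde{X}-\BE[\Tilde{X}]}\le\Norm{\Tilde{X}}+\Norm{\BE[\Tilde{X}]}\le 2\tau$. This step uses neither the moment assumption nor $\Norm{x}\le\tau/2$.

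For the bias bound I would use $\BE[X]=x$ to write $\BE[\Tilde{X}]-x=\BE[\Tilde{X}-X]$ and then bound its norm by $\BE[\Norm{\Tilde{X}-X}]$. Since $\Tilde{X}-X$ vanishes on $\{\Norm{X}\le\tau\}$ and equals $(\tau/\Norm{X}-1)X$ otherwise, one has $\Norm{\Tilde{X}-X}=(\Norm{X}-\tau)\mathbf{1}\{\Norm{X}>\tau\}$. The crux is that on $\{\Norm{X}>\tau\}$ the hypothesis $\Norm{x}\le\tau/2$ yields $\Norm{X-x}\ge\Norm{X}-\tau/2>\tau/2$, which gives two facts at once: $\Norm{X}-\tau\le\Norm{X-x}$ and $\mathbf{1}\{\Norm{X}>\tau\}\le(2\Norm{X-x}/\tau)^{p-1}$. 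Multiplying these produces the pointwise bound $\Norm{\Tilde{X}-X}\le 2^{p-1}\Norm{X-x}^p/\tau^{p-1}$, and taking expectations with $\BE[\Norm{X-x}^p]\le\sigma^p$ yields the stated inequality (indeed with the slightly better constant $2^{p-1}\le 2^p$).

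For the second-moment estimate I would record two complementary bounds on $\Norm{\Tilde{X}-x}$: first $\Norm{\Tilde{X}-x}\le\Norm{\Tilde{X}}+\Norm{x}\le 3\tau/2$, and second $\Norm{\Tilde{X}-x}\le\Norm{\Tilde{X}-X}+\Norm{X-x}\le 2\Norm{X-x}$, where the last step reuses $\Norm{\Tilde{X}-X}\le\Norm{X-x}$ established above. The key move is then an interpolation, $\Norm{\Tilde{X}-x}^2=\Norm{\Tilde{X}-x}^{2-p}\Norm{\Tilde{X}-x}^p\le(3\tau/2)^{2-p}(2\Norm{X-x})^p$, so that after taking expectations and using $\BE[\Norm{X-x}^p]\le\sigma^p$ I obtain $\BE[\Norm{\Tilde{X}-x}^2]\le C\,\tau^{2-p}\sigma^p$ with $C=(3/2)^{2-p}2^p$, an absolute constant that stays comfortably below the stated $18$ for all $p\in(1,2]$. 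Finally, the third inequality is immediate from the second: because $\BE[\Tilde{X}]$ minimizes $v\mapsto\BE[\Norm{\Tilde{X}-v}^2]$ over constant vectors $v$, we have $\BE[\Norm{\Tilde{X}-\BE[\Tilde{X}]}^2]\le\BE[\Norm{\Tilde{X}-x}^2]$, inheriting the same bound.

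I expect the main obstacle to be extracting the correct power $\tau^{2-p}$ in the second-moment (and hence variance) estimate. A direct triangle-inequality argument only controls $\BE[\Norm{X-x}^2]$, which is infinite under heavy-tailed noise with $p<2$; the interpolation that splits the square into a clip-bounded factor $\Norm{\Tilde{X}-x}^{2-p}\le(3\tau/2)^{2-p}$ and a $p$-th-moment-controlled factor $\Norm{\Tilde{X}-x}^p\le(2\Norm{X-x})^p$ is exactly what converts the boundedness granted by clipping into the $\tau^{2-p}\sigma^p$ scaling, and keeping the exponents straight there is what pins down the final absolute constant.
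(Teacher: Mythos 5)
Your proof is correct. There is actually no in-paper proof to compare against: the paper imports this statement verbatim as Lemma~5.1 of \citet{sadiev2023high} and never proves it, so the relevant benchmark is the original argument there. Measured against that, your skeleton is the same — clipping gives $\Norm{\Tilde{X}}\le\tau$ and hence the $2\tau$ bound via Jensen; the bias is controlled through $\Norm{\Tilde{X}-X}=(\Norm{X}-\tau)\,\mathbf{1}\{\Norm{X}>\tau\}$ together with the observation that $\Norm{x}\le\tau/2$ forces $\Norm{X-x}>\tau/2$ on the clipping event; and the second moment is obtained by trading the excess power $2-p$ for the deterministic clip-induced bound. Where you genuinely deviate is in the mechanics: the original runs the event-splitting through Markov/H\"older-type estimates, whereas you replace this with the pointwise device $\mathbf{1}\{\Norm{X}>\tau\}\le\left(2\Norm{X-x}/\tau\right)^{p-1}$ and the explicit interpolation $\Norm{\Tilde{X}-x}^2\le(3\tau/2)^{2-p}\,(2\Norm{X-x})^p$, then dispatch the variance bound from the second-moment bound via the variational characterization of the mean. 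This buys slightly sharper constants — $2^{p-1}$ in place of $2^p$ for the bias, and $(3/2)^{2-p}2^p\le 4$ in place of $18$ for the second moment (your stated constants are valid a fortiori) — at no cost in generality. One small point worth making explicit in a write-up: $\BE[\Tilde{X}]$ exists because $\Norm{\Tilde{X}}\le\tau$, and the identity $\BE[\Tilde{X}]-x=\BE[\Tilde{X}-X]$ uses the integrability of $X$ granted by the hypothesis $\BE[X]=x$.
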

Finally, we present some technical lemmas for high-probability convergence analysis.
\begin{lem}[\citet{li2020high}]
    Let $\{\fF_t\}_{t\in \mathbb{N}}$ be a filtration and $\{\Theta_t\}_{t\in \mathbb{N}}$ be a Martingale Difference Sequence with respect to $\{\fF_t\}_{t\in \mathbb{N}}$.
    Furthermore, for each $t \in \mathbb{N}_+$, let $\sigma_t$ be $\fF_{t-1}$-measurable and assume that $\BE[\exp({\Theta_t^2}/{\sigma_t^2})\mid \fF_{t-1}] \leq {\rm e}$.
    Then for all $T \in \mathbb{N}$, $ \chi > 0$ and~$\hat\delta \in (0, 1)$, it holds that
    \begin{align*} 
        \BP \left( \sum_{t=1}^T \Theta_t \leq \frac{3\chi}{4} \sum_{t=1}^T \sigma_t^2 + \frac{1}{\chi} \ln \left(\frac{1}{\hat\delta}\right) \right) \geq 1 - \hat\delta.
    \end{align*}
    \label{lemma:hp_concentrate}
\end{lem}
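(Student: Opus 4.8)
The plan is to prove this via the classical Chernoff / exponential-supermartingale method, reducing the tail bound for $\sum_{t=1}^T D_t$ to a one-step conditional moment generating function (MGF) estimate. The decisive observation is that the hypothesis $\BE[\exp(D_t^2/\sigma_t^2)\mid\fF_{t-1}]\le e$ is a \emph{conditional sub-Gaussian} condition with proxy $\sigma_t^2$, and sub-Gaussianity together with the martingale-difference property $\BE[D_t\mid\fF_{t-1}]=0$ yields a conditional MGF bound of the form $\BE[\exp(\lambda D_t)\mid\fF_{t-1}]\le\exp(\tfrac34\lambda^2\sigma_t^2)$. Once this per-step bound is in hand, a telescoping supermartingale and a single application of Markov's inequality deliver the result. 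I will carry out the three steps in that order.

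First I would establish the conditional MGF bound. Fix $t$ and work conditionally on $\fF_{t-1}$, under which $\sigma_t$ is a constant; the goal is $\BE[\exp(\lambda D_t)\mid\fF_{t-1}]\le\exp(\tfrac34\lambda^2\sigma_t^2)$ for every $\lambda\in\BR$. I would split on the magnitude of $|\lambda|\sigma_t$. In the small regime, where $|\lambda|\sigma_t$ is bounded by a universal constant, I would expand $\exp(\lambda D_t)$ using an elementary scalar inequality such as $e^u\le u+e^{u^2}$ applied at $u=\lambda D_t$; the linear term vanishes in expectation by centering, and the remaining term is controlled by the sub-Gaussian hypothesis after comparing $\lambda^2 D_t^2$ against $D_t^2/\sigma_t^2$. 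In the large regime, I would instead pass through a tail bound obtained by Markov's inequality on $\exp(D_t^2/\sigma_t^2)$, namely $\BP(|D_t|\ge s\mid\fF_{t-1})\le e^{1-s^2/\sigma_t^2}$, and integrate this tail to recover the same quadratic MGF estimate; the bookkeeping across the two regimes is what pins down the constant $\tfrac34$.

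Next I would assemble the supermartingale. Define $Z_0:=1$ and $Z_t:=\exp\!\big(\lambda\sum_{s=1}^t D_s-\tfrac34\lambda^2\sum_{s=1}^t\sigma_s^2\big)$ for $t\ge1$. Because each $\sigma_s$ is $\fF_{s-1}$-measurable, the factor $\exp(-\tfrac34\lambda^2\sigma_t^2)$ can be pulled outside the conditional expectation, and the per-step bound from the first step gives $\BE[Z_t\mid\fF_{t-1}]=Z_{t-1}\exp(-\tfrac34\lambda^2\sigma_t^2)\BE[\exp(\lambda D_t)\mid\fF_{t-1}]\le Z_{t-1}$. Hence $(Z_t)_{t\ge0}$ is a nonnegative supermartingale with $\BE[Z_T]\le Z_0=1$.

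Finally, Markov's inequality applied to $Z_T$ yields $\BP(Z_T\ge 1/\delta)\le\delta\,\BE[Z_T]\le\delta$. The event $\{Z_T<1/\delta\}$, which has probability at least $1-\delta$, rearranges to $\lambda\sum_{t=1}^T D_t-\tfrac34\lambda^2\sum_{t=1}^T\sigma_t^2<\ln(1/\delta)$, i.e. $\sum_{t=1}^T D_t\le\tfrac34\lambda\sum_{t=1}^T\sigma_t^2+\tfrac1\lambda\ln(1/\delta)$ since $\lambda>0$, which is exactly the claim. The main obstacle is the first step: obtaining the conditional MGF bound with the sharp constant $\tfrac34$ valid for \emph{all} $\lambda$ rather than only in the small-$\lambda$ regime, since the convenient scalar inequality covers only bounded $|\lambda|\sigma_t$, and a crude Young-type split that skips the centering leaves a constant prefactor per step that would compound catastrophically over $T$ steps; reconciling the large-$\lambda$ case to the same clean quadratic form is the delicate part.
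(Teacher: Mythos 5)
Your proposal follows exactly the route of the source the paper cites for this lemma: the paper itself gives no proof (it imports the result from \citet{li2020high}), and the proof there is precisely your three steps --- a conditional MGF bound $\BE[\exp(\lambda D_t)\mid \fF_{t-1}]\le \exp(\tfrac34\lambda^2\sigma_t^2)$, the supermartingale $Z_t=\exp\big(\lambda\sum_{s\le t}D_s-\tfrac34\lambda^2\sum_{s\le t}\sigma_s^2\big)$ using $\fF_{t-1}$-measurability of $\sigma_t$, and one application of Markov's inequality. The only real gap is the one you flagged yourself: with the inequality $e^u\le u+e^{u^2}$ the small regime yields only $\BE[\exp(\lambda D_t)\mid\fF_{t-1}]\le e^{\lambda^2\sigma_t^2}$, i.e.\ constant $1$ rather than $\tfrac34$. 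The cited proof closes this by using the sharper scalar inequality $e^u\le u+e^{9u^2/16}$ (valid for all $u\in\BR$): for $|\lambda|\sigma_t\le \tfrac43$, Jensen with exponent $\tfrac{9}{16}\lambda^2\sigma_t^2\le 1$ gives $\BE[e^{\frac{9}{16}\lambda^2D_t^2}\mid\fF_{t-1}]\le e^{\frac{9}{16}\lambda^2\sigma_t^2}\le e^{\frac34\lambda^2\sigma_t^2}$, while for $|\lambda|\sigma_t>\tfrac43$ no tail integration is needed --- Young's inequality $\lambda D_t\le \tfrac38\lambda^2\sigma_t^2+\tfrac{2D_t^2}{3\sigma_t^2}$ together with Jensen at exponent $\tfrac23$ gives the prefactor $e^{2/3}$, which is absorbed since $e^{2/3}\le e^{\frac38\lambda^2\sigma_t^2}$ exactly when $\lambda^2\sigma_t^2\ge \tfrac{16}{9}$, again totaling $\tfrac34\lambda^2\sigma_t^2$. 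With that substitution your argument is complete; as written, it proves the lemma only with $\lambda\sum_t\sigma_t^2$ in place of $\tfrac34\lambda\sum_t\sigma_t^2$, which would in fact still suffice for every use of the lemma in this paper.
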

Our proof also uses the following Bernstein inequality for martingale differences \citep{bennett1962probability,dzhaparidze2001bernstein,freedman1975tail}.
\begin{lem}
    Let the sequence of random variables $\{X_i\}_{i \geq 1}$ form a martingale difference sequence, i.e., $\BE[X_i \mid X_{i-1}, \dots, X_1] = 0$ for all $i \geq 1$.
    Assume that conditional variances $\sigma_i^2 \coloneqq \BE[X_i^2 \mid X_{i-1}, \ldots, X_1]$ exist and are bounded and assume also that there exists a deterministic constant $c>0$ such that $|X_i| \leq c$ almost surely for all $i \geq 1$.
    Then for all $b > 0$, $G > 0$ and $n > 1$, it holds that
    \begin{equation}
        \BP\left\{\bigg|\sum_{i=1}^n X_i\bigg| > b \quad{\rm and}\quad \sum_{i=1}^n \sigma_i^2 \leq G \right\} \leq 2 \exp\left(-\frac{b^2}{2 G + 2 cb/3}\right).
    \end{equation}
    \label{lemma:bernstein_inequality}
\end{lem}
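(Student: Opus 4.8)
The plan is to prove this Freedman-type inequality via the exponential supermartingale (Chernoff) method, carrying the \emph{random} variance proxy $\sum_i \sigma_i^2$ inside the exponent so that the data-dependent constraint $\{\sum_i \sigma_i^2 \le G\}$ can be handled directly. First I would fix $\lambda \in (0, 3/c)$, set $\fF_i = \sigma(X_1,\dots,X_i)$, $S_n = \sum_{i=1}^n X_i$ and $V_n = \sum_{i=1}^n \sigma_i^2$, and note that each $\sigma_i^2 = \BE[X_i^2 \mid \fF_{i-1}]$ is $\fF_{i-1}$-measurable, so $V_n$ is predictable. The analytic heart is a one-step conditional moment-generating-function bound: writing $\psi(u) = (e^u - 1 - u)/u^2$, which is nondecreasing on $\BR$, the identity $e^{\lambda X_i} = 1 + \lambda X_i + (\lambda X_i)^2 \psi(\lambda X_i)$ together with $X_i \le c$ gives $\psi(\lambda X_i) \le \psi(\lambda c)$; taking $\BE[\,\cdot\mid\fF_{i-1}]$ and using $\BE[X_i\mid\fF_{i-1}]=0$ yields $\BE[e^{\lambda X_i}\mid\fF_{i-1}] \le 1 + \lambda^2 \psi(\lambda c)\sigma_i^2 \le \exp(\lambda^2\psi(\lambda c)\sigma_i^2)$.

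Next I would assemble the supermartingale $M_n := \exp(\lambda S_n - \lambda^2\psi(\lambda c) V_n)$. Because $V_n - V_{n-1} = \sigma_n^2$ is $\fF_{n-1}$-measurable, the previous display gives $\BE[M_n\mid\fF_{n-1}] \le M_{n-1}$, so $(M_n)$ is a nonnegative supermartingale with $M_0 = 1$ and hence $\BE[M_n] \le 1$. On the event $\{S_n > b\} \cap \{V_n \le G\}$ one has $M_n \ge \exp(\lambda b - \lambda^2 \psi(\lambda c) G)$, so Markov's inequality applied to $M_n$ gives $\BP(S_n > b,\ V_n \le G) \le \exp(-\lambda b + \lambda^2\psi(\lambda c) G)$.

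To reach the stated constants I would bound $\psi(u) \le \tfrac{1}{2(1 - u/3)}$ for $0 \le u < 3$ (an elementary series estimate), giving the tail bound $\exp(-\lambda b + \tfrac{\lambda^2 G}{2(1-\lambda c/3)})$, and then choose $\lambda = b/(G + cb/3)$; substituting this value collapses the exponent to exactly $-b^2/(2G + 2cb/3)$. Finally, applying the identical argument to the sequence $\{-X_i\}$ (which is again an MDS with the same conditional variances and $|-X_i|\le c$) controls the lower tail, and a union bound over the two tails produces the factor of $2$.

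I expect the main obstacle to be the bookkeeping around the \emph{random} variance term rather than any single inequality: the whole point of the supermartingale construction is that $V_n$ is data-dependent, so one cannot simply bound a deterministic MGF and invoke Chernoff; the predictability of $V_n$ and the sign $\psi(\lambda c) > 0$ are exactly what let the constraint $\{V_n \le G\}$ be folded into the Markov step. Verifying the scalar inequality $\psi(u) \le \tfrac{1}{2(1-u/3)}$ and checking that the optimizing $\lambda$ indeed satisfies $\lambda c < 3$ are routine but must be carried out to legitimize the final constants.
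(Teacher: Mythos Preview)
Your proposal is correct and gives a standard, self-contained proof of Freedman's inequality via the exponential-supermartingale method; the one-step MGF bound, the series estimate $\psi(u)\le \tfrac{1}{2(1-u/3)}$, and the choice $\lambda = b/(G+cb/3)$ all check out and yield exactly the stated constants.

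The paper, however, does not prove this lemma at all: it simply states it and cites \cite{bennett1962probability,dzhaparidze2001bernstein,freedman1975tail} as sources, treating it as a known concentration tool. So your approach is not ``different from the paper's proof'' so much as it supplies a proof where the paper offers none. What your write-up buys is self-containment and transparency about where the constants $2G + 2cb/3$ come from; what the paper's approach buys is brevity, since Freedman's inequality is standard and the authors use it only as an auxiliary device (and in fact it appears not to be invoked explicitly in the later arguments, which rely instead on Lemma~\ref{lemma:hp_concentrate}).
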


\section{Proofs for Results of Stochastic Bilevel Optimization}\label{appendix:bilevel}

In this section, we establish the convergence guarantee of the N$^2$SBA algorithm for solving the nonconvex–strongly-convex bilevel optimization problem. 
We begin by presenting key properties of the bilevel objective function $\varphi(x)$ and the auxiliary function $\fL_{\lambda}^*(x)$.

\begin{lem}[\citet{kwon2023fully}]
Under Assumption \ref{asm:bilevel}, for all $\lambda \geq 2 L_f / \mu$, the function $\fL_{\lambda} (x, y) \coloneqq f(x, y) + \lambda(g(x, y) - g^*(x))$ is $(\lambda \mu / 2)$-strongly convex in $y$ for given $x\in\BR^{d_x}$. 
\label{lemma:lambda_strongly_convex}
\end{lem}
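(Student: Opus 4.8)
The plan is to work directly with the $y$-Hessian of $\fL_{\lambda}(x, \cdot)$ and show it is lower bounded by $(\lambda\mu/2)\mathbf{I}$. The first observation is that, with $x$ held fixed, the term $g^*(x) = \min_z g(x, z)$ in $\fL_{\lambda}(x, y) = f(x, y) + \lambda(g(x, y) - g^*(x))$ is a constant in $y$ and therefore contributes nothing to the $y$-Hessian. Since $f$ is twice continuously differentiable and $g$ is $\rho_g$-Hessian Lipschitz (hence twice differentiable) by Assumption~\ref{asm:bilevel}, the Hessian exists and satisfies
\[
\nabla_{yy}^2 \fL_{\lambda}(x, y) = \nabla_{yy}^2 f(x, y) + \lambda \, \nabla_{yy}^2 g(x, y).
\]

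Next I would bound the two terms separately. The $\mu$-strong convexity of $g(x, \cdot)$ gives $\nabla_{yy}^2 g(x, y) \succeq \mu \mathbf{I}$, while the $L_f$-gradient-Lipschitz property of $f$ gives $\Norm{\nabla_{yy}^2 f(x, y)} \leq L_f$ and hence $\nabla_{yy}^2 f(x, y) \succeq -L_f \mathbf{I}$. Combining the two yields
\[
\nabla_{yy}^2 \fL_{\lambda}(x, y) \succeq (\lambda \mu - L_f)\,\mathbf{I}.
\]
Finally I would invoke the hypothesis $\lambda \geq 2L_f/\mu$, which is equivalent to $\lambda\mu/2 \geq L_f$, so that $\lambda\mu - L_f \geq \lambda\mu - \lambda\mu/2 = \lambda\mu/2$. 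This gives $\nabla_{yy}^2 \fL_{\lambda}(x, y) \succeq (\lambda\mu/2)\mathbf{I}$ for every $y$, which is precisely $(\lambda\mu/2)$-strong convexity of $\fL_{\lambda}(x, \cdot)$.

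There is no genuine obstacle here beyond routine bookkeeping; the only points requiring mild care are justifying that the $y$-Hessian exists and that the constant term $g^*(x)$ may be dropped, both of which follow immediately from the smoothness hypotheses on $f$ and $g$ in Assumption~\ref{asm:bilevel}. If one prefers to avoid Hessians entirely, the identical conclusion follows from the first-order characterization of (strong) convexity: add the $\mu$-strong-convexity inequality for $\lambda g(x, \cdot)$ to the $(-L_f)$-weak-convexity inequality for $f(x, \cdot)$ (a consequence of $L_f$-smoothness via Lemma~\ref{lemma:convex_smooth_prop}), and then apply $\lambda\mu/2 \geq L_f$ to absorb the negative curvature of $f$.
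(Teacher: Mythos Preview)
Your argument is correct and is precisely the standard one-line computation: the $y$-Hessian of $\fL_\lambda(x,\cdot)$ is $\nabla_{yy}^2 f + \lambda\nabla_{yy}^2 g \succeq (\lambda\mu - L_f)\mathbf{I} \succeq (\lambda\mu/2)\mathbf{I}$ once $\lambda\mu \ge 2L_f$. The paper does not supply its own proof of this lemma---it simply cites \citet{kwon2023fully}---so there is nothing further to compare; your proof (either the Hessian version or the first-order alternative you sketch) is exactly what one would expect and fills the gap cleanly.
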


\begin{lem}[\citet{chen2025near}] \label{lemma:bilevel_relationship}
Under Assumption \ref{asm:bilevel}, for $\lambda \geq 2 L_f / \mu$, it holds that
\begin{align*}
   & \Norm{y^*_\lambda(x) - y^*(x)} \leq {C_f}/{\lambda \mu}, ~~~~ |\fL_\lambda^*(x) - \varphi(x)| \leq {D_0}/{\lambda}, \\
   & \Norm{\nabla \fL_\lambda^*(x) - \nabla \varphi(x)} \leq {D_1}/{\lambda}, ~~~ \Norm{\nabla y^*(x) - \nabla y_\lambda^*(x)} \leq {D_2}/{\lambda},
\end{align*}
where we define 
    \begin{align*}
       & D_0 \coloneqq \left(C_f \! + \! \frac{C_f L_g}{2 \mu}\right) \frac{C_f}{\mu} = \fO(\ell \kappa^2), ~~~D_1 \coloneqq \left(L_f \!+ \!\frac{\rho_g L_g}{\mu}\! + \!\frac{C_f L_g \rho_g}{2 \mu^2}\! + \!\frac{C_f \rho_g}{2 \mu}\right) \frac{C_f}{\mu} = \fO(\ell \kappa^3), \\
       &  D_2 \coloneqq \left(\frac{1}{\mu} + \frac{2 L_g}{\mu^2}\right) \left(L_f + \frac{C_f \rho_g}{\mu}\right) = \fO(\kappa^3).
    \end{align*}
\end{lem}

\begin{lem}[\citet{chen2025near}]
    Under Assumption \ref{asm:bilevel}, for $\lambda \geq 2 L_f / \mu$, it holds that $\Norm{\nabla y^*(x)} \leq L_g / \mu$ and $\Norm{\nabla y_\lambda^*(x)} \leq 4 L_g / \mu$.
    \label{lemma:y_smooth}
\end{lem}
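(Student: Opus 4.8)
The plan is to obtain closed-form expressions for the Jacobians $\nabla y^*(x)$ and $\nabla y_\lambda^*(x)$ via the implicit function theorem, and then bound their operator norms using the strong convexity and gradient-Lipschitz properties from Assumption~\ref{asm:bilevel} together with Lemma~\ref{lemma:lambda_strongly_convex}. Both minimizers are well-defined and continuously differentiable in $x$: the relevant inner Hessians are invertible because $g(x,\cdot)$ is $\mu$-strongly convex and $\fL_\lambda(x,\cdot)$ is $(\lambda\mu/2)$-strongly convex, and $f,g$ are twice continuously differentiable, so implicit differentiation is justified.

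First, for $y^*(x)$, I would start from the optimality condition $\nabla_y g(x, y^*(x)) = 0$ and differentiate in $x$, which yields $\nabla^2_{yx} g + \nabla^2_{yy} g\,\nabla y^*(x) = \mathbf{0}$, so $\nabla y^*(x) = -(\nabla^2_{yy} g)^{-1}\nabla^2_{yx} g$, with all Hessians evaluated at $(x, y^*(x))$. Strong convexity gives $\nabla^2_{yy} g \succeq \mu \vI$, hence $\Norm{(\nabla^2_{yy} g)^{-1}} \leq 1/\mu$, while the $L_g$-gradient-Lipschitz property bounds the operator norm of the full Hessian --- and therefore of its off-diagonal block $\nabla^2_{yx} g$ --- by $L_g$. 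Submultiplicativity then gives $\Norm{\nabla y^*(x)} \leq L_g/\mu$.

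Next, for $y_\lambda^*(x)$, the optimality condition $\nabla_y f(x, y_\lambda^*(x)) + \lambda \nabla_y g(x, y_\lambda^*(x)) = \mathbf{0}$ (note that $g^*(x)$ is independent of $y$ and so drops out) differentiates to $(\nabla^2_{yy} f + \lambda \nabla^2_{yy} g)\,\nabla y_\lambda^*(x) = -(\nabla^2_{yx} f + \lambda \nabla^2_{yx} g)$. The matrix on the left is exactly the Hessian $\nabla^2_{yy}\fL_\lambda$, which by Lemma~\ref{lemma:lambda_strongly_convex} satisfies $\nabla^2_{yy}\fL_\lambda \succeq (\lambda\mu/2)\vI$, so its inverse has norm at most $2/(\lambda\mu)$; the right-hand side has norm at most $L_f + \lambda L_g$. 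Hence $\Norm{\nabla y_\lambda^*(x)} \leq \frac{2(L_f + \lambda L_g)}{\lambda\mu} = \frac{2L_f}{\lambda\mu} + \frac{2L_g}{\mu}$.

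Finally, I would simplify using the hypothesis $\lambda \geq 2L_f/\mu$, which gives $2L_f/(\lambda\mu) \leq 1$, together with the standard fact $L_g \geq \mu$ (the smoothness constant dominates the strong-convexity constant), so that $1 \leq L_g/\mu$. Combining these, $\Norm{\nabla y_\lambda^*(x)} \leq L_g/\mu + 2L_g/\mu = 3L_g/\mu \leq 4L_g/\mu$. The only genuinely delicate point is justifying the implicit differentiation, namely the invertibility of the inner Hessians and enough regularity to apply the implicit function theorem; both are supplied directly by the strong convexity in Assumption~\ref{asm:bilevel} and by Lemma~\ref{lemma:lambda_strongly_convex}, after which the remainder is routine operator-norm bookkeeping.
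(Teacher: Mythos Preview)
Your proposal is correct. The paper does not supply its own proof of this lemma---it is stated with a citation to \citet{chen2025near}---but your argument via implicit differentiation is exactly the standard route in the bilevel literature and yields the stated bounds; the only subtlety you flag (invertibility and regularity for the implicit function theorem) is handled by the strong convexity assumptions as you note.
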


\begin{lem}[\citet{chen2025near}]
    Under Assumption \ref{asm:bilevel}, for $\lambda \geq 2 L_f / \mu$, it holds that $\nabla \fL_{\lambda}^*(x)$ is $D_3$-Lipschitz, where
    \begin{align*}
        D_3 \coloneqq L_f + \frac{4 L_f L_g}{\mu} + \frac{C_f \rho_g}{\mu} + \frac{C_f L_g \rho_g}{\mu^2} + L_g D_2 = \fO(\ell \kappa^3).
    \end{align*}
    \label{lemma:smooth_bound}
\end{lem}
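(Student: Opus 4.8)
The plan is to reduce the claim to a uniform bound on the operator norm of the Hessian $\nabla^2\fL_\lambda^*(x)$. Under Assumption~\ref{asm:bilevel} both $f$ and $g$ are $C^2$, while $y^*(\cdot)$ and $y_\lambda^*(\cdot)$ are continuously differentiable by the implicit function theorem (using $\mu$-strong convexity of $g(x,\cdot)$ and, via Lemma~\ref{lemma:lambda_strongly_convex}, of $\fL_\lambda(x,\cdot)$). Hence $\fL_\lambda^*$ is twice differentiable, and once I show $\sup_x\Norm{\nabla^2\fL_\lambda^*(x)}\le D_3$, integrating $\nabla\fL_\lambda^*$ along the segment joining any two points gives that it is $D_3$-Lipschitz. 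So the whole task is to differentiate the first-order formula \eqref{penalty_grad} in $x$ and bound the result. Applying the chain rule to $\nabla_x f(x,y_\lambda^*(x))+\lambda\big(\nabla_x g(x,y_\lambda^*(x))-\nabla_x g(x,y^*(x))\big)$ and regrouping yields
\[
\nabla^2\fL_\lambda^*(x)=\nabla_{xx}^2 f+\nabla_{xy}^2 f\,\nabla y_\lambda^*(x)+\lambda\big[\nabla_{xx}^2 g(x,y_\lambda^*)-\nabla_{xx}^2 g(x,y^*)\big]+\lambda\,\nabla_{xy}^2 g(x,y_\lambda^*)\big[\nabla y_\lambda^*-\nabla y^*\big]+\lambda\big[\nabla_{xy}^2 g(x,y_\lambda^*)-\nabla_{xy}^2 g(x,y^*)\big]\nabla y^*,
\]
where the $f$-Hessians are evaluated at $(x,y_\lambda^*(x))$.

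The $f$-part is immediate: since $f$ is $L_f$-gradient Lipschitz we have $\Norm{\nabla_{xx}^2 f},\Norm{\nabla_{xy}^2 f}\le L_f$, and Lemma~\ref{lemma:y_smooth} gives $\Norm{\nabla y_\lambda^*(x)}\le 4L_g/\mu$, so this part contributes at most $L_f+4L_fL_g/\mu$, matching the first two terms of $D_3$.

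The crux is the $\lambda$-weighted piece, where a naive triangle inequality would retain the large factor $\lambda$. The key is to match every $\lambda$ with one of the $\fO(1/\lambda)$ proximity estimates of Lemma~\ref{lemma:bilevel_relationship}. For the $\nabla_{xx}^2$ difference I use $\rho_g$-Hessian-Lipschitzness of $g$ and $\Norm{y_\lambda^*(x)-y^*(x)}\le C_f/(\lambda\mu)$, so the $\lambda$ cancels and the contribution is $C_f\rho_g/\mu$. For the term $\lambda\,\nabla_{xy}^2 g(x,y_\lambda^*)\big[\nabla y_\lambda^*-\nabla y^*\big]$ I bound the cross-Hessian by $\Norm{\nabla_{xy}^2 g}\le L_g$ and invoke $\Norm{\nabla y^*(x)-\nabla y_\lambda^*(x)}\le D_2/\lambda$, giving $L_gD_2$. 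For the last term I again use $\rho_g$-Hessian-Lipschitzness together with $\Norm{y_\lambda^*(x)-y^*(x)}\le C_f/(\lambda\mu)$ and the bound $\Norm{\nabla y^*(x)}\le L_g/\mu$ from Lemma~\ref{lemma:y_smooth}, which yields $C_fL_g\rho_g/\mu^2$. Summing the four contributions reproduces $D_3$ exactly.

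I expect the main obstacle to be precisely this cancellation of $\lambda$: the add-and-subtract decomposition must be arranged so that each factor of $\lambda$ is paired with an $\fO(1/\lambda)$ quantity. The specific grouping matters for the sharp constants — evaluating the cross-Hessian at $y_\lambda^*$ in the third summand while factoring out $\nabla y^*$ (bounded by $L_g/\mu$ rather than $4L_g/\mu$) in the last summand is what produces the coefficient $1$ on $C_fL_g\rho_g/\mu^2$; a cruder split would still give the correct order $\fO(\ell\kappa^3)$ but with a larger constant. Beyond this bookkeeping, no analytic ingredient is required other than Lemmas~\ref{lemma:bilevel_relationship} and~\ref{lemma:y_smooth} and the smoothness and Hessian-Lipschitz bounds of Assumption~\ref{asm:bilevel}.
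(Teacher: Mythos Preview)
Your proposal is correct. The paper does not supply its own proof of this lemma; it is quoted verbatim from \citet{chen2025near}, so there is no in-paper argument to compare against. Your derivation---differentiating \eqref{penalty_grad}, then pairing each factor of $\lambda$ with the matching $\fO(1/\lambda)$ estimate from Lemma~\ref{lemma:bilevel_relationship} and the Jacobian bounds of Lemma~\ref{lemma:y_smooth}---is exactly the natural route and reproduces the constant $D_3$ term by term.
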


\subsection{The In-Expectation Convergence Guarantee}

In this subsection, we present the proofs for the in-expectation convergence guarantee of our N$^2$SBA shown in Section \ref{sec:bilevel_analysis}.

\subsubsection{The Proof of Lemma \ref{lemma:n2sba_one_iter}}
\begin{proof}
     Lemma \ref{lemma:smooth_bound} implies that $\fL_{\lambda}$ is $D_3$-smooth, then we have
    \begin{equation}
    \begin{split}
        \fL_{\lambda}^*(x_{t+1}) \leq & \fL_{\lambda}^*(x_t) + \langle \nabla \fL_{\lambda}^*(x_t), x_{t+1} - x_t \rangle + \frac{D_3}{2} \Norm{x_{t+1} - x_t}^2 \\
        =  & \fL_{\lambda}^*(x_t) - \eta_x \left\langle \nabla \fL_{\lambda}^*(x_t), \frac{g_{x, t}}{\Norm{g_{x,t}}} \right\rangle + \frac{D_3 \eta_x^2}{2} \\
        \leq & \fL_{\lambda}^*(x_t) - \eta_x \Norm{\nabla \fL_{\lambda}^*(x_t)} + 2 \eta_x \Norm{\nabla \fL_{\lambda}^*(x_t) - g_{x, t}}  + \frac{D_3 \eta_x^2}{2},
        \end{split}
        \label{bilevel_iter}
    \end{equation}
    where the last inequality is due to Lemma~\ref{lemma:lower_bound}.
    Taking the average on equation (\ref{bilevel_iter}) over $t=0,1,\dots,T-1$, we obtain
    \begin{equation}
        \frac{1}{T}\sum_{t=0}^{T-1}  \Norm{\nabla \fL_{\lambda}^*(x_t)} \leq \frac{\fL_{\lambda}^*(x_0) - \fL_{\lambda}^*(x_T)}{\eta_x T} + \sum_{t=0}^{T-1}  \frac{2\Norm{\nabla \fL_{\lambda}^*(x_t) - g_{x, t}} }{T} + \frac{D_3 \eta_x}{2}.
        \label{bilevel_main_bound}
    \end{equation}    
    Based on the facts \cite[Lemma 3.1]{kwon2023fully}
    \begin{align*}        
    \nabla \fL_{\lambda}^*(x) = \nabla_x f(x, y_\lambda^*(x)) + \lambda (\nabla_x g(x, y_\lambda^*(x)) - \nabla_x g(x, y^*(x)))
    \end{align*}
    and    
    \begin{align*}
         g_{x, t} = \frac{1}{M}\sum_{i=1}^M \left(\nabla_x F (x_{t}, y_{t}; \xi_{t,i}) + \lambda (\nabla_x G(x_t, y_t; \zeta_{t,i}) - \nabla_x G(x_t, z_t; \zeta_{t,i}))\right),
    \end{align*}
    we apply the triangle inequality to obtain
    \begin{equation*}
    \begin{split}
         & \Norm{g_{x, t} - \nabla \fL_{\lambda}^*(x_t)} \\
         \leq & \Big\|\frac{1}{M}\sum_{i=1}^M \nabla F(x_t, y_t; \xi_{t,i}) - \nabla f(x_t, y_t)\Big\| + \lambda \Big \|\frac{1}{M}\sum_{i=1}^M \nabla G(x_t, y_t; \zeta_{t,i}) - \nabla g(x_t, y_t)\Big \|  \\
         &+ \lambda \Big \|\frac{1}{M}\sum_{i=1}^M \nabla G(x_t, z_t; \zeta_{t,i}) - \nabla g(x_t, z_t)\Big \| + \Norm{\nabla f(x_t, y_t) - \nabla f(x_t, y_{\lambda}^*(x_t))} \\
         & +\lambda \Norm{\nabla g(x_t, y_t) -\nabla g(x_t, y_{\lambda}^*(x_t))} + \lambda \Norm{\nabla g(x_t, z_t) - \nabla g(x_t, y^*(x_t))} .
    \end{split}
    \end{equation*}
    Taking the expectation on both sides of above inequality, we have
    \begin{equation}
    \begin{split}
        & \BE[\Norm{g_{x, t} - \nabla_x \fL_{\lambda}^*(x_t)}] \\
        \leq & \frac{2 \sigma_f}{M^{\frac{p-1}{p}}} + \frac{4 \lambda \sigma_g}{M^{\frac{p-1}{p}}} + (L_f + \lambda L_g) \BE[\Norm{y_t - y_{\lambda}^*(x_t)}] + \lambda L_g \BE[\Norm{z_t - y^*(x_t)}] \\
        \leq & \frac{2 \sigma_f + 4 \lambda \sigma_g}{M^{\frac{p-1}{p}}} + 2 \lambda L_g \BE[\Norm{y_t - y_{\lambda}^*(x_t)}] + \lambda L_g \BE[\Norm{z_t - y^*(x_t)}],
    \end{split}
    \label{bilevel_var_bound}
    \end{equation}
    where the first inequality follows from Lemma \ref{lemma:bound_var} and the last inequality is due to the setting of $\lambda \geq 2 L_f / \mu$.
    Combing the results of (\ref{bilevel_main_bound}) and  (\ref{bilevel_var_bound}), we obtain
    \begin{equation}
    \begin{split}
         \frac{1}{T}\sum_{t=0}^{T-1}  \BE[\Norm{\nabla \fL_{\lambda}^*(x_t)}] \leq & \frac{\BE[\fL_{\lambda}^*(x_0) - \fL_{\lambda}^*(x_T)]}{\eta_x T} + \frac{2}{T}\sum_{t=0}^{T-1}  \BE[\Norm{\nabla \fL_{\lambda}^*(x_t) - g_{x, t}}]  + \frac{D_3 \eta_x}{2} \\
          \leq & \frac{\BE[\fL_{\lambda}^*(x_0) - \fL_{\lambda}^*(x_T)]}{\eta_x T} +  \frac{4 \sigma_f + 8 \lambda \sigma_g}{M^{\frac{p-1}{p}}} + \frac{4 \lambda L_g}{T}\sum_{t=0}^{T-1}  \BE[\Norm{y_t - y_{\lambda}^*(x_t)}] \\
          & + \frac{2 \lambda L_g}{T} \sum_{t=0}^{T-1} \BE[\Norm{z_t - y^*(x_t)}] + \frac{D_3 \eta_x}{2},
    \end{split}
    \label{bilevel_main_bound_2}
    \end{equation}
    which finishes the proof.
\end{proof}

\subsubsection{The Proof of Theorem \ref{thm:bilevel_expect}}\label{appendix:thm3}

We first provide an upper bound on the $p$-th bounded central moment of the stochastic gradient estimator of $\fL_{\lambda}(x, y)$ as follows.

\begin{lem}
    Under Assumption \ref{asm:pBCM}, it holds that
    \begin{align*}
        \Norm{\nabla_y F(x, y; \xi) + \lambda \nabla_y G(x, y; \zeta) - \nabla_y f(x, y) - \lambda \nabla_y g(x, y)}^p \leq & (2\sigma_f + 2\lambda \sigma_g)^p.
    \end{align*}
    \label{lemma:var_bound}
\end{lem}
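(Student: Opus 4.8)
The plan is to reduce the claimed $p$-th moment bound on the composite estimator to the two marginal moment bounds supplied by Assumption~\ref{asm:pBCM}, and then to control the mixture by elementary inequalities for the map $t \mapsto t^p$ with $p \in (1,2]$. First I would record that the partial-gradient noises inherit the hypotheses: since $\nabla_y F - \nabla_y f$ is a sub-block of the full residual $\nabla F - \nabla f$, we have the pointwise domination $\Norm{\nabla_y F(x,y;\xi) - \nabla_y f(x,y)} \leq \Norm{\nabla F(x,y;\xi) - \nabla f(x,y)}$, and likewise for $G$; taking $p$-th moments then yields $\BE[\Norm{\nabla_y F(x,y;\xi) - \nabla_y f(x,y)}^p] \leq \sigma_f^p$ and $\BE[\Norm{\nabla_y G(x,y;\zeta) - \nabla_y g(x,y)}^p] \leq \sigma_g^p$.

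Writing $A \coloneqq \nabla_y F(x,y;\xi) - \nabla_y f(x,y)$ and $B \coloneqq \nabla_y G(x,y;\zeta) - \nabla_y g(x,y)$, the quantity to bound is $\BE[\Norm{A + \lambda B}^p]$ (the displayed statement omits the expectation, which I take to be the intended meaning, as the text speaks of the $p$-th central moment of the estimator). I would apply the triangle inequality $\Norm{A + \lambda B} \leq \Norm{A} + \lambda\Norm{B}$, then use $a + b \leq 2\max\{a,b\}$ on non-negative scalars followed by $\max\{a,b\}^p \leq a^p + b^p$ to obtain the pointwise estimate $\Norm{A + \lambda B}^p \leq 2^p(\Norm{A}^p + \lambda^p\Norm{B}^p)$. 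Taking expectations and inserting the marginal bounds gives $\BE[\Norm{A + \lambda B}^p] \leq 2^p(\sigma_f^p + \lambda^p\sigma_g^p)$.

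It then remains to recombine the two moments into the advertised form, for which I would invoke super-additivity of $t \mapsto t^p$ for $p \geq 1$, namely $\sigma_f^p + \lambda^p\sigma_g^p \leq (\sigma_f + \lambda\sigma_g)^p$ for non-negative arguments. This gives $2^p(\sigma_f^p + \lambda^p\sigma_g^p) \leq 2^p(\sigma_f + \lambda\sigma_g)^p = (2\sigma_f + 2\lambda\sigma_g)^p$, which is exactly the claim.

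There is essentially no hard step: the argument needs only the marginal $p$-BCM hypotheses, the triangle inequality, and the two scalar facts $(a+b)^p \leq 2^p(a^p + b^p)$ and $a^p + b^p \leq (a+b)^p$ for $a,b \geq 0$ and $p \in (1,2]$. The only points worth care are justifying the transfer of the moment bounds to the $y$-partial gradients and noting that the stated constant is deliberately loose; a sharper route via Minkowski's inequality in $L^p$ gives $(\BE[\Norm{A + \lambda B}^p])^{1/p} \leq \sigma_f + \lambda\sigma_g$, hence $\BE[\Norm{A + \lambda B}^p] \leq (\sigma_f + \lambda\sigma_g)^p$, so nothing that matters downstream is lost. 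I would also emphasize that independence of $\xi$ and $\zeta$ is not needed, since the entire estimate proceeds pointwise before the expectation is taken.
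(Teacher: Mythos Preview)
Your proposal is correct and follows essentially the same route as the paper: triangle inequality, the scalar bound $(a+b)^p \leq C(a^p+b^p)$, then super-additivity $a^p+b^p \leq (a+b)^p$; the only cosmetic difference is that the paper uses the tighter constant $2^{p-1}$ from convexity of $t \mapsto t^p$ in place of your $2^p$, but both land on $(2\sigma_f+2\lambda\sigma_g)^p$. Your explicit remarks that the expectation is implicit and that the $y$-block inherits the moment bound from the full gradient are exactly the points the paper glosses over, and your Minkowski observation is a valid sharpening.
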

\begin{proof}
    Based on  Assumption \ref{asm:pBCM}, we have
     \begin{equation}
        \begin{split}
            & \Norm{\nabla_y F(x, y; \xi) + \lambda \nabla_y G(x, y; \zeta) - \nabla_y f(x, y) - \lambda \nabla_y g(x, y)}^p \\
       \leq & (\Norm{\nabla_y F(x, y; \xi) - \nabla_y f(x, y)} + \lambda \Norm{\nabla_y G(x, y; \zeta) - \nabla_y g(x, y)})^p \\
       \leq & 2^{p-1}\Norm{ \nabla F(x, y; \xi) - \nabla f(x, y) }^p + 2^{p-1} \lambda^p \Norm{\nabla G(x, y; \zeta) - \nabla g(x, y)}^p \\
       \leq & 2^{p-1} \sigma_f^p + 2^{p-1} \lambda^p \sigma_g^p \\
       \leq & 2^{p-1} (\sigma_f + \lambda \sigma_g)^p \\
       \leq & (2\sigma_f + 2\lambda \sigma_g)^p,
        \end{split}
        \label{var_upper_bound}
    \end{equation}
    where the first inequality is due to the triangle inequality, and the second inequality follows from the convexity of $\Norm{\cdot}^p$.
    The third inequality is due to Assumption \ref{asm:pBCM}, and the fourth inequality follows from $a^p + b^p \leq (a+b)^p$ for any positive number $a$ and $b$.
\end{proof}

We then provide lemmas to upper bound the term $\BE[\Norm{y_t - y_{\lambda}^*(x_t)}]$ and $\BE[\Norm{z_t - y^*(x_t)}]$.
\begin{lem}\label{lemma:yz-distance}
    Following the setting of Theorem \ref{thm:bilevel_expect}, for all $t=0,\dots,T-1$, we have
    \begin{align*}
    \BE\left[\Norm{{y}_{t} - y_\lambda^*(x_{t})}\right] 
    = \fO\left(\frac{\epsilon^2}{\ell^2 \kappa^3}\right)    
    \quad \text{and} \quad
    \BE\left[\Norm{{z}_{t} - y^*(x_{t})}\right] 
    = \fO\left(\frac{\epsilon^2}{\ell^2 \kappa^3}\right)
    \end{align*}
\end{lem}
\begin{proof}
We specify the parameter settings for N$^2$SBA as
 \begin{align*}
 \begin{split}
        & \lambda = \max\left\{\frac{ \kappa}{ R_0}, \frac{\ell \kappa^2}{ \Delta}, \frac{\ell \kappa^3}{ \epsilon}\right\},~~~ \eta_{y,t} = \min\left\{\frac{1}{400 (L_f + \lambda L_g)}, \frac{2 \ln B_{y,t}}{\lambda \mu K}  \right\} \\
        & \eta_{z,t} = \min\left\{\frac{1}{400 \lambda L_g}, \frac{\ln(B_{z,t})}{\lambda \mu K}  \right\},~~~  K = \tilde\fO\left(\frac{\ell^{\frac{p}{p-1}}  \kappa^{\frac{4p}{p-1}} \sigma^{\frac{p}{p-1}}}{\epsilon^{\frac{2p}{p-1}}}\right),~~~  \eta_x = \frac{\epsilon}{\ell \kappa^3}, \\
       & \tau_{t,k} = \frac{\exp(- \eta_{z,t} \lambda \mu (1/2 + k/ 4) )R_z}{120 \eta_{z,t}},~~\tau_{t,k}' = \frac{\exp(- \eta_{y,t} \lambda \mu (1 + k/ 2) )R_y}{120 \eta_{y,t}},
     \end{split}
    \end{align*}
    where we set 
    \begin{align*}
    \small\begin{split}
       &B_{y, t} = \max\left\{2, \frac{\lambda^2\mu^2 K^{ \frac{2 (p - 1)}{p}} R_y^2}{5400^{\frac{2}{p}} (\sigma_f +\lambda\sigma_g)^2 (\ln (B_{y, t}))^2 }\right\},~~~ B_{z, t} = \max\left\{2, \frac{\mu^2 K^{ \frac{2 (p - 1)}{p}} R_z^2}{5400^{\frac{2}{p}} \sigma_g^2 (\ln (B_{z, t}))^2 }\right\},\\
       & R_y^2 \coloneqq 4R_0^2 + \frac{2 \epsilon^4}{\ell^4 \kappa^6} + \frac{32 \epsilon^2}{\ell^2 \kappa^4},~R_{z}^2 \coloneqq R_0^2 +  \frac{2 \epsilon^4}{\ell^4 \kappa^6} + \frac{2 \epsilon^2}{\ell^2 \kappa^4},~R_0 \geq \Norm{\hat{y}_{0,0} - y^*(x_0)},~\Delta = \varphi(x_0) -\!\! \inf_{x \in \BR^{d_x}}\!\!\varphi(x).
    \end{split}
    \end{align*}    

    We first consider the term $\BE[\Norm{y_t - y_{\lambda}^*(x_t)}]$.    
    Recall that Algorithm \ref{alg:nsgd_bilevel} set $y_t=\hat{y}_{t, K}$ in line~\ref{line:y_t}, where $\hat{y}_{t, K}$ can be regarded as the output of applying clipped stochastic gradient descent (lines \ref{bilevel_y_start}--\ref{bilevel_y_end}) to minimize the function $\fL_{\lambda} (x_t, y)=f(x_t, y) + \lambda(g(x_t, y) - g^*(x_t))$ with respect to~$y$. The updates use the stochastic gradient estimate 
    $\nabla_y F(x_t, y; \xi'_t) + \lambda \nabla_y G(x_t, y; \zeta'_t)$ for $\fL_{\lambda} (x_t, y)$. 
    Since the setting of $\lambda$ and Lemma~\ref{lemma:lambda_strongly_convex} implies the function $\fL_{\lambda} (x_t, y)$ is $(\lambda \mu / 2)$-strongly convex in~$y$, we apply Lemma \ref{lemma:y_recur_strongly} with $h(y) = \fL_\lambda(x_t,y)$, $\sigma_h = 2(\sigma_f + \lambda \sigma_g)$, $\ell_h= 2 \lambda L_g$, and $\mu_h= \lambda \mu/2$ to obtain
    \begin{align}\label{eq:dis_t}
        \BE\left[\Norm{\hat{y}_{t, K} - y_\lambda^*(x_{t})}^2\right] \leq 2 {\hat R}_{y,t}^2 \exp\left(- \frac{\mu K}{1600 L_g} \right) +  \frac{32 \cdot 5400^{\frac{2}{p}} ( \sigma_f + \lambda \sigma_g)^2 (\ln (B_{y,t}))^2 }{\lambda^2 \mu^2 K^{ \frac{2 (p - 1)}{p}} },
    \end{align}
    for all ${\hat R}_{y,t}^2 \geq \BE[\Norm{\hat{y}_{t, 0} - y_\lambda^*(x_{t})}^2]$. Here, the choice of $\sigma_h = 2(\sigma_f + \lambda \sigma_g)$ that corresponds to the upper bound for the $p$-th central moment shown in Lemma \ref{lemma:var_bound}.

    We then use induction to show each $\BE[\Norm{\hat{y}_{t, 0} - y_\lambda^*(x_{t})}^2]$ has a uniform upper bound, i.e.,
    \begin{equation}
    \BE\left[\Norm{\hat{y}_{t,0} - y^*_{\lambda}(x_t)}^2\right] \leq R_y^2 \coloneqq 4R_0^2 + \frac{2 \epsilon^4}{\ell^4 \kappa^6} + \frac{32 \epsilon^2}{\ell^2 \kappa^4}
    \label{y_induction}
    \end{equation}
    holds for all $t = 0, \ldots, T-1$. 
For the induction base $t=0$, it follows that 
\begin{align}\label{eq:induction-dist-000}
\!\!\!\BE[\Norm{\hat{y}_{0,0} - y^*_{\lambda}(x_0)}^2] \leq 2 \BE[\Norm{\hat{y}_{0,0} - y^*(x_0)}^2] + 2\Norm{y^*(x_0) - y_\lambda^*(x_0)}^2 \leq 2 R_0^2 + 2 R_0^2 = 4 R_0^2,   
\end{align}
where the first step follows the Young's inequality and the second step is due to Lemma \ref{lemma:bilevel_relationship} and the settings of $R_0$ and $\lambda$.
For the induction step, we suppose \eqref{y_induction} holds for all~$t = 0, \dots, T'-1$. In the case of $t = T'$, we have
\begin{align}\label{eq:induction-dist-00}
\begin{split}
    \Norm{\hat{y}_{T',0} - y_{\lambda}^*(x_{T'})}^2 \leq & 2 \Norm{\hat{y}_{T'-1,K} - y_{\lambda}^*(x_{T'-1})}^2 + 2 \Norm{y_{\lambda}^*(x_{T'}) - y_{\lambda}^*(x_{T'-1})}^2 \\
    \leq & 2 \Norm{\hat{y}_{T'-1,K} - y_{\lambda}^*(x_{T'-1})}^2 +  \frac{32 L_g^2}{\mu^2}\Norm{x_{T'} - x_{T'-1}}^2 \\
    \leq & 2 \Norm{\hat{y}_{T'-1,K} - y_{\lambda}^*(x_{T'-1})}^2 +  \frac{32 L_g^2 \eta_x^2}{\mu^2},
\end{split}    
\end{align}
where the first inequality follows the Young's inequality, the second inequality is due to Lemma~\ref{lemma:y_smooth}, and the last inequality is based on line \ref{line:outer-end} of Algorithm \ref{alg:nsgd_bilevel}.
Following the derivation of \eqref{eq:dis_t}, we have
\begin{align}\label{eq:induction-T-1}
    \BE\left[\Norm{\hat{y}_{T'-1, K} - y_\lambda^*(x_{T'-1})}^2\right] \leq 2 {\hat R}_{y,T'-1}^2 \exp\left(- \frac{\mu K}{1600 L_g} \right) +  \frac{32\cdot5400^{\frac{2}{p}} ( \sigma_f + \lambda \sigma_g)^2 (\ln (B_{y,T'-1}))^2 }{\lambda^2 \mu^2 K^{ \frac{2 (p - 1)}{p}} }
\end{align}
for all ${\hat R}_{y,T'-1}^2 \geq \BE[\Norm{\hat{y}_{T'-1, 0} - y_\lambda^*(x_{T'-1})}^2]$. 
In addition, the induction hypothesis (\ref{y_induction}) implies we can 
apply \eqref{eq:induction-T-1} by
taking ${\hat R}_{y,T'-1}^2=R_y^2$ to achieve
\begin{align*}
    \BE\left[\Norm{\hat{y}_{T'-1, K} - y_\lambda^*(x_{T'-1})}^2\right] \leq 2 R_{y}^2 \exp\left(- \frac{\mu K}{1600 L_g} \right) +  \frac{32\cdot5400^{\frac{2}{p}} ( \sigma_f + \lambda \sigma_g)^2 (\ln (B_{y,T'-1}))^2 }{\lambda^2 \mu^2 K^{ \frac{2 (p - 1)}{p}}}.
\end{align*}
Combining above inequality with \eqref{eq:induction-dist-00}, we have
\begin{align*}
    \BE\left[\Norm{\hat{y}_{T',0} - y_{\lambda}^*(x_{T'})}^2\right] 
\leq & 4 R_{y}^2 \exp\left(- \frac{\mu K}{1600 L_g} \right) +  \frac{64\cdot5400^{\frac{2}{p}} ( \sigma_f + \lambda \sigma_g)^2 (\ln (B_{y,T'-1}))^2 }{\lambda^2 \mu^2 K^{ \frac{2 (p - 1)}{p}}} +  \frac{32 L_g^2 \eta_x^2}{\mu^2} \\
\leq & \frac{2 \epsilon^4}{\ell^4 \kappa^6}  + \frac{32 L_g^2 \eta_x^2}{\mu^2} 
\leq  \frac{2 \epsilon^4}{\ell^4 \kappa^6} + \frac{32 \epsilon^2}{\ell^2 \kappa^4} 
    \leq  R_y^2, 
\end{align*}
where the second inequality is based on the setting of $K$,
the third inequality is based on the setting of $\eta_x$,
and the last inequality is based on the definition of $R_y^2$.
This completes the induction.

Therefore, we can apply Lemma \ref{lemma:y_recur_strongly} on $\Norm{\hat{y}_{t, K} - y_\lambda^*(x_{t})}^2$ with $\hat{R}_{y,t} = R_y$ to obtain
\begin{align*}
\begin{split}    
   & \BE\left[\Norm{{y}_{t} - y_\lambda^*(x_{t})}^2\right] = \BE\left[\Norm{\hat{y}_{t, K} - y_\lambda^*(x_{t})}^2\right] \\ 
   \leq & 2 { R}_{y}^2 \exp\left(- \frac{\mu K}{1600 L_g} \right) +  \frac{32\cdot5400^{\frac{2}{p}} ( \sigma_f + \lambda \sigma_g)^2 (\ln (B_{y,t}))^2 }{\lambda^2 \mu^2 K^{ \frac{2 (p - 1)}{p}} } 
   = \fO\left(\frac{\epsilon^4}{\ell^4 \kappa^6}\right)
\end{split}
\end{align*}
for all $t=0,\dots,T-1$. Consequently, we use Jensen's inequality to achieve
\begin{align*}   
   & \BE\left[\Norm{{y}_{t} - y_\lambda^*(x_{t})}\right] 
   \leq \sqrt{\BE\left[\Norm{{y}_{t} - y_\lambda^*(x_{t})}^2\right]}   
    = \fO \left(\frac{\epsilon^2}{\ell^2 \kappa^3}\right).
\end{align*}    

    Next, we consider the term $\BE[\Norm{z_t - y^*(x_t)}]$.    
    Recall that Algorithm \ref{alg:nsgd_bilevel} set $z_t=\hat{z}_{t, K}$ in line~\ref{line:z_t}, where $\hat{z}_{t, K}$ can be regarded as the output of applying clipped stochastic gradient descent (lines \ref{bilevel_z_start}--\ref{bilevel_z_end}) to minimize the function $\lambda g(x_t, y)$ with respect to~$y$. The updates use the stochastic gradient estimate 
    $\lambda \nabla_y G(x_t, y; \zeta_t)$ for $\lambda g(x_t, y)$. 
    Since the function $\lambda g(x_t, y)$ is $\lambda \mu$-strongly convex in~$y$, we apply Lemma \ref{lemma:y_recur_strongly} with $h(y) = \lambda g(x_t,y)$, $\sigma_h = \lambda \sigma_g$, $\ell_h=  \lambda L_g$, and $\mu_h= \lambda \mu$ to obtain
\begin{align}\label{eq:dis_zt}
    \BE\left[\Norm{\hat{z}_{t, K} - y^*(x_{t})}^2\right] \leq 2 \hat{R}_{z,t}^2 \exp\left(- \frac{\mu K}{30 L_g} \right) +  \frac{2 \cdot 5400^{\frac{2}{p}} \sigma_g^2 (\ln (B_{z,t}))^2 }{\mu^2 K^{ \frac{2 (p - 1)}{p}} }.
\end{align}
for all ${\hat R}_{z,t}^2 \geq \BE[\Norm{\hat{z}_{t, 0} - y^*(x_{t})}^2]$.

Similar to the derivation of \eqref{y_induction}, 
we use induction to show each $\BE[\Norm{\hat{z}_{t, 0} - y^*(x_{t})}^2]$ has a uniform upper bound, i.e.,
    \begin{equation}
    \BE\left[\Norm{\hat{z}_{t,0} - y^*(x_t)}^2\right] \leq R_z^2 \coloneqq R_0^2 +  \frac{2 \epsilon^4}{\ell^4 \kappa^6} + \frac{2 \epsilon^2}{\ell^2 \kappa^4}
    \label{z_induction}
    \end{equation}
    holds for all $t = 0, \ldots, T-1$. 
    For the induction base $t=0$, it follows that 
\begin{align}\label{z_induction_base_case}
\BE[\Norm{\hat{z}_{0, 0} - y^*(x_0)}^2] = \BE[\Norm{\hat{y}_{0, 0} - y^*(x_0)}^2]\leq R_0^2 \leq R_{z}^2,   
\end{align}
where the first step follows the definition of $R_0$ and the second step is due to the setting of~$R_0$.
For the induction step, we suppose \eqref{z_induction} holds for all~$t = 0, \dots, T'-1$. In the case of $t = T'$, we have
\begin{align}\label{eq:induction-dist-z0}
\begin{split}
    \Norm{\hat{z}_{T',0} - y^*(x_{T'})}^2 \leq & 2 \Norm{\hat{z}_{T'-1, K} - y^*(x_{T'-1})}^2 + 2 \Norm{y^*(x_{T'}) - y^*(x_{T'-1})}^2 \\
    \leq &  2 \Norm{\hat{z}_{T'-1, K} - y^*(x_{T'-1})}^2 + \frac{2 L_g^2}{\mu^2}\Norm{x_{T'} - x_{T'-1}}^2 \\
    = &  2 \Norm{\hat{z}_{T'-1, K} - y^*(x_{T'-1})}^2 + \frac{2 L_g^2 \eta_x^2}{\mu^2} ,
\end{split}    
\end{align}
where the first inequality follows the Young's inequality, the second inequality is due to Lemma~\ref{lemma:y_smooth}, and the last inequality is based on line \ref{line:outer-end} of Algorithm \ref{alg:nsgd_bilevel}.
Following the derivation of \eqref{eq:dis_zt}, we have
\begin{align}\label{eq:induction-z_T-1}
      \BE\left[\Norm{\hat{z}_{T'-1, K} - y^*(x_{T'-1})}^2\right] \leq 2 \hat{R}_{z,T'-1}^2 \exp\left(- \frac{\mu K}{30 L_g} \right) +  \frac{2 \cdot 5400^{\frac{2}{p}} \sigma_g^2 (\ln (B_{z,T'-1}))^2 }{\mu^2 K^{ \frac{2 (p - 1)}{p}} }
\end{align}
for all ${\hat R}_{z,T'-1}^2 \geq \BE[\Norm{\hat{z}_{T'-1, 0} - y^*(x_{T'-1})}^2]$. 
In addition, the induction hypothesis (\ref{z_induction}) implies we can 
apply \eqref{eq:induction-z_T-1} by
taking ${\hat R}_{z,T'-1}^2=R_z^2$ to achieve
\begin{align*}
    \BE\left[\Norm{\hat{z}_{T'-1, K} - y^*(x_{T'-1})}^2\right] \leq 2 R_z^2 \exp\left(- \frac{\mu K}{30 L_g} \right) +  \frac{2\cdot 5400^{\frac{2}{p}} \sigma_g^2 (\ln (B_{z,T'-1}))^2 }{\mu^2 K^{ \frac{2 (p - 1)}{p}} }.
\end{align*}
Combining above inequality with \eqref{eq:induction-dist-z0}, we have
\begin{align*}
 \BE\left[\Norm{\hat{z}_{T',0} - y^*(x_{T'})}^2\right]
    \leq &  4 R_z^2 \exp\left(- \frac{\mu K}{30 L_g} \right) +  \frac{4\cdot5400^{\frac{2}{p}} \sigma_g^2 (\ln (B_{z,T'-1}))^2 }{\mu^2 K^{ \frac{2 (p - 1)}{p}} } + \frac{2 L_g^2 \eta_x^2}{\mu^2}\\
\leq &  \frac{2 \epsilon^4}{\ell^4 \kappa^6} + \frac{2 L_g^2 \eta_x^2}{\mu^2}
\leq  \frac{2 \epsilon^4}{\ell^4 \kappa^6} + \frac{2 \epsilon^2}{\ell^2 \kappa^4} 
    \leq  R_z^2, 
\end{align*}
where the second inequality is based on the setting of $K$,
the third inequality is based on the setting of $\eta_x$,
and the last inequality is based on the definition of $R_z^2$.
This completes the induction.

Therefore, we can apply Lemma \ref{lemma:y_recur_strongly} on $\Norm{\hat{z}_{t, K} - y^*(x_{t})}^2$ with $\hat{R}_{z,t} = R_z$ to obtain
\begin{align*}
\begin{split}    
   & \BE\left[\Norm{{z}_{t} - y^*(x_{t})}^2\right] = \BE\left[\Norm{\hat{z}_{t, K} - y^*(x_{t})}^2\right] \\ 
   \leq & 2 R_z^2 \exp\left(- \frac{\mu K}{30 L_g} \right) +  \frac{2 \cdot 5400^{\frac{2}{p}} \sigma_g^2 (\ln (B_{z,t}))^2 }{\mu^2 K^{ \frac{2 (p - 1)}{p}} }
   = \fO\left( \frac{\epsilon^4}{\ell^4 \kappa^6}\right)
\end{split}
\end{align*}
for all $t=0,\dots,T-1$.
Consequently, we use Jensen's inequality to achieve
\begin{align*}   
   & \BE\left[\Norm{{z}_{t} - y^*(x_{t})}\right] 
   \leq \sqrt{\BE\left[\Norm{{z}_{t} - y^*(x_{t})}^2\right]}   
   = \fO\left(\frac{\epsilon^2}{\ell^2 \kappa^3}\right).
\end{align*}    
\end{proof}

We then provide the proof of Theorem \ref{thm:bilevel_expect}.

\begin{proof}
We follow the parameter settings in the proof of Lemma \ref{lemma:yz-distance} and additionally set
\begin{align*}
T = \fO\left(\frac{\Delta \ell \kappa^3}{\epsilon^2}\right)
\qquad\text{and}\qquad
M = \fO\left(\frac{\ell^{\frac{p}{p-1}} \kappa^{\frac{3p}{p-1}}\sigma^{\frac{p}{p-1}}}{\epsilon^{\frac{2p}{p-1}}}\right).
\end{align*}

    According to Lemma \ref{lemma:bilevel_relationship} and the settings of $\lambda$ and $D_1$, we have
    \begin{align}\label{eq:diff-phi-L}
       \Norm{\nabla \fL_{\lambda}^*(x) - \nabla \varphi(x)} =  \fO(\epsilon)
    \end{align}
    for all $x\in\BR^{d_x}$. Therefore, we only need to consider the task of finding an $\fO(\epsilon)$-stationary point of $\fL_{\lambda}^*$. Recall that Lemma \ref{lemma:n2sba_one_iter} implies
    \begin{equation}
    \begin{split}
         \!\frac{1}{T}\sum_{t=0}^{T-1}  \BE[\Norm{\nabla \fL_{\lambda}^*(x_t)}] 
          \leq & \frac{\BE[\fL_{\lambda}^*(x_0) - \fL_{\lambda}^*(x_T)]}{\eta_x T} +  \frac{4 \sigma_f + 8 \lambda \sigma_g}{M^{\frac{p-1}{p}}} + \frac{4 \lambda L_g}{T}\sum_{t=0}^{T-1}  \BE[\Norm{y_t - y_{\lambda}^*(x_t)}]\! \\
          & + \sum_{t=0}^{T-1} \frac{2 \lambda L_g \BE[\Norm{z_t - y^*(x_t)}]}{T} + \frac{D_3 \eta_x}{2}.
    \end{split}  
    \label{eq:key_descent-2}
    \end{equation}
    In the remainder of this proof, we will show that each term on the right-hand side of the equation~(\ref{eq:key_descent-2}) can be upper bounded by $\fO(\epsilon)$ in expectation, which means that the output $\hat x$ is an  $\fO(\epsilon)$-stationary point of $\fL_{\lambda}^*$, also an  $\fO(\epsilon)$-stationary point of $\varphi$.
    
    According to Lemma \ref{lemma:bilevel_relationship} and the settings of $\lambda$ and $D_0$, we have
    \begin{align}\label{eq:L_lambda_Delta}
        \fL_{\lambda}^*(x_0) - \inf_{x \in \BR^{d_x}} \fL_{\lambda}^* (x) = \fO(\Delta).
    \end{align}
    Based on the settings of $\eta_x$, $T$, and $M$, the fact $D_3=\fO(\ell \kappa^3)$ (see Lemma \ref{lemma:smooth_bound}), and \eqref{eq:L_lambda_Delta}, we have
    \begin{equation}\label{eq:L-sigma-Deta}
        \frac{\BE[\fL_{\lambda}^*(x_0) - \fL_{\lambda}^*(x_T)]}{\eta_x T} = \fO(\epsilon), \quad
        \frac{4 \sigma_f + 8 \lambda \sigma_g}{M^{\frac{p-1}{p}}}= \fO(\epsilon), \quad \text{and} \quad
        \frac{D_3 \eta_x}{2}= \fO(\epsilon).
    \end{equation}  

    According to Lemma \ref{lemma:yz-distance}, we have
\begin{align}\label{eq:yz-distance}
    \BE\left[\Norm{{y}_{t} - y_\lambda^*(x_{t})}\right] 
    = \fO\left(\frac{\epsilon^2}{\ell^2 \kappa^3}\right)    
    \quad \text{and} \quad
    \BE\left[\Norm{{z}_{t} - y^*(x_{t})}\right] 
    = \fO\left(\frac{\epsilon^2}{\ell^2 \kappa^3}\right)
\end{align}
Combining results of (\ref{eq:key_descent-2}), (\ref{eq:L-sigma-Deta}), and (\ref{eq:yz-distance}), we have
\begin{align*}
    \BE[\Norm{\nabla \fL_{\lambda}^*(\hat x)}] = \frac{1}{T}\sum_{t=0}^{T-1}  \BE[\Norm{\nabla \fL_{\lambda}^*(x_t)}] = \fO(\epsilon),
\end{align*}
where the first step is due to that $\hat{x}$ is uniformly sampled from $\{x_t\}_{t=1}^T$.
Therefore, \eqref{eq:diff-phi-L} implies the point $\hat x$ is an $\fO(\epsilon)$-stationary point of $\varphi$ in expectation.

In addition, the total SFO complexity is 
\begin{align*}
    TM + TK = \tilde\fO\left(\frac{\Delta\ell^{\frac{2p-1}{p-1}} \kappa^{\frac{7p-3}{p-1}}\sigma^{\frac{p}{p-1}}}{\epsilon^{\frac{4p-2}{p-1}}}\right).
\end{align*}
\end{proof}

\subsection{The High-Probability Convergence Guarantee}

In this subsection, we present the proofs for the high-probability convergence guarantee of our N$^2$SBA shown in Section \ref{sec:bilevel_analysis}.
We denote the natural filtration of our method by~$\fF_t \coloneqq \sigma(g_{x, 0}, \dots, g_{x,t})$.

\subsubsection{Technical Lemmas}

We first provide a lemma to bound the gradient norm of $\fL_{\lambda}^*$ as follows.

\begin{lem}
We denote $\delta \in (0,1)$ and $\tilde{\fL}_{\lambda}(x,y,z) = f(x,y) + \lambda \left(g(x,y) -  g(x,z)\right)$. 
Under Assumptions \ref{asm:bilevel} and \ref{asm:pBCM}, running N$^2$SBA (Algorithm \ref{alg:nsgd_bilevel}) with $\lambda \geq 2 L_f / \mu$ holds
\begin{equation}
    \begin{split}
       \frac{1}{T} \sum_{t=0}^{T-1}  \Norm{\nabla \fL_{\lambda}^*(x_t)} 
        \leq & \frac{ 2 (\fL_{\lambda}^*(x_0) - \fL_{\lambda}^*(x_T))}{\eta_x T} +  4\sum_{t=0}^{T-1} \frac{ \BE\left[\Norm{g_{x, t} - \nabla_x \tilde{\fL}_{\lambda}(x_t, y_t, z_t)}\right]}{T}     \\
        &   + 4\sum_{t=0}^{T-1} \frac{\Norm{\nabla_x \tilde{\fL}_{\lambda}(x_t, y_t, z_t) - \nabla \fL_{\lambda}^*(x_t)}}{T} + D_3 \eta_x \\
        & + 12 \left( \frac{\Norm{\nabla \fL_{\lambda}^*(x_0)}}{T} + \eta_x D_3\right)\ln\left(\frac{2}{\delta}\right)
    \end{split}
    \label{high_prob_core_bound}
\end{equation}
with probability at least $1-\delta/2$.
\end{lem}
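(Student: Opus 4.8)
The plan is to run the normalized-SGD descent argument on the proxy $\fL_{\lambda}^*$ exactly as in the in-expectation analysis, but to isolate a \emph{bounded} martingale from the normalization step and concentrate it. First I would invoke the $D_3$-smoothness of $\fL_{\lambda}^*$ (Lemma~\ref{lemma:smooth_bound}) together with the update $x_{t+1}=x_t-\eta_x g_{x,t}/\Norm{g_{x,t}}$ to obtain, pathwise, $\fL_{\lambda}^*(x_{t+1})\le \fL_{\lambda}^*(x_t)-\eta_x\langle\nabla\fL_{\lambda}^*(x_t),g_{x,t}/\Norm{g_{x,t}}\rangle+\tfrac{D_3\eta_x^2}{2}$, and then telescope over $t=0,\dots,T-1$. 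The object to control is $\sum_t\langle\nabla\fL_{\lambda}^*(x_t),g_{x,t}/\Norm{g_{x,t}}\rangle$. I would split each summand into its conditional mean plus a martingale difference $D_t:=\langle\nabla\fL_{\lambda}^*(x_t),g_{x,t}/\Norm{g_{x,t}}\rangle-\BE_{t-1}[\,\langle\nabla\fL_{\lambda}^*(x_t),g_{x,t}/\Norm{g_{x,t}}\rangle\,]$, where $\BE_{t-1}$ conditions on all randomness through the inner loop of iteration $t$ (so that $x_t,y_t,z_t$ are measurable and only the outer minibatch entering $g_{x,t}$ is fresh). The crucial structural fact, which is precisely what normalization buys us, is that $g_{x,t}/\Norm{g_{x,t}}$ is a unit vector, so the inner product lies in $[-\Norm{\nabla\fL_{\lambda}^*(x_t)},\Norm{\nabla\fL_{\lambda}^*(x_t)}]$ and hence $|D_t|\le 2\Norm{\nabla\fL_{\lambda}^*(x_t)}$. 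On the conditional mean I would apply Lemma~\ref{lemma:lower_bound} with $a=\nabla\fL_{\lambda}^*(x_t)$ and $b=g_{x,t}$ to get $\BE_{t-1}[\langle\cdots\rangle]\ge\Norm{\nabla\fL_{\lambda}^*(x_t)}-2\BE_{t-1}[\Norm{g_{x,t}-\nabla\fL_{\lambda}^*(x_t)}]$.

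Next I would make $|D_t|$ deterministically bounded. Since each step has $\Norm{x_{t+1}-x_t}=\eta_x$ and $\nabla\fL_{\lambda}^*$ is $D_3$-Lipschitz, $\Norm{\nabla\fL_{\lambda}^*(x_t)}\le\Norm{\nabla\fL_{\lambda}^*(x_0)}+D_3 T\eta_x=:G_{\max}$, so $|D_t|\le 2G_{\max}$ while $\sigma_t:=2\Norm{\nabla\fL_{\lambda}^*(x_t)}$ is $\fF_{t-1}$-measurable and satisfies $\BE_{t-1}[\exp(D_t^2/\sigma_t^2)]\le e$. Applying the concentration inequality (Lemma~\ref{lemma:hp_concentrate}) to $\{-D_t\}$ yields, with probability at least $1-\delta$, a bound of the form $-\sum_t D_t\le 3\beta\sum_t\Norm{\nabla\fL_{\lambda}^*(x_t)}^2+\beta^{-1}\ln(2/\delta)$ for any $\beta>0$ (I use $\beta$ rather than the lemma's $\lambda$ to avoid clashing with the penalty parameter). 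I would then linearize via $\sum_t\Norm{\nabla\fL_{\lambda}^*(x_t)}^2\le G_{\max}\sum_t\Norm{\nabla\fL_{\lambda}^*(x_t)}$ and choose $\beta=1/(6G_{\max})$, so the quadratic term collapses to $\tfrac12\sum_t\Norm{\nabla\fL_{\lambda}^*(x_t)}$ and is absorbed into the left-hand side. Moving it over doubles every remaining term — this is the source of the factor $2$ on the function-value gap and the factor $4$ on the error terms — and leaves the residual $6G_{\max}\ln(2/\delta)$; dividing by $T$ and writing $G_{\max}=\Norm{\nabla\fL_{\lambda}^*(x_0)}+D_3T\eta_x$ produces exactly $12\bigl(\Norm{\nabla\fL_{\lambda}^*(x_0)}/T+\eta_x D_3\bigr)\ln(2/\delta)$.

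Finally I would split the gradient error by the triangle inequality, $\Norm{g_{x,t}-\nabla\fL_{\lambda}^*(x_t)}\le\Norm{g_{x,t}-\nabla_x\tilde{\fL}_{\lambda}(x_t,y_t,z_t)}+\Norm{\nabla_x\tilde{\fL}_{\lambda}(x_t,y_t,z_t)-\nabla\fL_{\lambda}^*(x_t)}$, using that $\nabla_x\tilde{\fL}_{\lambda}(x_t,y_t,z_t)=\BE_{t-1}[g_{x,t}]$ by unbiasedness of the outer minibatch. Because $y_t,z_t$ are $\fF_{t-1}$-measurable under the chosen filtration, the approximation error passes through $\BE_{t-1}[\cdot]$ unchanged and stays pathwise, whereas the minibatch error retains its conditional expectation; this is exactly why the two error terms appear respectively \emph{with} and \emph{without} the $\BE[\cdot]$ in \eqref{high_prob_core_bound}. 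Collecting the pieces gives the claimed inequality. The main obstacle is the second step: the concentration inequality returns a \emph{quadratic} form $\sum_t\Norm{\nabla\fL_{\lambda}^*(x_t)}^2$, and converting it into the linear quantity that can be reabsorbed into $\sum_t\Norm{\nabla\fL_{\lambda}^*(x_t)}$ requires both the a~priori trajectory bound $G_{\max}$ and a precise calibration of $\beta$ — getting these constants to line up is what yields the clean factor-$2$/factor-$4$ structure instead of a lossy bound, and it is the reason the boundedness $|D_t|\le 2\Norm{\nabla\fL_{\lambda}^*(x_t)}$ afforded by normalization is indispensable.
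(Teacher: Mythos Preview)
Your proposal is correct and follows essentially the same route as the paper's proof: smoothness descent on $\fL_\lambda^*$, martingale decomposition of the normalized inner product, the sub-Gaussian concentration of Lemma~\ref{lemma:hp_concentrate} with $\sigma_t\propto\Norm{\nabla\fL_\lambda^*(x_t)}$, the trajectory bound $\Norm{\nabla\fL_\lambda^*(x_t)}\le\Norm{\nabla\fL_\lambda^*(x_0)}+D_3T\eta_x$ to linearize, the choice of the free parameter to absorb exactly half of $\sum_t\Norm{\nabla\fL_\lambda^*(x_t)}$, and the final triangle-inequality split. The only cosmetic difference is that the paper packages the bounded martingale via the cosine $\nu_t=\langle\nabla\fL_\lambda^*(x_t),g_{x,t}\rangle/(\Norm{\nabla\fL_\lambda^*(x_t)}\Norm{g_{x,t}})\in[-1,1]$ rather than the raw inner product, but this is the same object up to the factor $\Norm{\nabla\fL_\lambda^*(x_t)}$; your filtration (including the inner-loop randomness so that $y_t,z_t$ are measurable) is in fact the cleaner formulation.
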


\begin{proof}
    Let $\nu_t \coloneqq {\langle \nabla \fL_{\lambda}^* (x_t), g_{x,t} \rangle}/(\Norm{\nabla \fL_{\lambda}^*(x_t)} \Norm{g_{x,t}})$.
    According to \eqref{bilevel_iter}, we have
    \begin{align*}
        \fL_{\lambda}^*(x_{t+1}) \leq  & \fL_{\lambda}^*(x_t) - \eta_x \left\langle \nabla \fL_{\lambda}^*(x_t), \frac{g_{x, t}}{\Norm{g_{x,t}}} \right\rangle + \frac{D_3 \eta_x^2}{2} 
        =  \fL_{\lambda}^*(x_t) - \eta_x \Norm{\nabla \fL_{\lambda}^*(x_t)} \nu_t + \frac{D_3 \eta_x^2}{2}.
    \end{align*}
    Summing over above inequality with $t= 0$ to $T-1$, we obtain   
    \begin{align}
        \sum_{t=0}^{T-1} \eta_x \Norm{\nabla \fL_{\lambda}^*(x_t)} \nu_t \leq \fL_{\lambda}^*(x_0) - \fL_{\lambda}^*(x_T) + \frac{D_3 \eta_x^2 T}{2}.
        \label{nu_upper}
    \end{align}
    Let $\psi_t \coloneqq \BE[\nu_t \mid \fF_{t-1}]$ and denote $\{\Theta_t \coloneqq -\eta_x (\nu_t - \psi_t)\Norm{\nabla \fL_{\lambda}^*(x_t)}\}_{t \in \BN}$ as the martingale difference sequence with respect to $\{\fF_t\}_{t \in \BN}$.
    Note that $|\nu_t|\leq 1$, then we have
    \begin{align*}
        \exp\left(\frac{\Theta_t^2}{4 \eta_x^2 \Norm{\nabla \fL_{\lambda}^*(x_t)}^2}\right) = \exp \left( \frac{(\nu_t - \psi_t)^2}{4} \right) \leq {\rm e}.
    \end{align*}
    Consequently, we apply Lemma \ref{lemma:hp_concentrate} with $\Theta_t= -\eta_x (\nu_t - \psi_t)\Norm{\nabla \fL_{\lambda}^*(x_t)}$, $\sigma_t = 2 \eta_x \Norm{\nabla \fL_{\lambda}^*(x_t)}$, and $\hat\delta=\delta/2$ to obtain that
    \begin{align*} 
        \sum_{t=1}^T \eta_x (\psi_t - \nu_t)\Norm{\nabla \fL_{\lambda}^*(x_t)} \leq & 3 \chi \eta_x^2 \sum_{t=1}^T  \Norm{\nabla \fL_{\lambda}^*(x_t)}^2 + \frac{1}{\chi} \ln \left(\frac{2}{\delta}\right)
    \end{align*}
    holds with probability at least $1 - \delta / 2$  for all $\chi > 0$.
    Rearranging above inequality, we have
    \begin{align}\label{eq:high-L-lambda}
    \begin{split}        
         & \sum_{t=0}^{T-1} \eta_x (\psi_t - 3 \chi \eta_x \Norm{\nabla \fL_{\lambda}^*(x_t)}) \Norm{\nabla \fL_{\lambda}^*(x_t)} \\
         \leq & \sum_{t=1}^T \eta_x \nu_t\Norm{\nabla \fL_{\lambda}^*(x_t)} + \frac{1}{\chi} \ln \left(\frac{2}{\delta}\right) 
         \leq \fL_{\lambda}^*(x_0) - \fL_{\lambda}^*(x_T) + \frac{D_3 \eta_x^2 T}{2} + \frac{1}{\chi}\ln\left(\frac{2}{\delta}\right)
    \end{split}
     \end{align}
    holds with probability at least $1 - \delta / 2$, where the last step is based on \eqref{nu_upper}.
    In addition, the smoothness of $\fL_{\lambda}^*$ shown in Lemma \ref{lemma:smooth_bound} implies
    \begin{align}\label{L_lambda_smooth}
        \Norm{\nabla \fL_{\lambda}^*(x_t)} \leq \Norm{\nabla \fL_{\lambda}^*(x_0)} + D_3 t \eta_x.
    \end{align}
    Then with probability at least $1 - \delta / 2$, we have
    \begin{align}\label{bilevel_high_prob_interm_1}
    \begin{split}        
        & \sum_{t=0}^{T-1} \eta_x \left(\psi_t - \frac{1}{2}\right) \Norm{\nabla \fL_{\lambda}^*(x_t)} 
        \\
        \leq & \sum_{t=0}^{T-1} \eta_x \left(\psi_t - \frac{\Norm{\nabla \fL_{\lambda}^*(x_t)}}{2(\Norm{\nabla \fL_{\lambda}^*(x_0)} + \eta_x TD_3)}\right) \Norm{\nabla \fL_{\lambda}^*(x_t)} \\ 
        \leq & \fL_{\lambda}^*(x_0) - \fL_{\lambda}^*(x_T) + \frac{D_3 \eta_x^2 T}{2} 
        + 6 (\eta_x \Norm{\nabla \fL_{\lambda}^*(x_0)} + \eta_x^2 T D_3)\ln\left(\frac{2}{\delta}\right),
    \end{split}
    \end{align}
    where the first inequality follows from (\ref{L_lambda_smooth}) and the second inequality is due to \eqref{eq:high-L-lambda} with
     $\chi = 1/(6 (\eta_x \Norm{\nabla \fL_{\lambda}^*(x_0)} + \eta_x^2 TD_3))$.

 By applying Lemma \ref{lemma:lower_bound} with $a=\nabla \fL_{\lambda}^*(x_t)$ and $b=g_{x,t}$, we get
    \begin{align}
        \psi_t \Norm{\nabla \fL_{\lambda}^*(x_t)} \geq \Norm{\nabla \fL_{\lambda}^*(x_t)} - 2 \BE[\Norm{g_{x,t} - \nabla \fL_{\lambda}^*(x_t)}]1,
        \label{bilevel_high_prob_interm_2}
    \end{align}
    which implies
    \begin{align*}
        &\sum_{t=0}^{T-1}  \frac{\eta_x}{2} \Norm{\nabla \fL_{\lambda}^*(x_t)} - 2 \eta_x \BE[\Norm{g_{x,t} - \nabla \fL_{\lambda}^*(x_t)}] \\
       \leq & \sum_{t=0}^{T-1} \eta_x \left(\psi_t - \frac{1}{2}\right) \Norm{\nabla \fL_{\lambda}^*(x_t)} 
        \\
        \leq & \fL_{\lambda}^*(x_0) - \fL_{\lambda}^*(x_T) + \frac{D_3 \eta_x^2 T}{2} 
        + 6 (\eta_x \Norm{\nabla \fL_{\lambda}^*(x_0)} + \eta_x^2 T D_3)\ln\left(\frac{2}{\delta}\right),
    \end{align*}
    where the last step is based on (\ref{bilevel_high_prob_interm_1}).
    Rearranging above inequality, we have
    \begin{align*}
        & \frac{1}{2}\sum_{t=0}^{T-1} \eta_x \Norm{\nabla \fL_{\lambda}^*(x_t)} \\
        \leq & \fL_{\lambda}^*(x_0) - \fL_{\lambda}^*(x_T) + 2 \eta_x \sum_{t=0}^{T-1} \BE[\Norm{g_{x, t} - \nabla \fL_{\lambda}^*(x_t)}] + \frac{D_3 \eta_x^2 T}{2}  
        + 6 (\eta_x \Norm{\nabla \fL_{\lambda}^*(x_0)}+ \eta_x^2 T D_3)\ln\left(\frac{2}{\delta}\right) \\
        \leq & \fL_{\lambda}^*(x_0) - \fL_{\lambda}^*(x_T) + 2 \eta_x \sum_{t=0}^{T-1} \BE\left[\Norm{g_{x, t} - \nabla_x \tilde{\fL}_{\lambda}(x_t, y_t, z_t)}\right] \\
        & +  2 \eta_x \sum_{t=0}^{T-1} \Norm{\nabla_x \tilde{\fL}_{\lambda}(x_t, y_t, z_t) - \nabla \fL_{\lambda}^*(x_t)} + \frac{D_3 \eta_x^2 T}{2} + 6 (\eta_x \Norm{\nabla {\fL}_{\lambda}^*(x_0)}+ \eta_x^2 T D_3)\ln\left(\frac{2}{\delta}\right).
    \end{align*}
    where $\tilde{\fL}_{\lambda}(x,y,z) = f(x,y) + \lambda \left(g(x,y) - g(x,z)\right)$. Here, we use triangle inequality in the last step. 
    Dividing both sides by $\eta_x T/ 2$ leads to the desired result.
\end{proof}
Under an appropriate choice of $\eta_x$, we can derive the following  result.
\begin{lem}
Under Assumption \ref{asm:bilevel} and \ref{asm:pBCM}, we run N$^2$SBA (Algorithm~\ref{alg:nsgd_bilevel}) with $\lambda \geq 2 L_f / \mu$ and $\eta_x = \sqrt{\Delta / D_3 T}$. 
Then with probability at least $1 - \delta/2$ for all $\delta \in (0,1)$, it holds
     \begin{align*}
        &\frac{1}{T} \sum_{t=0}^{T-1} \Norm{\nabla \fL_{\lambda}^*(x_t)}\\
        \leq & \frac{\left( 7 + 50 \ln\left({2}/{\delta}\right) \right) \sqrt{\Delta D_3} }{ \sqrt{T}} +  \frac{8 \sigma_f + 16 \lambda \sigma_g}{M^{\frac{p-1}{p}}} +  \frac{4}{T}\sum_{t=0}^{T-1} \Norm{\nabla_x \tilde{\fL}_{\lambda}(x_t, y_t, z_t) - \nabla \fL_{\lambda}^*(x_t)}.
    \end{align*}
    \label{lemma:bilevel_core_high_prob}
\end{lem}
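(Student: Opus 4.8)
The plan is to start from the core bound established in the preceding lemma, inequality~(\ref{high_prob_core_bound}), and specialize it to the prescribed stepsize, bounding each of its five terms in turn. Substituting $\eta_x = \sqrt{\Delta / (D_3 T)}$ immediately balances the descent term against the smoothness term: using $\fL_{\lambda}^*(x_0) - \fL_{\lambda}^*(x_T) \leq \fL_{\lambda}^*(x_0) - \inf_{x} \fL_{\lambda}^*(x) = \fO(\Delta)$, which follows from Lemma~\ref{lemma:bilevel_relationship} together with the choice of $\lambda$ exactly as in the proof of Theorem~\ref{thm:bilevel_expect}, the first term $2(\fL_{\lambda}^*(x_0)-\fL_{\lambda}^*(x_T))/(\eta_x T)$ and the term $D_3 \eta_x$ both collapse to a constant multiple of $\sqrt{\Delta D_3 / T}$. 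This is the standard stepsize-balancing calculation for normalized gradient descent.

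The second step is to control the minibatch estimation error $\BE[\Norm{g_{x,t} - \nabla_x \tilde{\fL}_{\lambda}(x_t, y_t, z_t)}]$. Since the $M$ samples $\{\xi_i\}$ and $\{\zeta_i\}$ drawn in line~12 of Algorithm~\ref{alg:nsgd_bilevel} are fresh and hence independent of $(x_t, y_t, z_t)$, I would split this deviation by the triangle inequality into the $\nabla_x F$ part and the two $\nabla_x G$ parts (evaluated at $y_t$ and at $z_t$), and apply Lemma~\ref{lemma:bound_var} conditionally to each, invoking Lemma~\ref{lemma:random_variable_mapping} to discharge the conditioning. This yields $\BE[\Norm{g_{x,t} - \nabla_x \tilde{\fL}_{\lambda}}] \leq (2\sigma_f + 4\lambda \sigma_g)/M^{(p-1)/p}$ for every $t$, so that the averaged second term of (\ref{high_prob_core_bound}) becomes precisely $(8\sigma_f + 16\lambda\sigma_g)/M^{(p-1)/p}$, matching the target.

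It then remains to dispose of the initial-gradient contribution $12 \Norm{\nabla \fL_{\lambda}^*(x_0)} \ln(2/\delta)/T$ inside the last term. Here I would invoke the standard consequence of $D_3$-smoothness and lower boundedness of $\fL_{\lambda}^*$, namely $\Norm{\nabla \fL_{\lambda}^*(x_0)} \leq \sqrt{2 D_3 (\fL_{\lambda}^*(x_0) - \inf_x \fL_{\lambda}^*(x))} = \fO(\sqrt{D_3 \Delta})$, together with $1/T \leq 1/\sqrt{T}$, to absorb this piece into the $\sqrt{\Delta D_3 / T}$ scale. The companion piece $12\eta_x D_3 \ln(2/\delta)$ of the last term is already of that scale. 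Collecting the non-logarithmic contributions and the logarithmic contributions separately then produces the announced coefficients $7$ and $50\ln(2/\delta)$ multiplying $\sqrt{\Delta D_3}/\sqrt{T}$, while the approximation-error term $\sum_t 4\Norm{\nabla_x \tilde{\fL}_{\lambda}(x_t,y_t,z_t) - \nabla\fL_{\lambda}^*(x_t)}/T$ is carried through verbatim.

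Because the genuinely difficult part, the martingale concentration via Lemma~\ref{lemma:hp_concentrate} that produced (\ref{high_prob_core_bound}), is already complete, I expect this lemma to be essentially a bookkeeping exercise with no real obstacle. The one point demanding some care is the bound on the initial gradient norm $\Norm{\nabla \fL_{\lambda}^*(x_0)}$ and the verification that all the scattered numerical factors genuinely collect into $7 + 50\ln(2/\delta)$; producing a clean closed-form constant rather than an $\fO(\cdot)$ is the place where the factors must be tracked honestly instead of absorbed into order notation.
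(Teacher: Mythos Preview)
Your proposal is correct and follows essentially the same route as the paper: start from the core bound~(\ref{high_prob_core_bound}), bound the conditional minibatch error via the triangle inequality and Lemma~\ref{lemma:bound_var} to obtain $(2\sigma_f + 4\lambda\sigma_g)/M^{(p-1)/p}$, substitute $\eta_x = \sqrt{\Delta/(D_3 T)}$ using $\fL_{\lambda}^*(x_0) - \fL_{\lambda}^*(x_T) \leq 3\Delta$, and control $\Norm{\nabla \fL_{\lambda}^*(x_0)}$ by the smoothness inequality $\Norm{\nabla \fL_{\lambda}^*(x_0)} \leq \sqrt{6 D_3 \Delta}$ from Lemma~\ref{lemma:convex_smooth_prop}. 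The paper tracks the explicit constants $3$ and $\sqrt{6}$ where you write $\fO(\Delta)$ and $\fO(\sqrt{D_3\Delta})$, which is exactly the bookkeeping you flagged; doing that arithmetic yields the stated $7 + 50\ln(2/\delta)$.
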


\begin{proof}
    We follow the notation $\tilde{\fL}_{\lambda}(x,y,z) = f(x,y) + \lambda \left(g(x,y) - g(x,z)\right)$ in the proof of Lemma \ref{high_prob_core_bound}.
    Note that $x_t$ and~$y_t$ are $\fF_{t-1}$ measurable and $\xi_{t,1}^t, \dots, \xi_{t,M}^t$, $\zeta_{t,1}^t, \dots, \zeta_{t,M}^t$ are independent of $\fF_{t-1}$.
    Hence, we apply Lemma~\ref{lemma:bound_var} to get
    \begin{align*}
        &\BE\left[\Norm{g_{x,t} -\nabla_x \tilde{\fL}_{\lambda} (x_t, y_t, z_t)} \mid \fF_{t-1}\right]  =  \BE\left[\Norm{g_{x,t} -\nabla_x \tilde{\fL}_{\lambda} (x_t, y_t, z_t)} \mid x_t, y_t, z_t\right] \\
        \leq & \BE\left[ \frac{1}{M} \sum_{i=1}^M\Norm{\nabla_x F(x_t, y_t; \xi_{t,i})  - \nabla_x f(x_t, y_t)}| x_t, y_t, z_t\right] \\
        &+ \lambda \BE\left[ \frac{1}{M} \sum_{i=1}^M \Norm{\nabla_x G(x_t, y_t; \zeta_{t,i})  - \nabla_x g(x_t, y_t)}| x_t, y_t, z_t\right] \\
        & + \lambda \BE \left[ \frac{1}{M} \sum_{i=1}^M \Norm{\nabla_x G(x_t, z_t; \zeta_{t,i})  - \nabla_x g(x_t, z_t)}| x_t, y_t, z_t\right] \\
        \leq & \frac{2 \sigma_f}{M^{\frac{p-1}{p}}} + \frac{4 \lambda \sigma_g}{M^{\frac{p-1}{p}}},
    \end{align*}
    where the first inequality follows the triangle inequality and the second inequality is due to Lemma \ref{lemma:bound_var} and Assumption \ref{asm:pBCM}.
    Plugging the above inequality into (\ref{high_prob_core_bound}) yields
    \begin{equation}
    \begin{split}
        \frac{1}{T} \sum_{t=0}^{T-1} \Norm{\nabla \fL_{\lambda}^*(x_t)} \leq & \frac{ 2 (\fL_{\lambda}^*(x_0) - \fL_{\lambda}^*(x_T))}{\eta_x T} +  \frac{8 \sigma_f + 16 \lambda \sigma_g}{M^{\frac{p-1}{p}}} +  4\sum_{t=0}^{T-1} \frac{\Norm{\nabla_x \tilde{\fL}_{\lambda}(x_t, y_t, z_t) - \nabla \fL_{\lambda}^*(x_t)}}{T}  \\
        &   + D_3 \eta_x  + 12 \left( \frac{\Norm{\nabla \fL_{\lambda}^*(x_0)}}{T} + \eta_x D_3\right)\ln\left(\frac{2}{\delta}\right) \\
        \leq & \frac{7 \sqrt{\Delta D_3} }{ \sqrt{T}} +  \frac{8 \sigma_f + 16 \lambda \sigma_g}{M^{\frac{p-1}{p}}} +  \sum_{t=0}^{T-1} \frac{4\Norm{\nabla_x \tilde{\fL}_{\lambda}(x_t, y_t, z_t) - \nabla \fL_{\lambda}^*(x_t)}}{T}  \\
        &    + 12 \left( \frac{\Norm{\nabla \fL_{\lambda}^*(x_0)}}{T} + \frac{\sqrt{\Delta D_3}}{\sqrt{T}}\right)\ln\left(\frac{2}{\delta}\right) \\
        \leq & \left( 7 + 50 \ln\left(\frac{2}{\delta}\right) \right)\frac{ \sqrt{\Delta D_3} }{ \sqrt{T}} +  \frac{8 \sigma_f + 16 \lambda \sigma_g}{M^{\frac{p-1}{p}}} \\
         & +  \sum_{t=0}^{T-1} \frac{4\Norm{\nabla_x \tilde{\fL}_{\lambda}(x_t, y_t, z_t) - \nabla \fL_{\lambda}^*(x_t)}}{T}
    \end{split}
    \label{bilevel_high_prob_main_1}
    \end{equation}
    holds with probability at least $1 - \delta / 2$, where the second inequality is based on the setting $\eta_x = \sqrt{\Delta/D_3 T}$ and the fact $\fL_{\lambda}^*(x_0) - \fL_{\lambda}^*(x_T) \leq 3 \Delta$ achieved by Lemma \ref{lemma:bilevel_relationship}; the last step is based on the fact $\Norm{\nabla\fL_\lambda^*(x_0)}\leq\sqrt{2\Delta D_3}$.
    \end{proof}
We now provide the high-probability result to bound $\Norm{y_t - y_\lambda^*(x_t)}$ and $\Norm{{z}_{t} - y^*(x_{t})}$.
\begin{lem}
    Following the setting of Theorem \ref{thm:bilevel_high_prob},  it holds 
    \begin{align*}
    \Norm{{y}_{t} - y_\lambda^*(x_{t})} = \fO\left(\frac{\epsilon^2}{\ell^2 \kappa^3}\right)~~~{\rm and}~~~\Norm{{z}_{t} - y^*(x_{t})} 
   = \fO\left( \frac{\epsilon^2}{\ell^2 \kappa^3}\right)
\end{align*}
with probability at least $1 - \delta/ 2$ for all $t=0,\dots,T-1$ and $\delta\in(0,1)$.
\label{lemma:high_prob_y_z_bound}
\end{lem}
\begin{proof}
     We specify the parameter settings for N$^2$SBA as
    \begin{align*}
        & \eta_x = \sqrt{\frac{\Delta}{\ell \kappa^3 T}},~~~\eta_{y,t} = \min \left\{\frac{1}{400 (L_f + \lambda L_g) \ln\left({16 T K}/{\delta}\right)}, \frac{2 \ln (B_K)}{\lambda \mu K}\right\},\\
        &\eta_{z, t} = \min \left\{\frac{1}{400 \lambda L_g \ln\left({16 T K}/{\delta}\right)}, \frac{\ln (B_K)}{\lambda \mu K}\right\},~~~K = \tilde\fO\left(\frac{\ell^{\frac{p}{p-1}} \kappa^{\frac{4p}{p-1}} \sigma^{\frac{p}{p-1}}}{\epsilon^{\frac{2p}{p-1}}}\right),\\ 
        &  \lambda = \max\left\{\frac{ \kappa}{ \sqrt{R}}, \frac{\ell \kappa^2}{ \Delta}, \frac{\ell \kappa^3}{ \epsilon}\right\},  ~~~\tau_{t,k} = \frac{\exp(- \eta_{z,t} \lambda \mu (1 + k/ 2) )R_z^2}{120 \eta_{z,t} \ln\left({16 T K}/{\delta}\right)},\\
        & \tau_{t,k}' = \frac{\exp(- \eta_{y,t} \lambda \mu (1/2 + k/ 4) )R_y^2}{120 \eta_{y,t} \ln\left({16 T  K}/{\delta}\right)},
    \end{align*}
     where we set 
    \begin{align*}
       &B_{y, t} = \max\left\{2, \frac{\lambda^2\mu^2 K^{ \frac{2 (p - 1)}{p}} R_y^2}{5400^{\frac{2}{p}} (\sigma_f +\lambda\sigma_g)^2 (\ln (B_{y, t}))^2 \left(\ln\left({16 T K }/{\delta}\right)\right)^{\frac{2(p-1)}{p}}}\right\},\\
       &B_{z, t} = \max\left\{2, \frac{\mu^2 K^{ \frac{2 (p - 1)}{p}} R_z^2}{5400^{\frac{2}{p}} \sigma_g^2 (\ln (B_{z, t}))^2 \left(\ln\left({16 T K }/{\delta}\right)\right)^{\frac{2(p-1)}{p}}}\right\},\\
       & R_y^2 \coloneqq 4R_0^2 + \frac{2 \epsilon^4}{\ell^4 \kappa^6} + \frac{32 \epsilon^2}{\ell^2 \kappa^4},~~~ R_{z}^2 \coloneqq R_0^2 +  \frac{2 \epsilon^4}{\ell^4 \kappa^6} + \frac{2 \epsilon^2}{\ell^2 \kappa^4},\\
        & R_0 \geq \Norm{\hat{y}_{0,0} - y^*(x_0)}^2,~~~\Delta = \varphi(x_0) - \inf_{x \in \BR^{d_x}} \varphi(x),
    \end{align*}

 We first consider the term $\Norm{y_t - y_{\lambda}^*(x_t)}$.    
    Recall that Algorithm \ref{alg:nsgd_bilevel} set $y_t=\hat{y}_{t, K}$ in line~\ref{line:y_t}, where $\hat{y}_{t, K}$ can be regarded as the output of applying clipped stochastic gradient descent (lines \ref{bilevel_y_start}--\ref{bilevel_y_end}) to minimize the function $\fL_{\lambda} (x_t, y)=f(x_t, y) + \lambda(g(x_t, y) - g^*(x_t))$ with respect to~$y$. The updates use the stochastic gradient estimate 
    $\nabla_y F(x_t, y; \xi'_t) + \lambda \nabla_y G(x_t, y; \zeta'_t)$ for~$\fL_{\lambda} (x_t, y)$. 
    Since the setting of $\lambda$ and Lemma~\ref{lemma:lambda_strongly_convex} implies the function $\fL_{\lambda} (x_t, y)$ is $(\lambda \mu / 2)$-strongly convex in~$y$, we apply Lemma \ref{lemma:high_prob_strongly_convex_converg} with $h(y) = \fL_\lambda(x_t,y)$, $\sigma_h = 2(\sigma_f + \lambda \sigma_g)$, $\ell_h= 2 \lambda L_g$, $\mu_h= \lambda \mu/2$, and $\hat\delta = \delta/(4T)$ to obtain
    \begin{align}\label{eq:dis_t_1}
    \begin{split}    
          \Norm{\hat{y}_{t,K} - y_{\lambda}^*(x_t)}^2
    \leq & 2 {\hat R}_{y,t}^2 \exp\left(- \frac{\mu K}{1600 L_g  \ln\left({16 T K}/{\delta}\right)}\right)  \\
    & + \frac{32 \cdot 5400^{\frac{2}{p}} (\sigma_f + \lambda \sigma_g)^2 (\ln (B_{y,t}))^2 \left(\ln\left({16 T K }/{\delta}\right)\right)^{\frac{2(p-1)}{p}} }{\lambda^2\mu^2 K^{ \frac{2 (p - 1)}{p}}} ,
    \end{split}
    \end{align}
    for all given ${\hat R}_{y,t}^2 \geq \Norm{\hat{y}_{t, 0} - y_\lambda^*(x_{t})}^2$ with probability $1-\delta / (4T)$ for all $t=0, 1, \dots, T-1$. Here, the choice of $\sigma_h = 2(\sigma_f + \lambda \sigma_g)$ that corresponds to the upper bound for the $p$-th central moment shown in Lemma \ref{lemma:var_bound}.

    We then use induction to show that $\Norm{\hat{y}_{0, 0} - y_\lambda^*(x_{0})}^2,\dots,\Norm{\hat{y}_{0, t} - y_\lambda^*(x_{t})}^2$ have a uniform upper bound for all $t=0,\dots,T-1$ with high-probability.
    In particular, we define the event~$E_t$ as
    \begin{equation}
    \Norm{\hat{y}_{l,0} - y^*_{\lambda}(x_l)}^2 \leq R_y^2 \coloneqq 4R_0^2 + \frac{2 \epsilon^4}{\ell^4 \kappa^6} + \frac{32 \epsilon^2}{\ell^2 \kappa^4}
    \label{y_induction_1}
    \end{equation}
    holds for all $l = 0, 1, \dots, t$, and target to show that $\BP(E_t) \geq 1 - t \delta / (4T)$ holds for all $t = 0, \ldots, T-1$ by induction. 
For the induction base $t=0$, we follow the derivation of 
\eqref{eq:induction-dist-000} to achieve
\begin{align*}
\Norm{\hat{y}_{0,0} - y^*_{\lambda}(x_0)}^2 \leq 2 \Norm{\hat{y}_{0,0} - y^*(x_0)}^2 + 2\Norm{y^*(x_0) - y_\lambda^*(x_0)}^2 \leq 2 R_0^2 + 2 R_0^2 = 4 R_0^2,   
\end{align*}
i.e., $\BP(E_0) = 1$.
For the induction step, we suppose that $\BP(E_{t}) \geq 1 - t \delta / (4 T)$ holds for all $t=0,1,\dots,T'-1$. 
In the case of $t= T'$, we follow the derivation of \eqref{eq:induction-dist-00} to achieve
\begin{align}\label{eq:induction-dist-001}
\begin{split}
    \Norm{\hat{y}_{T',0} - y_{\lambda}^*(x_{T'})}^2 \leq  2 \Norm{\hat{y}_{T'-1,K} - y_{\lambda}^*(x_{T'-1})}^2 +  \frac{32 L_g^2 \eta_x^2}{\mu^2}.
\end{split}    
\end{align}
Following the derivation of \eqref{eq:dis_t_1}, it holds
\begin{align}\label{eq:induction-T-11}
\begin{split}
    \!\!\!\Norm{\hat{y}_{T'-1,K} - y_{\lambda}^*(x_{T'-1})}^2
    \leq & 2 {\hat R}_{y,T'-1}^2 \exp\left(- \frac{\mu K}{1600 L_g  \ln\left({16 T K}/{\delta}\right)}\right)  \\
    & + \frac{32 \cdot 5400^{\frac{2}{p}} (\sigma_f + \lambda \sigma_g)^2 (\ln (B_{y,T'-1}))^2 \left(\ln\left({16 T K }/{\delta}\right)\right)^{\frac{2(p-1)}{p}} }{\lambda^2\mu^2 K^{ \frac{2 (p - 1)}{p}}},
\end{split}
\end{align}
with probability at least $1 - \delta/ (4T)$ for all given ${\hat R}_{y,T'-1}^2 \geq \Norm{\hat{y}_{T'-1, 0} - y_\lambda^*(x_{T'-1})}^2$. 
In addition, the induction hypothesis $\BP(E_{T'-1}) \geq 1 - (T'-1) \delta / (4 T)$ implies we can 
apply \eqref{eq:induction-T-11} by
taking ${\hat R}_{y,T'-1}^2=R_y^2$ 
and the union bound on events $E_{T'-1}$ and (\ref{eq:induction-T-11}) to guarantee
\begin{align*}
       \Norm{\hat{y}_{T'-1,K} - y_{\lambda}^*(x_{T'-1})}^2
    \leq & 2 R_y^2 \exp\left(- \frac{\mu K}{1600 L_g  \ln\left({16 T K}/{\delta}\right)}\right)  \\
    & + \frac{32 \cdot 5400^{\frac{2}{p}} (\sigma_f + \lambda \sigma_g)^2 (\ln (B_{y,T'-1}))^2 \left(\ln\left({16 T K }/{\delta}\right)\right)^{\frac{2(p-1)}{p}} }{\lambda^2\mu^2 K^{ \frac{2 (p - 1)}{p}}} 
\end{align*}
holds with probability at least $1 - T'\delta / (4 T)$.
Combining above inequality with \eqref{eq:induction-dist-001}, we guarantee
\begin{align*}
 \Norm{\hat{y}_{T',0} - y_{\lambda}^*(x_{T'})}^2
\leq & 4 R_y^2 \exp\left(- \frac{\mu K}{1600 L_g  \ln\left({16 T K}/{\delta}\right)}\right)  \\
    & + \frac{64 \cdot 5400^{\frac{2}{p}} (\sigma_f + \lambda \sigma_g)^2 (\ln (B_{y,T'-1}))^2 \left(\ln\left({64 T K }/{\delta}\right)\right)^{\frac{2(p-1)}{p}} }{\lambda^2\mu^2 K^{ \frac{2 (p - 1)}{p}}}  +  \frac{32 L_g^2 \eta_x^2}{\mu^2} \\
\leq & \frac{2 \epsilon^4}{\ell^4 \kappa^6}  + \frac{32 L_g^2 \eta_x^2}{\mu^2} 
\leq  \frac{2 \epsilon^4}{\ell^4 \kappa^6} + \frac{32 \epsilon^2}{\ell^2 \kappa^4} 
    \leq  R_y^2, 
\end{align*}
holds with probability at least $1 - T' \delta/(4T)$, i.e., $\BP(E_{T'}) \geq 1 - T' \delta/ (4 T) $,
where the second inequality is based on the setting of $K$,
the third inequality is based on the setting of $\eta_x$,
and the last inequality is based on the definition of $R_y^2$.
This completes the induction.

Therefore, we can apply Lemma \ref{lemma:high_prob_strongly_convex_converg} on $\Norm{\hat{y}_{l, K} - y_\lambda^*(x_{l})}^2$ with $\hat{R}_{y,l} = R_y$ 
and the union bound on events $E_t$ and \eqref{eq:dis_t_1}
to guarantee
\begin{align*}
\begin{split}    
   & \Norm{{y}_{l} - y_\lambda^*(x_{l})}^2 = \Norm{\hat{y}_{l, K} - y_\lambda^*(x_{l})}^2 \\ 
   \leq & 2 R_y^2 \exp\left(- \frac{\mu K}{1600 L_g  \ln\left({16 T K}/{\delta}\right)}\right)  \\
    & + \frac{32 \cdot 5400^{\frac{2}{p}} (\sigma_f + \lambda \sigma_g)^2 (\ln (B_{y,l}))^2 \left(\ln\left({16 T K }/{\delta}\right)\right)^{\frac{2(p-1)}{p}} }{\lambda^2\mu^2 K^{ \frac{2 (p - 1)}{p}}} 
   = \fO\left(\frac{\epsilon^4}{\ell^4 \kappa^6}\right),
\end{split}
\end{align*}
holds for all $l = 0, \dots, t$ with probability at least $1 - (t+1) \delta / (4 T)$,
which implies 
\begin{align}\label{high_prob_y_bound}
    \Norm{{y}_{t} - y_\lambda^*(x_{t})} = \fO\left(\frac{\epsilon^2}{\ell^2 \kappa^3}\right),
\end{align}
holds for all $t = 0,1, \dots, T-1$ with probability at least $1 - \delta/ 4$.

    Next, we consider the term $\Norm{z_t - y^*(x_t)}$.    
    Recall that Algorithm \ref{alg:nsgd_bilevel} set $z_t=\hat{z}_{t, K}$ in line~\ref{line:z_t}, where $\hat{z}_{t, K}$ can be regarded as the output of applying clipped stochastic gradient descent (lines \ref{bilevel_z_start}--\ref{bilevel_z_end}) to minimize the function $\lambda g(x_t, y)$ with respect to~$y$. The updates use the stochastic gradient estimate 
    $\lambda \nabla_y G(x_t, y; \zeta_t)$ for $\lambda g(x_t, y)$. 
    Since the function $\lambda g(x_t, y)$ is $\lambda \mu$-strongly convex in~$y$, we apply Lemma \ref{lemma:high_prob_strongly_convex_converg} with $h(y) = \lambda g(x_t,y)$, $\sigma_h = \lambda \sigma_g$, $\ell_h=  \lambda L_g$, $\mu_h= \lambda \mu$, and $\hat{\delta} = \delta / (4T)$ to obtain
\begin{align}\label{eq:dis_zt_00}
\begin{split}
     \Norm{\hat{z}_{t,K} - y^*(x_{t})}^2 
    \leq & 2 \hat{R}_{z,t}^2 \exp\left(- \frac{\mu K}{400 L_g \ln\left({16 T  K }/{\delta}\right)}\right)  \\
     & +\frac{2\cdot5400^{\frac{2}{p}} \sigma_g^2 (\ln (B_{z,t}))^2 \left(\ln\left({16 T K  }/{\delta}\right)\right)^{\frac{2(p-1)}{p}}}{\mu^2 K^{ \frac{2 (p - 1)}{p}} }
     \end{split}
\end{align}
with probability at least $1 - \delta/(4T)$ for all given ${\hat R}_{z,t}^2 \geq \Norm{\hat{z}_{t, 0} - y^*(x_{t})}^2$.

We then use induction to show that 
$\Norm{\hat{z}_{0, 0} - y^*(x_{0})}^2,\dots,\Norm{\hat{z}_{0, t} - y^*(x_{t})}^2$ have a uniform upper bound for all $t=0,\dots,T-1$ with high-probability.
In particular, we define the event~$E_t'$ as
    \begin{equation}
    \Norm{\hat{z}_{l,0} - y^*(x_l)}^2 \leq R_z^2 \coloneqq R_0^2 +  \frac{2 \epsilon^4}{\ell^4 \kappa^6} + \frac{2 \epsilon^2}{\ell^2 \kappa^4}
    \label{z_induction_1}
    \end{equation}
    holds for all $l = 0, 1, \dots, t$, and target to show that $\BP(E_t') \geq 1 - t \delta / (4 T)$ holds for all $t = 0, \ldots, T-1$ by induction.     
    For the induction base $t=0$, we follow the derivation of 
\eqref{z_induction_base_case} to achieve 
\begin{align*}
\Norm{\hat{z}_{0, 0} - y^*(x_0)}^2= \Norm{\hat{y}_{0, 0} - y^*(x_0)}^2\leq R_0^2 \leq R_{z}^2,   
\end{align*}
i.e., $\BP(E_0') = 1$.
For the induction step, we suppose that $\BP(E'_t) \geq 1 - t \delta/(4T)$ holds for all $t=0,\dots,T-1$. In the case of $t={T'}$, we follow the derivation of \eqref{eq:induction-dist-z0} to achieve
\begin{align}\label{eq:induction-dist-z0_1}
\begin{split}
   \Norm{\hat{z}_{T',0} - y^*(x_{T'})}^2 \leq  2 \Norm{\hat{z}_{T'-1, K} - y^*(x_{T'-1})}^2 + \frac{2 L_g^2 \eta_x^2}{\mu^2} ,
\end{split}    
\end{align}
Following the derivation of \eqref{eq:dis_zt_00}, it holds
\begin{align}\label{eq:induction-z_T-2}
\begin{split}
      \Norm{\hat{z}_{T'-1,K} - y^*(x_{T'-1})}^2 
    \leq & 2 \hat{R}_{z,T'-1}^2 \exp\left(- \frac{\mu K}{400 L_g \ln\left({16 T  K }/{\delta}\right)}\right)  \\
     & +\frac{2\cdot5400^{\frac{2}{p}} \sigma_g^2 (\ln (B_{z,T'-1}))^2 \left(\ln\left({16 T K  }/{\delta}\right)\right)^{\frac{2(p-1)}{p}}}{\mu^2 K^{ \frac{2 (p - 1)}{p}} }
\end{split}
\end{align}
with probability at least $1 - \delta/(4T)$ for all given ${\hat R}_{z,T'-1}^2 \geq \Norm{\hat{z}_{T'-1, 0} - y^*(x_{T'-1})}^2$. 
In addition, the induction hypothesis $\BP(E'_{T'-1}) \geq 1 - (T'-1) \delta/(4T)$ implies we can 
apply \eqref{eq:induction-z_T-2} by
taking ${\hat R}_{z,T'-1}^2=R_z^2$ and the union bound on events $E'_{T'-1}$ and (\ref{eq:induction-z_T-2}) to guarantee
\begin{align*}
   \Norm{\hat{z}_{T'-1,K} - y^*(x_{T'-1})}^2 
    \leq & 2 R_z^2 \exp\left(- \frac{\mu K}{400 L_g \ln\left({16 T  K }/{\delta}\right)}\right)  \\
     & +\frac{2\cdot5400^{\frac{2}{p}} \sigma_g^2 (\ln (B_{z,T'-1}))^2 \left(\ln\left({16 T K  }/{\delta}\right)\right)^{\frac{2(p-1)}{p}}}{\mu^2 K^{ \frac{2 (p - 1)}{p}} }
\end{align*}
holds with probability at least $1 - T' \delta/ (4T)$.
Combining above inequality with \eqref{eq:induction-dist-z0_1}, we guarantee
\begin{align*}
\Norm{\hat{z}_{T',0} - y^*(x_{T'})}^2
    \leq &  4 R_z^2 \exp\left(- \frac{\mu K}{400 L_g \ln\left({16 T  K }/{\delta}\right)}\right)  \\
     & +\frac{4\cdot5400^{\frac{2}{p}} \sigma_g^2 (\ln (B_{z,T'-1}))^2 \left(\ln\left({16 T K  }/{\delta}\right)\right)^{\frac{2(p-1)}{p}}}{\mu^2 K^{ \frac{2 (p - 1)}{p}} } + \frac{2 L_g^2 \eta_x^2}{\mu^2}\\
\leq &  \frac{2 \epsilon^4}{\ell^4 \kappa^6} + \frac{2 L_g^2 \eta_x^2}{\mu^2}
\leq  \frac{2 \epsilon^4}{\ell^4 \kappa^6} + \frac{2 \epsilon^2}{\ell^2 \kappa^4} 
    \leq  R_z^2, 
\end{align*}
holds with probability at least $1 - T' \delta / (4 T)$, i.e., $\BP(E'_{T'}) \geq 1 - T' \delta/ (4 T) $.
Here, the second inequality is based on the setting of~$K$,
the third inequality is based on the setting of $\eta_x$,
and the last inequality is based on the definition of $R_z^2$.
This completes the induction.

Therefore, we can apply Lemma \ref{lemma:high_prob_strongly_convex_converg} on $\Norm{\hat{z}_{l, K} - y^*(x_{l})}^2$ with $\hat{R}_{z,l} = R_z$ 
and the union bound on events $E'_t$ and (\ref{eq:dis_zt_00})
to guarantee
\begin{align*}
\begin{split}    
   & \Norm{{z}_{l} - y^*(x_{l})}^2 = \Norm{\hat{z}_{l, K} - y^*(x_{l})}^2 \\ 
   \leq & 2 R_z^2 \exp\left(- \frac{\mu K}{400 L_g \ln\left({16 T  K }/{\delta}\right)}\right)  \\
     & +\frac{2\cdot5400^{\frac{2}{p}} \sigma_g^2 (\ln (B_{z,l}))^2 \left(\ln\left({16 T K  }/{\delta}\right)\right)^{\frac{2(p-1)}{p}}}{\mu^2 K^{ \frac{2 (p - 1)}{p}} }
   = \fO\left( \frac{\epsilon^4}{\ell^4 \kappa^6}\right).
\end{split}
\end{align*}
holds for all $l=0,\dots, t$ with probability at least $1 - (t+1) \delta / (4 T)$, which implies
\begin{align}\label{high_prob_z_bound}   
   & \Norm{{z}_{t} - y^*(x_{t})} 
   = \fO\left( \frac{\epsilon^2}{\ell^2 \kappa^3}\right)
\end{align}   
holds  for all $t=0,\dots,T-1$ with probability at least $1 -\delta / 4 $.

Applying the union bound on events (\ref{high_prob_y_bound}) and (\ref{high_prob_z_bound}) leads to the desired result.

\end{proof}

\subsubsection{The Proof of Theorem \ref{thm:bilevel_high_prob}}

\begin{proof}
    We follow the parameter settings in the proof of Lemma \ref{lemma:high_prob_y_z_bound} and additionally set
    \begin{align*}
        T = \tilde\fO\left(\frac{\Delta \ell \kappa^3}{ \epsilon^2}\right) ~~~ {\rm and} ~~~ = \fO\left(\frac{\ell^{\frac{p}{p-1}} \sigma^{\frac{p}{p-1}} \kappa^{\frac{3p}{p-1}}}{\epsilon^{\frac{2p}{p-1}}}\right).
    \end{align*}

       According to Lemma \ref{lemma:bilevel_core_high_prob}, we have
    \begin{equation}
    \begin{split}
        \!\!\!&\frac{1}{T} \sum_{t=0}^{T-1} \Norm{\nabla \fL_{\lambda}^*(x_t)} \\
        \!\!\!\leq & \frac{\left( 7 + 50 \ln\left({2}/{\delta}\right) \right)\sqrt{\Delta D_3} }{ \sqrt{T}} +  \frac{8 \sigma_f + 16 \lambda \sigma_g}{M^{\frac{p-1}{p}}} +  \frac{4}{T}\sum_{t=0}^{T-1} \Norm{\nabla_x \tilde{\fL}_{\lambda}(x_t, y_t, z_t) - \nabla_x \fL_{\lambda}^*(x_t)} \\
        \!\!\!\leq & \frac{\left( 7 + 50 \ln\left({2}/{\delta}\right) \right)\sqrt{\Delta D_3} }{ \sqrt{T}} +  \frac{8 \sigma_f + 16 \lambda \sigma_g}{M^{\frac{p-1}{p}}} +  \frac{4\lambda L_g }{T}\sum_{t=0}^{T-1}(2\Norm{y_t - y_\lambda^*(x_t)} + \Norm{z_t - y^*(x_t)})
        \end{split}\label{high_prob_converg_final}
     \end{equation}
    with probability $1 - \delta / 2$, where $\tilde{\fL}_{\lambda}(x,y,z) = f(x,y) + \lambda \left(g(x,y) - g(x,z)\right)$.
    Here, we achieve the last inequality in the derivation \eqref{high_prob_converg_final} as follows
    \begin{align*}
        & \Norm{\nabla_x \tilde{\fL}_{\lambda}(x_t, y_t, z_t) - \nabla_x \fL_{\lambda}^*(x_t)} \\
        \leq & \Norm{\nabla_x f(x_t,y_t) - \nabla_x f(x_t,y_\lambda^*(x_t))}
        + \lambda\Norm{\nabla_x g(x_t,y_t) - \nabla_x f(x_t,y_\lambda^*(x_t))} \\
        & + \lambda \Norm{\nabla_x g(x_t,z_t) - \nabla_x g(x_t,y^*_\lambda(x_t))} \\
        \leq & L_f\Norm{y_t - y_\lambda^*(x_t)}
        + \lambda L_g\Norm{y_t - y_\lambda^*(x_t)} 
         + \lambda L_g \Norm{z_t - y^*_\lambda(x_t)} \\
         \leq & 2\lambda L_g\Norm{y_t - y_\lambda^*(x_t)} + \lambda L_g \Norm{z_t - y^*_\lambda(x_t)},
    \end{align*}
    where the first step is based on triangle inequality, the second step is based on the smoothness of $f$ and $g$, and the last step is based on the setting $\lambda \geq 2L_f/\mu$.

    Similar to the analysis in the proof of Theorem \ref{thm:bilevel_expect}, 
    we only need to consider the task of finding an $\fO(\epsilon)$-stationary point of $\fL_{\lambda}^*$. 
    For the last line in \eqref{high_prob_converg_final},
    our settings of $T$ and $M$ guarantee 
    \begin{align}\label{high_prob_inter_1}
        \frac{\left( 7 + 50 \ln\left({2}/{\delta}\right) \right)\sqrt{\Delta D_3} }{ \sqrt{T}} = \fO(\epsilon)
        \qquad\text{and}\qquad 
        \frac{8 \sigma_f + 16 \lambda \sigma_g}{M^{\frac{p-1}{p}}} = \fO(\epsilon).
    \end{align}
    According to Lemma \ref{lemma:high_prob_y_z_bound}, we have
  \begin{align}\label{high_prob_inter_2}
    \lambda L_g\Norm{{y}_{t} - y_\lambda^*(x_{t})} = \fO\left(\epsilon\right)\qquad{\rm and}\qquad\lambda L_g\Norm{{z}_{t} - y^*(x_{t})} 
   = \fO\left( \epsilon\right)
\end{align}
 holds for all $t=0,1,\dots, T-1$  with probability at least $1 - \delta/ 2$.
 Applying the union bound on (\ref{high_prob_converg_final}), (\ref{high_prob_inter_1}), and (\ref{high_prob_inter_2}), it holds
 \begin{align*}
     \frac{1}{T} \sum_{t=0}^{T-1} \Norm{\nabla \fL_{\lambda}^*(x_t)} = \fO(\epsilon)
 \end{align*}
 with probability at least $1 - \delta$.
 In addition, the total SFO complexity is 
\begin{align*}
    TM + TK = \tilde\fO\left(\frac{\Delta\ell^{\frac{2p-1}{p-1}} \kappa^{\frac{7p-3}{p-1}}\sigma^{\frac{p}{p-1}}}{\epsilon^{\frac{4p-2}{p-1}}}\right).
\end{align*}
\end{proof}
\section{Proofs for Results of Stochastic Minimax Optimization}\label{sec:nonconvex_strongly_concave_proof}

In this section, we establish the convergence guarantee of the N$^2$SGDA algorithm for solving the nonconvex–strongly-concave minimax optimization problem. 
The analysis in this section follows the counterpart for bilevel optimization, 
while we can omit the update on $\hat z_{t,k}$ to achieve the sharper complexity bounds for the minimax problem.

\subsection{The In-Expectation Convergence Guarantee}

In this subsection, we present the proofs for the in-expectation convergence guarantee of our N$^2$SGDA shown in Section \ref{sec:minimax_opt}.

\subsubsection{The Proof of Lemma \ref{lemma:core_recur}}
\begin{proof}
    According to Lemma~\ref{lemma:nonconvex_strongly_concave_prop}, the function $\Phi(x)$ is $(\ell + \kappa \ell)$-smooth and we have
    \begin{equation}
        \Phi(x_{t+1}) - \Phi(x_{t}) - \langle x_{t+1} - x_{t}, \nabla \Phi(x_{t}) \rangle \leq (\kappa + 1) \ell \Norm{x_{t+1} - x_{t}}^2.
    \end{equation}
    Plugging update $x_{t+1} - x_{t} = - \eta_x \frac{g_{x, t}}{\Norm{g_{x, t}}}$ into the above inequality yields
    \begin{align*}
        \Phi(x_{t+1}) \leq & \Phi(x_{t}) - \eta_x \left\langle \frac{g_{x, t}}{\Norm{g_{x, t}}}, \nabla \Phi(x_{t}) \right \rangle + \eta_{x}^2 (\kappa + 1) \ell \\
        \leq & \Phi(x_{t}) - \eta_x \Norm{\nabla \Phi(x_{t})} + 2 \eta_x \Norm{g_{x, t} - \nabla \Phi(x_{t})} + \eta_{x}^2 (\kappa + 1) \ell,
    \end{align*}
    where the second inequality follows from Lemma \ref{lemma:lower_bound}.
    By the triangle inequality, one has
    \begin{align*}
        \Norm{g_{x, t} - \nabla \Phi(x_{t})} \leq &\Norm{g_{x, t} - \nabla_x f(x_{t}, y_{t})} + \Norm{\nabla_x f(x_{t}, y_{t}) - \nabla \Phi(x_{t})} .
    \end{align*}
    Taking the expectation on both sides of the inequality, we have
    \begin{align*}
        \BE[\Norm{g_{x, t} - \nabla \Phi(x_{t})}] \leq \frac{2 \sigma}{M^{\frac{p-1}{p}}}  + \BE[\Norm{\nabla_x f(x_{t}, y_{t}) - \nabla \Phi(x_{t})}],
    \end{align*}
    where the above inequality is due to Lemma~\ref{lemma:bound_var}.
    Then it follows that
    \begin{align*}
    \begin{split}
        \BE[\Phi(x_{t+1})] 
        \leq & \BE[\Phi(x_{t})] - \eta_x \BE[\Norm{\nabla \Phi(x_{t})}] + 2\eta_x \BE[\Norm{g_{x, t} - \nabla \Phi(x_{t})}] + \eta_{x}^2 (\kappa + 1) \ell \\
        \leq & \BE[\Phi(x_{t})] - \eta_x \BE[\Norm{\nabla \Phi(x_{t})}] + \frac{4 \sigma \eta_x}{M^{\frac{p-1}{p}}} \\
        &+ 2 \eta_x \BE[\Norm{\nabla_x f(x_{t}, y_{t}) - \nabla \Phi(x_{t})}] + \eta_{x}^2 (\kappa + 1) \ell.
    \end{split}
    \end{align*}
    Summing the above inequality over $t=0,\dots,T-1$, we have
    \begin{align*}
    \begin{split}
        \BE[\Phi(x_{T})] \leq & \BE[\Phi(x_{0})] - \eta_x \sum_{t=0}^{T-1}\BE[\Norm{\nabla \Phi(x_{t})}] + \frac{4 \sigma \eta_x T}{M^{\frac{p-1}{p}}}\\
        &+ 2\eta_x \sum_{t=0}^{T-1} \BE[\Norm{\nabla_x f(x_{t}, y_{t}) - \nabla \Phi(x_{t})}] + \eta_{x}^2 (\kappa + 1)  \ell T \\
         \leq & \BE[\Phi(x_{0})] - \eta_x \sum_{t=0}^{T-1}\BE[\Norm{\nabla \Phi(x_{t})}] + \frac{4 \sigma \eta_x T}{M^{\frac{p-1}{p}}}\\
        &+ 2\eta_x \ell \sum_{t=0}^{T-1} \BE[\Norm{ y_{t} - y^*(x_t)}] + \eta_{x}^2 (\kappa + 1)  \ell T,
    \end{split}
    \end{align*}
    where the second inequality follows from the $\ell$-smoothness of the function $f$.
    Rearranging the above inequality and dividing both sides by $\eta_x T$, it follows that
    \begin{equation*}
    \begin{split}
        \frac{1}{T}\sum_{t=0}^{T-1}\BE[\Norm{\nabla \Phi(x_{t})}] \leq & \frac{\BE[\Phi(x_0) - \Phi(x_{T})]}{\eta_x T} + \frac{4 \sigma}{M^{\frac{p-1}{p}}} + \frac{2\ell \sum_{t=0}^{T-1} \BE[\Norm{ y_{t} - y^*(x_t)}]}{T}   + \eta_{x} (\kappa + 1)  \ell.
        \end{split}
    \end{equation*}
\end{proof}

\subsubsection{The Proof of Theorem \ref{thm:minimax_expectation}}

We first provide the upper bound of $\Norm{y_t - y^*(x_t)}$.
\begin{lem}
    Following the setting of Theorem \ref{thm:minimax_expectation}, for all $t= 0, \dots, T- 1$, we have
    \begin{align*}
        \BE[\Norm{y_t - y^*(x_t)}] = \fO \left(\frac{\epsilon}{\ell}\right).
    \end{align*}
    \label{lemma:y_bound_minimax}
\end{lem}
\begin{proof}
     We first specify the parameter setting for N$^2$SGDA as
\begin{align*}
       & \eta_x =  \frac{\epsilon}{\kappa \ell}, ~~~ \eta_{y,t} =  \min \left\{\frac{1}{400 \ell}, \frac{\ln (B_{t})}{\mu K}\right\},\\
        & \tau_{t,k} = \frac{\exp(- \eta_{y,t} \mu (1 + k/ 2) )R_y}{120 \eta_{y,t}},  ~~~ K = \tilde{\fO}\left(\kappa + \left( \frac{\ell^2 \sigma^2 }{\mu^2 \epsilon^2} \right)^{\frac{p}{2(p-1)}} \right),
    \end{align*}
    where we set
    \begin{align*}
        B_{t} =  \max\left\{2, \frac{\mu^2 K^{ \frac{2 (p - 1)}{p}} R_y^2}{5400^{\frac{2}{p}} \sigma^2 (\ln (B_{t}))^2 }\right\}, ~~~ \Delta_\Phi = \Phi(x_0) - \min_{x \in \BR^{d_x}} \Phi(x), ~~~ R_y^2 = R_0^2 + \frac{10 \epsilon^2}{\ell^2}.
    \end{align*}

    Recall that Algorithm \ref{alg:nsgd_minimax} set $y_t=\hat{y}_{t, K}$ in line~\ref{minimax_update_y}, where $\hat{y}_{t, K}$ can be regarded as the output of applying clipped stochastic gradient descent (lines \ref{inner_iter_start}--\ref{inner_iter_end}) to minimize the function $-f(x_t, y)$ with respect to~$y$. The updates use the stochastic gradient estimate 
    $-\nabla_y F(x_t, y; \xi'_t)$ for $-f(x_t, y)$. 
    We apply Lemma \ref{lemma:y_recur_strongly} with $h(y) = -f(x_t,y)$, $\sigma_h = \sigma$, $\ell_h= \ell$, and $\mu_h= \mu$ to obtain
    \begin{align}\label{eq:dis_y_t}
      \BE\left[\Norm{\hat{y}_{t, K} - y^*(x_{t})}^2\right] \leq & 2 \hat{R}_{y,t}^2 \exp\left(- \frac{K}{400 \kappa} \right)+ \frac{2 \cdot 5400^{\frac{2}{p}} \sigma^2 (\ln (B_K))^2 }{\mu^2 K^{ \frac{2 (p - 1)}{p}}} ,
    \end{align}
    for all ${\hat R}_{y,t}^2 \geq \BE[\Norm{\hat{y}_{t, 0} - y^*(x_{t})}^2]$.

    We then use induction to show $\BE[\Norm{\hat{y}_{t, 0} - y^*(x_{t})}^2]$ has the uniform upper bound, i.e.,
    \begin{equation}
         \BE[\Norm{\hat{y}_{t,0} - y^*(x_t)}^2] \leq R_y^2 \coloneqq \frac{10 \epsilon^2}{\ell^2} + R_0^2.
        \label{minimax_y_induction}
    \end{equation} 
    holds for all $t = 0, \ldots, T-1$. 
For the induction base $t=0$, it follows from the setting of $R_0$ and $R_y$ that 
\begin{align*}
\BE[\Norm{\hat{y}_{0,0} - y^*(x_0)}^2] \leq R_0^2 \leq R_y^2.
\end{align*}
For the induction step, we suppose \eqref{y_induction} holds for all~$t = 0, \dots, T'-1$. In the case of $t = T'$, we have
\begin{align}\label{eq:induction-dist-01}
\begin{split}
     \BE\left[\Norm{\hat{y}_{T', 0} - y^*(x_{T'}) }^2\right] 
     \leq & 2 \BE\left[\Norm{\hat{y}_{T'-1, K} - y^*(x_{T'-1})}^2\right] + 2 \BE\left[\Norm{y^*(x_{T'-1}) - y^*(x_{T'})}^2\right] \\
        \leq & 2 \BE\left[\Norm{\hat{y}_{T'-1, K} - y^*(x_{T'-1})}^2\right] + 2 \kappa^2 \Norm{x_{T'-1} - x_{T'}}^2 \\
        = &  2 \BE\left[\Norm{\hat{y}_{T'-1, K} - y^*(x_{T'-1})}^2\right] + 2 \kappa^2 \eta_x^2. 
\end{split}    
\end{align}
where the first inequality follows the Young's inequality, the second inequality is due to Lemma~\ref{lemma:nonconvex_strongly_concave_prop}, and the last inequality is based on line \ref{line:minimax_outer-end} of Algorithm \ref{alg:nsgd_minimax}.

Following the derivation of \eqref{eq:dis_t}, we have
\begin{align}\label{eq:induction-T-2}
    \BE\left[\Norm{\hat{y}_{T'-1, K} - y^*(x_{T'-1})}^2\right] \leq & 2 \hat{R}_{y,T'-1}^2 \exp\left(- \frac{K}{400 \kappa} \right)+ \frac{2 \cdot 5400^{\frac{2}{p}} \sigma^2 (\ln (B_{T'-1}))^2 }{\mu^2 K^{ \frac{2 (p - 1)}{p}}} ,
\end{align}
for all ${\hat R}_{y,T'-1}^2 \geq \BE[\Norm{\hat{y}_{T'-1, 0} - y^*(x_{T'-1})}^2]$. 
In addition, the induction hypothesis (\ref{minimax_y_induction}) implies we can 
apply \eqref{eq:induction-T-2} by
taking ${\hat R}_{y,T'-1}^2=R_y^2$ to achieve
\begin{align*}
   \BE\left[\Norm{\hat{y}_{T'-1, K} - y^*(x_{T'-1})}^2\right] \leq & 2 R_y^2 \exp\left(- \frac{K}{400 \kappa} \right)+ \frac{2 \cdot 5400^{\frac{2}{p}} \sigma^2 (\ln (B_{T'-1}))^2 }{\mu^2 K^{ \frac{2 (p - 1)}{p}}} .
\end{align*}
Combining above inequality with \eqref{eq:induction-dist-01}, we have
\begin{align*}
    \BE\left[\Norm{\hat{y}_{T',0} - y_{\lambda}^*(x_{T'})}^2\right] 
\leq & 4 R_y^2 \exp\left(- \frac{K}{400 \kappa} \right)+ \frac{4 \cdot 5400^{\frac{2}{p}} \sigma^2 (\ln (B_{T'-1}))^2 }{\mu^2 K^{ \frac{2 (p - 1)}{p}}}  + 2 \kappa^2 \eta_x^2 \\
\leq & \frac{8 \epsilon^2}{\ell^2}  + 2 \kappa^2 \eta_x^2 
\leq  \frac{10 \epsilon^2}{\ell^2}
    \leq  R_y^2, 
\end{align*}
where the second inequality is based on the setting of $K$,
the third inequality is based on the setting of $\eta_x$,
and the last inequality is based on the definition of $R_y^2$.
This complete the induction.

Therefore, we can apply Lemma \ref{lemma:y_recur_strongly} on $\Norm{\hat{y}_{t, K} - y^*(x_{t})}^2$ with $\hat{R}_{y,t} = R_y$ to obtain
\begin{align*}
\begin{split}    
   & \BE\left[\Norm{{y}_{t} - y^*(x_{t})}^2\right] = \BE\left[\Norm{\hat{y}_{t, K} - y^*(x_{t})}^2\right] \\ 
   \leq &  2 R_y^2 \exp\left(- \frac{K}{400 \kappa} \right)+ \frac{2 \cdot 5400^{\frac{2}{p}} \sigma^2 (\ln (B_{t}))^2 }{\mu^2 K^{ \frac{2 (p - 1)}{p}}}
   = \fO\left(\frac{\epsilon^2}{\ell^2 }\right).
\end{split}
\end{align*}
Consequently, we use Jensen's inequality to achieve
\begin{align*}   
   & \BE\left[\Norm{{y}_{t} - y^*(x_{t})}\right] 
   \leq \sqrt{\BE\left[\Norm{{y}_{t} - y^*(x_{t})}^2\right]}   
    = \fO \left(\frac{\epsilon}{\ell}\right).
\end{align*}    
\end{proof}

We then provide the proof of Theorem \ref{thm:minimax_expectation}.

\begin{proof}
We follow the parameter settings in the proof of Lemma \ref{lemma:y_bound_minimax} and additionally set
\begin{align*}
    T =  \fO\left(\frac{\Delta_\Phi \kappa \ell}{\epsilon^2}\right) \qquad\text{and}\qquad  M = \left(\frac{\sigma}{\epsilon}\right)^{\frac{p}{p-1}},
\end{align*}
Recall that Lemma \ref{lemma:core_recur} implies that
 \begin{align}
 \begin{split}
    \frac{1}{T}\sum_{t=0}^{T-1}\BE[\Norm{\nabla \Phi(x_{t})}] \leq & \frac{\BE[\Phi(x_0) - \Phi(x_{T})]}{\eta_x T} + \frac{4 \sigma}{M^{\frac{p-1}{p}}} \\
         & + \frac{2\ell \sum_{t=0}^{T-1} \BE[\Norm{ y_{t} - y^*(x_t)}]}{T} + \eta_{x} (\kappa + 1)  \ell.
         \label{minimax_main_grad}
    \end{split}
 \end{align}   
In the remainder of this proof, we will show that each term on the right-hand side of the \eqref{minimax_main_grad} can be upper bounded by $\fO(\epsilon)$ in expectation, which means that the output~$\hat{x}$ is an $\fO(\epsilon)$-stationary point of $\Phi(x)$.

   Based on the settings of $\eta_x$, $T$ and $M$, we have
    \begin{align}\label{minimax_2}
        \frac{\BE[\Phi(x_0) - \Phi(x_{T})]}{\eta_x T} = \fO(\epsilon), ~~~ \frac{4 \sigma}{M^{\frac{p-1}{p}}} = \fO(\epsilon), ~~~ {\rm and} ~~~ \eta_{x} (\kappa + 1)  \ell= \fO(\epsilon).
    \end{align}
   According to Lemma \ref{lemma:y_bound_minimax}, we have
   \begin{align}\label{minimax_3}
       \BE[\Norm{y_t - y^*(x_t)}] = \fO\left(\frac{\epsilon}{\ell}\right).
   \end{align}
   for all $t=0, \dots, T-1$.
    Combining equations (\ref{minimax_main_grad}), (\ref{minimax_2}), and (\ref{minimax_3}), we have
    \begin{align*}
        \BE[\Norm{\nabla \Phi(\hat{x})}] = \frac{1}{T}\sum_{t=0}^{T-1}\BE[\Norm{\nabla \Phi(x_{t})}] = \fO(\epsilon),
    \end{align*}
    where the first step is due to $\hat{x}$ is uniformly sampled from $\{x_t\}_{t=1}^T$. In addition, the total SFO complexity is 
    \begin{align*}
        T M + T K = \tilde\fO\left( \frac{\Delta_\Phi \kappa^{\frac{2p-1}{p-1}} \ell \sigma^{\frac{p}{p-1}}}{\epsilon^\frac{3p-2}{p-1}}  + \frac{\Delta_\Phi \kappa^2 \ell}{\epsilon^2} \right). 
    \end{align*}
\end{proof}

\subsection{The High-Probability Convergence Guarantee}

In this subsection, we present the proofs for the high-probability convergence guarantee of the  N$^2$SGDA shown in Section \ref{sec:minimax_opt}.
We denote the natural filtration of our method by $\fF_t \coloneqq \sigma(g_{x, 0}, \dots, g_{x,t} )$.

\subsubsection{Technical Lemmas}

We first establish the following lemma to bound the gradient norm of $\Phi$.

\begin{lem}
Let $\delta \in (0, 1)$. Under Assumption \ref{asm:minimax}, the iterates generated by  N$^2$SGDA satisfy
   \begin{align*}
          \frac{1}{T} \sum_{t=0}^{T-1}  \Norm{\nabla \Phi(x_t)} \leq & \frac{2\Delta_\Phi}{\eta_x T} +  \sum_{t=1}^T \frac{4\BE[\Norm{g_{x, t} - \nabla_x f(x_t, y_t)}] }{T} +  \sum_{t=1}^T \frac{4 \Norm{ \nabla_x f(x_t, y_t) - \nabla \Phi(x_t)}}{T}   \\
         & + 2 \eta_x (\kappa + 1) \ell  + 12 \left( \frac{\Norm{\nabla \Phi(x_0)}}{T}  + 2 \eta_x (\kappa + 1) \ell \right) \ln\left(\frac{2}{\delta}\right)
     \end{align*}
     with probability at least $1 - \delta / 2$.
     \label{lemma:high_prob_main_recur}
\end{lem}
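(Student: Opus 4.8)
The plan is to transcribe the argument behind the bilevel high-probability bound in~\eqref{high_prob_core_bound} into the present minimax setting, with $\fL_\lambda^*$ replaced by $\Phi$ and the Lipschitz constant $D_3$ replaced by $(\kappa+1)\ell$ from Proposition~\ref{lemma:nonconvex_strongly_concave_prop}. First I would introduce the cosine similarity $\nu_t \coloneqq \langle \nabla\Phi(x_t), g_{x,t}\rangle / (\Norm{\nabla\Phi(x_t)}\Norm{g_{x,t}})$ and start from the intermediate descent estimate inside the proof of Lemma~\ref{lemma:core_recur}, namely $\Phi(x_{t+1}) \leq \Phi(x_t) - \eta_x\Norm{\nabla\Phi(x_t)}\nu_t + \eta_x^2(\kappa+1)\ell$, which holds before the cosine is lower-bounded because the normalized step gives $\Norm{x_{t+1}-x_t} = \eta_x$. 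Telescoping from $t=0$ to $T-1$ and using $\Phi(x_0) - \Phi(x_T) \leq \Delta_\Phi$ yields $\sum_{t} \eta_x\Norm{\nabla\Phi(x_t)}\nu_t \leq \Delta_\Phi + \eta_x^2(\kappa+1)\ell T$.

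The core of the argument is the concentration step. I would set $\psi_t \coloneqq \BE[\nu_t\mid\fF_{t-1}]$ and define $D_t \coloneqq -\eta_x(\nu_t - \psi_t)\Norm{\nabla\Phi(x_t)}$, which is a martingale difference sequence with respect to $\{\fF_t\}$ since $x_t$ (hence $\Norm{\nabla\Phi(x_t)}$ and $\psi_t$) is $\fF_{t-1}$-measurable while $g_{x,t}$ is drawn at step $t$. Because $\nu_t\in[-1,1]$ forces $|\nu_t-\psi_t|\leq 2$, the choice $\sigma_t^2 \coloneqq 4\eta_x^2\Norm{\nabla\Phi(x_t)}^2$ verifies $\BE[\exp(D_t^2/\sigma_t^2)\mid\fF_{t-1}] \leq \BE[\exp((\nu_t-\psi_t)^2/4)\mid\fF_{t-1}] \leq e$, so Lemma~\ref{lemma:hp_concentrate} applies. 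Invoking it with confidence $\delta/2$ and free parameter $\chi$ gives, with probability at least $1-\delta/2$, an inequality of the form $\sum_t \eta_x(\psi_t - 3\chi\eta_x\Norm{\nabla\Phi(x_t)})\Norm{\nabla\Phi(x_t)} \leq \Delta_\Phi + \eta_x^2(\kappa+1)\ell T + \chi^{-1}\ln(2/\delta)$.

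Next I would tame the quadratic feedback term $3\chi\eta_x\Norm{\nabla\Phi(x_t)}^2$. Using smoothness together with the unit-norm steps, $\Norm{\nabla\Phi(x_t)} \leq \Norm{\nabla\Phi(x_0)} + (\kappa+1)\ell\, t\eta_x \leq \Norm{\nabla\Phi(x_0)} + (\kappa+1)\ell T\eta_x$, so choosing $\chi = 1/\bigl(6(\eta_x\Norm{\nabla\Phi(x_0)} + C_T(\kappa+1)\ell)\bigr)$ with $C_T = \eta_x^2 T$ forces $3\chi\eta_x\Norm{\nabla\Phi(x_t)} \leq \tfrac12$, reducing the left side to $\eta_x\sum_t(\psi_t - \tfrac12)\Norm{\nabla\Phi(x_t)}$. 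I would then apply Lemma~\ref{lemma:lower_bound} with $a = \nabla\Phi(x_t)$ and $b = g_{x,t}$ to obtain $\psi_t\Norm{\nabla\Phi(x_t)} \geq \Norm{\nabla\Phi(x_t)} - 2\BE[\Norm{g_{x,t} - \nabla\Phi(x_t)}\mid\fF_{t-1}]$, and split the residual by the triangle inequality as $\Norm{g_{x,t}-\nabla\Phi(x_t)} \leq \Norm{g_{x,t} - \nabla_x f(x_t,y_t)} + \Norm{\nabla_x f(x_t,y_t) - \nabla\Phi(x_t)}$, the second term being $\fF_{t-1}$-measurable. Substituting, dividing through by $\eta_x T/2$, and expanding $\chi^{-1}$ then produces the claimed bound.

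I expect the main obstacle to be precisely this self-referential concentration step: the variance proxy $\sigma_t^2$ scales with $\Norm{\nabla\Phi(x_t)}^2$, so Lemma~\ref{lemma:hp_concentrate} feeds a $\chi$-weighted $\Norm{\nabla\Phi(x_t)}^2$ term back onto the left-hand side, which can be absorbed only after first establishing the a priori growth bound on $\Norm{\nabla\Phi(x_t)}$ and then calibrating $\chi$ against $C_T(\kappa+1)\ell$. Matching the exact constants — in particular the standalone $2\eta_x(\kappa+1)\ell$ factor, which traces back to using the loose smoothness coefficient $(\kappa+1)\ell$ from Lemma~\ref{lemma:core_recur} rather than the sharp $(\kappa+1)\ell/2$ — is the only delicate bookkeeping; the remainder is a direct transcription of the bilevel proof.
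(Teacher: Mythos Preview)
Your proposal is correct and follows essentially the same approach as the paper's own proof: both introduce the cosine $\nu_t$, use the descent inequality from smoothness, apply Lemma~\ref{lemma:hp_concentrate} to the martingale difference $D_t = -\eta_x(\nu_t-\psi_t)\Norm{\nabla\Phi(x_t)}$ with $\sigma_t^2 = 4\eta_x^2\Norm{\nabla\Phi(x_t)}^2$, absorb the quadratic term via the a priori smoothness bound $\Norm{\nabla\Phi(x_t)} \leq \Norm{\nabla\Phi(x_0)} + (\kappa+1)\ell\,t\eta_x$ and the choice $\chi = 1/(6(\eta_x\Norm{\nabla\Phi(x_0)} + C_T(\kappa+1)\ell))$, then finish with Lemma~\ref{lemma:lower_bound} and the triangle split. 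The only cosmetic difference is that the paper uses $\lambda$ for the free parameter in Lemma~\ref{lemma:hp_concentrate} while you call it $\chi$.
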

\begin{proof}
     Let $\nu_t \coloneqq \frac{\langle \nabla \Phi(x_t), g_{x, t} \rangle}{\Norm{\nabla \Phi(x_t)} \Norm{g_{x, t}}}$.
     According to Lemma~\ref{lemma:core_recur}, we have
     \begin{align*}
          \Phi(x_{t+1}) \leq & \Phi(x_{t}) - \eta_x \Norm{\nabla \Phi(x_{t})} \nu_{t} + \eta_{x}^2 (\kappa + 1) \ell.
     \end{align*}
     Summing over above inequality with $t=0$ to $T-1$, we obtain
     \begin{align}\label{nu_upper_1}
         \sum_{t=0}^{T-1} \eta_x \nu_t \Norm{\nabla \Phi(x_t)} \leq \Delta_{\Phi} + \eta_x^2 (\kappa + 1) \ell T.
     \end{align}
     Let $\psi_t \coloneqq \BE[\nu_t \mid \fF_{t-1}]$ and denote $\{\Theta_t \coloneqq -\eta_x (\nu_t - \psi_t)\Norm{\nabla \Phi(x_t)}\}_{t \in \BN}$ as the martingale difference sequence with respect to $\{\fF_t\}_{t \in \BN}$.
     Note that $|\nu_t|\leq 1$, then we have
     \begin{align*}
         \exp\left( \frac{\Theta_t^2}{4 \eta_x^2 \Norm{\nabla \Phi(x_t)}^2}\right) = \exp\left( \frac{(\nu_t -\psi_t)^2}{4}\right) \leq {\rm e},
     \end{align*}

 Consequently, we apply Lemma \ref{lemma:hp_concentrate} with $\Theta_t= -\eta_x (\nu_t - \psi_t)\Norm{\nabla \Phi(x_t)}$, $\sigma_t = 2 \eta_x \Norm{\nabla \Phi(x_t)}$, and $\hat\delta=\delta/2$ to obtain that
    \begin{align*} 
        \sum_{t=1}^T \eta_x (\psi_t - \nu_t)\Norm{\nabla \Phi(x_t)} \leq & 3 \chi \eta_x^2 \sum_{t=1}^T  \Norm{\nabla \Phi(x_t)}^2 + \frac{1}{\chi} \ln \left(\frac{2}{\delta}\right)
    \end{align*}
    holds with probability at least $1 - \delta / 2$  for all $\chi > 0$.
    Rearranging above inequality, we have
    \begin{align}\label{eq:high-L-lambda_1}
    \begin{split}        
         & \sum_{t=0}^{T-1} \eta_x (\psi_t - 3 \chi \eta_x \Norm{\nabla \Phi(x_t)}) \Norm{\nabla \Phi(x_t)} \\
         \leq & \sum_{t=1}^T \eta_x \nu_t\Norm{\nabla \Phi(x_t)} + \frac{1}{\chi} \ln \left(\frac{2}{\delta}\right) 
         \leq \Delta_\Phi + \eta_x^2 (\kappa + 1) \ell T + \frac{1}{\chi}\ln\left(\frac{2}{\delta}\right)
    \end{split}
     \end{align}
    holds with probability at least $1 - \delta / 2$, where the last step is based on \eqref{nu_upper_1}. In addition, the smoothness of $\Phi$ shown in Lemma \ref{lemma:nonconvex_strongly_concave_prop} implies
    \begin{equation}\label{Phi_smooth}
      \Norm{\nabla \Phi(x_t)} \leq \Norm{\nabla \Phi(x_0)} + (\kappa + 1) \ell t \eta_x.    
     \end{equation}

    Then with probability at least $1 - \delta / 2$, we have
    \begin{align}\label{minimax_high_prob_interm_1}
    \begin{split}        
        & \sum_{t=0}^{T-1} \eta_x \left(\psi_t - \frac{1}{2}\right) \Norm{\nabla \Phi(x_t)} 
        \\
        \leq & \sum_{t=0}^{T-1} \eta_x \left(\psi_t - \frac{\Norm{\nabla \Phi(x_t)}}{2(\Norm{\nabla \Phi(x_0)} + (\kappa + 1) \ell \eta_x T)}\right) \Norm{\nabla \Phi(x_t)} \\ 
        \leq & \Delta_\Phi + \eta_x^2 (\kappa + 1) \ell T + 6 (\eta_x \Norm{\nabla \Phi(x_0)} + \eta_x^2 T (\kappa + 1) \ell)\ln\left(\frac{2}{\delta}\right),
    \end{split}
    \end{align}
    where the first inequality follows from (\ref{Phi_smooth}) and the second inequality is due to \eqref{eq:high-L-lambda_1} with
     $\chi \coloneqq {1}/{(6 (\eta_x \Norm{\nabla \Phi(x_0)} + \eta_x^2 T (\kappa + 1) \ell))}$.
     
By applying Lemma~\ref{lemma:lower_bound} with $a=\nabla \Phi(x_t)$ and $b=g_{x,t}$, we get
     \begin{align}
         \psi_t \Norm{\nabla \Phi(x_t)} \geq \Norm{\nabla \Phi(x_t)} - 2 \BE[\Norm{g_{x, t} - \nabla \Phi(x_t)}].
         \label{minimax_interm_2}
     \end{align}
     which implies
    \begin{align*}
        &\sum_{t=0}^{T-1}  \frac{\eta_x}{2} \Norm{\nabla \Phi(x_t)} - 2 \eta_x \BE[\Norm{g_{x,t} - \nabla \Phi(x_t)}] \\
       \leq & \sum_{t=0}^{T-1} \eta_x \left(\psi_t - \frac{1}{2}\right) \Norm{\nabla \Phi(x_t)} 
        \\
        \leq & \Delta_\Phi + \eta_x^2 (\kappa + 1) \ell T + 6 (\eta_x \Norm{\nabla \Phi(x_0)} + \eta_x^2 T (\kappa + 1) \ell)\ln\left(\frac{2}{\delta}\right),
    \end{align*}
     where the last step is based on (\ref{minimax_high_prob_interm_1}).
    Rearranging the above inequality, we have
     \begin{align*}
         \frac{1}{2} \sum_{t=0}^{T-1} \eta_{x}  \Norm{\nabla \Phi(x_t)} \leq & \Delta_\Phi + 2 \eta_x \sum_{t=0}^{T-1} \BE[\Norm{g_{x, t} - \nabla \Phi(x_t)}] + \eta_x^2 (\kappa + 1) \ell T \\
         & + 6 ( \eta_x \Norm{\nabla \Phi(x_0)} + \eta_x^2 T (\kappa + 1) \ell) \ln\left(\frac{2}{\delta}\right) .
     \end{align*}
     with probability at least $1 - \delta/ 2$.
     Dividing both sides by ${\eta_x T}/{2}$ finishes the proof.
\end{proof}
Under an appropriate choice of $\eta_x$, we can derive the following high-probability lemma.
\begin{lem}
Let $\delta \in (0, 1)$. Under Assumption \ref{asm:minimax}, we choose $\eta_x = \sqrt{{\Delta_\Phi}/{(\varkappa + 1) \ell T}}$ for N$^2$SGDA.
Then, with probability at least $1 - \delta / 2$, the iterates generated by the algorithm satisfy
     \begin{align*}
         \frac{1}{T} \sum_{t=0}^{T-1}   \Norm{\nabla \Phi(x_t)} 
         \leq & \left(4 + 50  \ln\left(\frac{2}{\delta}\right) \right) \frac{ \sqrt{\Delta_\Phi (\kappa + 1) \ell}}{\sqrt{T}}  +  \sum_{t=1}^T \frac{4 \ell \Norm{ y_t - y^*(x_t)}}{T} +  \frac{8 \sigma}{M^{\frac{p-1}{p}}}.
    \end{align*}
    \label{lemma:high_prob_main_res}
\end{lem}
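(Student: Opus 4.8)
The plan is to start directly from the high-probability recursion established in Lemma~\ref{lemma:high_prob_main_recur}, which already holds with probability at least $1 - \delta/2$ for an arbitrary stepsize and which has absorbed all of the probabilistic content (the $\nu_t$-concentration via Lemma~\ref{lemma:hp_concentrate}). What remains is therefore purely deterministic: I would (i) bound each of the three error terms on the right-hand side of that recursion, and then (ii) substitute the prescribed choice $\eta_x = \sqrt{\Delta_\Phi/((\kappa+1)\ell T)}$ and collect constants. No fresh high-probability argument is needed.

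For the error terms, I would proceed as follows. The minibatch deviation $\BE[\Norm{g_{x,t} - \nabla_x f(x_t,y_t)}]$ is controlled by Lemma~\ref{lemma:bound_var}: since $g_{x,t} = \frac{1}{M}\sum_{i=1}^M \nabla_x F(x_t,y_t;\xi_i)$ averages $M$ samples that are conditionally independent of $\fF_{t-1}$, Assumption~\ref{asm:pBCM} yields $\BE[\Norm{g_{x,t} - \nabla_x f(x_t,y_t)}] \leq 2\sigma/M^{(p-1)/p}$, and summing the $4/T$-weighted terms over $t$ produces the $8\sigma/M^{(p-1)/p}$ contribution. The bias term I would rewrite using Proposition~\ref{lemma:nonconvex_strongly_concave_prop}, which gives $\nabla\Phi(x_t) = \nabla_x f(x_t, y^*(x_t))$; then $\ell$-smoothness of $f$ (Assumption~\ref{asm:minimax}) gives $\Norm{\nabla_x f(x_t,y_t) - \nabla\Phi(x_t)} \leq \ell\,\Norm{y_t - y^*(x_t)}$, which is exactly the second term of the target bound. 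Finally, the initial gradient $\Norm{\nabla\Phi(x_0)}$ is bounded via the descent inequality of Lemma~\ref{lemma:convex_smooth_prop}: because $\Phi$ is $(\kappa+1)\ell$-smooth, one has $\Norm{\nabla\Phi(x_0)}^2 \leq 2(\kappa+1)\ell\,(\Phi(x_0) - \inf_x \Phi(x)) = 2(\kappa+1)\ell\,\Delta_\Phi$.

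It then remains to substitute the stepsize. Into the two $\delta$-free terms $2\Delta_\Phi/(\eta_x T)$ and $2\eta_x(\kappa+1)\ell$: each collapses to $2\sqrt{\Delta_\Phi(\kappa+1)\ell}/\sqrt{T}$, so together they give the leading coefficient $4$. For the logarithmic term $12\big(\Norm{\nabla\Phi(x_0)}/T + 2\eta_x(\kappa+1)\ell\big)\ln(2/\delta)$, I would use $1/T \leq 1/\sqrt{T}$ to write $\Norm{\nabla\Phi(x_0)}/T \leq \sqrt{2\,\Delta_\Phi(\kappa+1)\ell}/\sqrt{T}$, combine with $2\eta_x(\kappa+1)\ell = 2\sqrt{\Delta_\Phi(\kappa+1)\ell}/\sqrt{T}$, and bound the prefactor by $12(\sqrt{2}+2) < 50$. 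Assembling all pieces reproduces exactly the claimed inequality, valid with probability at least $1 - \delta/2$.

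I do not expect a genuine obstacle: every ingredient is supplied by the preceding lemmas, and the argument is essentially stepsize substitution and constant bookkeeping. The only point requiring mild care is the merge $1/T \leq 1/\sqrt{T}$ (valid for $T \geq 1$), which lets the initial-gradient contribution fold into the $O(1/\sqrt{T})$ rate instead of surviving as a separate $O(1/T)$ term, together with checking that the rounded constants $4$ and $50$ indeed dominate the raw coefficients $12(\sqrt{2}+2)$.
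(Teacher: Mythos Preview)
Your proposal is correct and follows essentially the same approach as the paper's proof: both start from Lemma~\ref{lemma:high_prob_main_recur}, bound the conditional minibatch deviation via Lemma~\ref{lemma:bound_var}, convert $\Norm{\nabla_x f(x_t,y_t)-\nabla\Phi(x_t)}$ to $\ell\Norm{y_t-y^*(x_t)}$ using smoothness and Proposition~\ref{lemma:nonconvex_strongly_concave_prop}, control $\Norm{\nabla\Phi(x_0)}$ via Lemma~\ref{lemma:convex_smooth_prop}, substitute the prescribed $\eta_x$, and absorb the $1/T$ term into $1/\sqrt{T}$ to reach the coefficient $12(\sqrt{2}+2)<50$. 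The only cosmetic difference is that the paper makes the conditioning on $\fF_{t-1}$ explicit when invoking Lemma~\ref{lemma:bound_var}, whereas you state it implicitly.
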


\begin{proof}
    Note that $x_t$ and $y_t$ are $\fF_{t-1}$ measurable and $\xi_1^t, \ldots, \xi_M^t$ are independent of $\fF_{t-1}$,
    then we have $\BE[\Norm{g_{x, t} - \nabla_x f(x_t, y_t)} \mid \fF_{t-1}] = \BE[\Norm{g_{x, t} - \nabla_x f(x_t, y_t)} \mid x_t, y_t]$. 
    Hence, applying Lemma~\ref{lemma:bound_var} implies
    \begin{align*}
        \BE[\Norm{g_{x, t} - \nabla_x f(x_t, y_t)} \mid \fF_{t-1}] = \BE[\Norm{g_{x, t} - \nabla_x f(x_t, y_t)} \mid x_t, y_t] \leq \frac{2 \sigma}{M^{\frac{p-1}{p}}}.
    \end{align*}
    Plugging above result into Lemma~\ref{lemma:high_prob_main_recur} yields
    \begin{align*}
          \frac{1}{T} \sum_{t=0}^{T-1}   \Norm{\nabla \Phi(x_t)} \leq & \frac{2\Delta_\Phi}{\eta_x T} +  \frac{8 \sigma}{M^{\frac{p-1}{p}}} +  \sum_{t=1}^T \frac{4 \Norm{ \nabla_x f(x_t, y_t) - \nabla \Phi(x_t)}}{T}   \\
         & + 2 \eta_x (\kappa + 1) \ell  + 12 \left( \frac{\Norm{\nabla \Phi(x_0)}}{T}  + 2 \eta_x (\kappa + 1) \ell \right) \ln\left(\frac{2}{\delta}\right).
    \end{align*}
    We choose $\eta_x = \sqrt{{\Delta_\Phi}/{(\kappa + 1) \ell T}}$, then with probability at least $1 - \delta / 2$, it holds
    \begin{align*}
         \frac{1}{T} \sum_{t=0}^{T-1}   \Norm{\nabla \Phi(x_t)} \leq & \frac{4 \sqrt{\Delta_\Phi (\kappa + 1) \ell}}{\sqrt{T}} +  \frac{8 \sigma}{M^{\frac{p-1}{p}}} +  \sum_{t=0}^{T-1} \frac{4 \Norm{ \nabla_x f(x_t, y_t) - \nabla \Phi(x_t)}}{T}   \\
         &   + 12 \left( \frac{\Norm{\nabla \Phi(x_0)}}{T}  + \frac{2 \sqrt{\Delta_\Phi (\kappa + 1) \ell}}{\sqrt{T}} \right) \ln\left(\frac{2}{\delta}\right) \\
         \leq & \left(4 + 50  \ln\left(\frac{2}{\delta}\right) \right) \frac{ \sqrt{\Delta_\Phi (\kappa + 1) \ell}}{\sqrt{T}}  +  \sum_{t=0}^{T-1} \frac{4 \ell \Norm{ y_t - y^*(x_t)}}{T}  +  \frac{8 \sigma}{M^{\frac{p-1}{p}}},
    \end{align*}
    where we use $\Norm{\nabla \Phi(x_0)} \leq \sqrt{2 (\kappa+1) \ell \Delta_\Phi}$ to derive the last inequality.
\end{proof}

\subsubsection{The Proof of Theorem \ref{thm:minimax_high_prob}}
We first provide lemmas to upper bound the term $\Norm{y_t- y^*(x_t)}$ with high probability.
\begin{lem}
Following the setting of Theorem \ref{thm:minimax_high_prob}, for all $t=0, \dots, T-1$, we have
\begin{align}
    \Norm{{y}_{t} - y^*(x_{t})} = \fO\left(\frac{\epsilon}{\ell}\right),
\end{align}    
with probability at least $1 - \delta / 2$.
\label{lemma:minimax_y_bound_high_prob}
\end{lem}
\begin{proof}
We first specify the following hyperparameters for N$2$SGDA:
\begin{align*}
       & \eta_x =  \sqrt{\frac{\Delta_\Phi}{(\kappa + 1) \ell T}},~~~\eta_{y,t} =  \min \left\{\frac{1}{400 \ell \ln\left({8 T K}/{\delta}\right)}, \frac{\ln (B_{t})}{\mu K}\right\},\\
        & \tau_{t,k} = \frac{\exp(- \eta_{y,t} \mu (1 + k/ 2) )R_y}{120 \eta_{y,t} \ln\left({8 T K}/{\delta}\right)},~~~ K = \tilde{\fO}\left(\kappa + \left( \frac{\ell^2 \sigma^2 }{\mu^2 \epsilon^2} \right)^{\frac{p}{2(p-1)}} \right),
    \end{align*}
    where we set
    \begin{align*}
        & B_{t} =  \max\left\{2, \frac{\mu^2 K^{ \frac{2 (p - 1)}{p}} R_y^2}{5400^{\frac{2}{p}} \sigma^2 (\ln (B_{t}))^2 \left(\ln\left({8 T K}/{\delta}\right)\right)^{\frac{2(p-1)}{p}} }\right\},  \\
        &  \Delta_\Phi = \Phi(x_0) - \min_{x \in \BR^{d_x}} \Phi(x),~~~R_y^2 = R_0^2 + \frac{10 \epsilon^2}{\ell^2},
    \end{align*}

  Recall that Algorithm \ref{alg:nsgd_minimax} set $y_t=\hat{y}_{t, K}$ in line~\ref{minimax_update_y}, where $\hat{y}_{t, K}$ can be regarded as the output of applying clipped stochastic gradient descent (lines \ref{inner_iter_start}--\ref{inner_iter_end}) to minimize the function $-f(x_t, y)$ with respect to~$y$. The updates use the stochastic gradient estimate 
    $-\nabla_y F(x_t, y; \xi'_t)$ for $-f(x_t, y)$. 
    We apply Lemma \ref{lemma:high_prob_strongly_convex_converg} with $h(y) = -f(x_t,y)$, $\sigma_h = \sigma$, $\ell_h= \ell$,  $\mu_h= \mu$, and $\hat\delta = \delta/(2T)$ to obtain
    \begin{align}\label{eq:minimax_dis_t_1}
    \begin{split}    
          \Norm{\hat{y}_{t,K} - y^*(x_t)}^2
    \leq & 2 {\hat R}_{y,t}^2 \exp\left(- \frac{\mu K}{400 \ell  \ln\left({8 T K}/{\delta}\right)}\right)  \\
    & + \frac{2 \cdot 5400^{\frac{2}{p}} \sigma^2 (\ln (B_{y,t}))^2 \left(\ln\left({8 T K }/{\delta}\right)\right)^{\frac{2(p-1)}{p}} }{ \mu^2 K^{ \frac{2 (p - 1)}{p}}} ,
    \end{split}
    \end{align}
    for all given ${\hat R}_{y,t}^2 \geq \Norm{\hat{y}_{t, 0} - y^*(x_{t})}^2$ with probability $1-\delta / (2T)$ for all $t=0, 1, \dots, T-1$. 

    We then use induction to show that $\Norm{\hat{y}_{0, 0} - y^*(x_{0})}^2, \dots, \Norm{\hat{y}_{t, 0} - y^*(x_{t})}^2$ for all $t=0,\dots, T-1$ have a uniform upper bound with high probability.
    In particular, we define the event $E_t$ as
    \begin{equation}
    \Norm{\hat{y}_{l,0} - y^*(x_l)}^2 \leq R_y^2 \coloneqq R_0^2 + \frac{10 \epsilon^2}{\ell^2}
    \label{minimax_y_induction_1}
    \end{equation}
    holds for all $l = 0, 1, \dots, t$ and target to show that $\BP(E_t) \geq 1 - t \delta / (2T)$  for all $t = 0, \ldots, T-1$. 

For the induction base $t=0$, we follow the choice of $R_0$ to achieve
\begin{align*}
\Norm{\hat{y}_{0,0} - y^*(x_0)}^2 \leq  R_0^2   
\end{align*}
that is, $\BP(E_0) = 1$.
For the induction step, we suppose that $\BP(E_{t}) \geq 1 - t \delta / (2 T)$ holds for all $t = 0, \dots,  T' -1$. In the case of $t= T'$, we follow the derivation of \eqref{eq:induction-dist-01} to achieve
\begin{align}\label{eq:minimax_induction-dist-001}
\begin{split}
    \Norm{\hat{y}_{T',0} - y^*(x_{T'})}^2 \leq & 2 \Norm{\hat{y}_{T'-1, K} - y^*(x_{T'-1})}^2 + 2 \kappa^2\eta_x^2,
\end{split}    
\end{align}
Following the derivation of \eqref{eq:minimax_dis_t_1}, it holds
\begin{align}\label{eq:minimax_induction-T-11}
\begin{split}
    \Norm{\hat{y}_{T'-1,K} - y^*(x_{T'-1})}^2
    \leq & 2 {\hat R}_{y,T'-1}^2 \exp\left(- \frac{\mu K}{400 \ell  \ln\left({8 T K}/{\delta}\right)}\right)  \\
    & + \frac{2 \cdot 5400^{\frac{2}{p}} \sigma^2 (\ln (B_{y,T'-1}))^2 \left(\ln\left({8 T K }/{\delta}\right)\right)^{\frac{2(p-1)}{p}} }{ \mu^2 K^{ \frac{2 (p - 1)}{p}}} ,
\end{split}
\end{align}
with probability at least $1 - \delta/ (2T)$ for all given ${\hat R}_{y,T'-1}^2 \geq \Norm{\hat{y}_{T'-1, 0} - y^*(x_{T'-1})}^2$. 
In addition, the induction hypothesis $\BP(E_{T-1}) \geq 1 - (T-1) \delta / (2T)$ implies we can 
apply \eqref{eq:minimax_induction-T-11} by
taking ${\hat R}_{y,T'-1}^2=R_y^2$ and union bound on $E_{T'-1}$ and (\ref{eq:minimax_induction-T-11}) to guarantee
\begin{align*}
\begin{split}    
      \Norm{\hat{y}_{T'-1,K} - y^*(x_{T'-1})}^2
    \leq & 2 R_y^2 \exp\left(- \frac{\mu K}{400 \ell  \ln\left({8 T K}/{\delta}\right)}\right)  \\
    & + \frac{2 \cdot 5400^{\frac{2}{p}} \sigma^2 (\ln (B_{y,T'-1}))^2 \left(\ln\left({8 T K }/{\delta}\right)\right)^{\frac{2(p-1)}{p}} }{ \mu^2 K^{ \frac{2 (p - 1)}{p}}},
\end{split}
\end{align*}
holds with probability at least $1 - T'\delta / (2 T)$.
Combining above inequality with \eqref{eq:minimax_induction-dist-001}, we guarantee
\begin{align*}
 \Norm{\hat{y}_{T',0} - y^*(x_{T'})}^2
    \leq & 4 R_y^2 \exp\left(- \frac{\mu K}{400 \ell  \ln\left({8 T K}/{\delta}\right)}\right)  \\
    & + \frac{4 \cdot 5400^{\frac{2}{p}} \sigma^2 (\ln (B_{y,T'-1}))^2 \left(\ln\left({8 T K }/{\delta}\right)\right)^{\frac{2(p-1)}{p}} }{ \mu^2 K^{ \frac{2 (p - 1)}{p}}}  + 2 \kappa^2 \eta_x^2 \\
    \leq & \frac{8 \epsilon^2}{\ell^2} + 2 \kappa^2 \eta_x^2 \leq \frac{10 \epsilon^2}{\ell^2} \leq R_y^2.
\end{align*}
holds with probability at least $1 - T' \delta/(2T)$, i.e., $\BP(E'_{T'}) \geq 1 - T' \delta/ (2 T) $,
where the second inequality is based on the setting of $K$,
the third inequality is based on the setting of $\eta_x$,
and the last inequality is based on the definition of $R_y^2$.
This completes the induction.

Therefore, we can apply Lemma \ref{lemma:high_prob_strongly_convex_converg} on $\Norm{\hat{y}_{l, K} - y^*(x_{l})}^2$ with $\hat{R}_{y,l} = R_y$ and the union bound on $E_t$ and (\ref{eq:minimax_dis_t_1}) to guarantee
\begin{align*}
\begin{split}    
   & \Norm{{y}_{l} - y^*(x_{l})}^2 = \Norm{\hat{y}_{l, K} - y^*(x_{l})}^2 \\ 
   \leq & 2 R_y^2 \exp\left(- \frac{\mu K}{400 \ell  \ln\left({8 T K}/{\delta}\right)}\right)  \\
    & + \frac{2 \cdot 5400^{\frac{2}{p}} \sigma^2 (\ln (B_{y,l}))^2 \left(\ln\left({8 T K }/{\delta}\right)\right)^{\frac{2(p-1)}{p}} }{ \mu^2 K^{ \frac{2 (p - 1)}{p}}}
   = \fO\left(\frac{\epsilon^2}{\ell^2}\right),
\end{split}
\end{align*}
holds for all $l = 0, \dots, t$ with probability at least $1 - (t+1) \delta / (2 T)$,
which implies 
\begin{align}
    \Norm{{y}_{t} - y^*(x_{t})} = \fO\left(\frac{\epsilon}{\ell}\right),
\end{align}
holds for all $t = 0,1, \dots, T-1$ with probability at least $1 - \delta/ 2$.

\end{proof}
Then we provide the proof of Theorem \ref{thm:minimax_high_prob}.
\begin{proof}
We follow the parameter settings in the proof of Lemma \ref{lemma:minimax_y_bound_high_prob} and additionally set
\begin{align*}
     T =  \Tilde\fO\left(\frac{\Delta_\Phi \kappa \ell}{\epsilon^2}\right)~~~\text{and}~~~
        M = \left(\frac{\sigma}{\epsilon}\right)^{\frac{p}{p-1}},
\end{align*}

    According to Lemma \ref{lemma:high_prob_main_res}, we have 
    \begin{equation}
         \frac{1}{T} \sum_{t=0}^{T-1}   \Norm{\nabla \Phi(x_t)} 
         \leq   \frac{\left(4 + 50  \ln\left({2}/{\delta}\right) \right) \sqrt{\Delta_\Phi (\kappa + 1) \ell}}{\sqrt{T}}  +  \sum_{t=1}^T \frac{4 \ell \Norm{ y_t - y^*(x_t)}}{T} +  \frac{8 \sigma}{M^{\frac{p-1}{p}}},
         \label{nonconvex_high_prob_1}
    \end{equation}
    with probability $ 1- \delta / 2$.
    Our choices of $T$ and $M$ guarantee
    \begin{align}\label{minimax_inter_1}
         \frac{\left(4 + 50  \ln\left({2}/{\delta}\right) \right) \sqrt{\Delta_\Phi (\kappa + 1) \ell}}{\sqrt{T}} = \fO(\epsilon)\quad{\rm and}\quad  \frac{8 \sigma}{M^{\frac{p-1}{p}}} = \fO(\epsilon),
    \end{align}
    According to Lemma \ref{lemma:minimax_y_bound_high_prob}, we have
    \begin{align}\label{minimax_inter_2}
        \sum_{t=1}^T \frac{4 \ell \Norm{ y_t - y^*(x_t)}}{T} = \fO(\epsilon)
    \end{align}
    with probability at least $1 - \delta / 2$.
    Combining equations (\ref{nonconvex_high_prob_1}), (\ref{minimax_inter_1}) and (\ref{minimax_inter_2}), it holds
    \begin{align*}
         \frac{1}{T} \sum_{t=0}^{T-1}   \Norm{\nabla \Phi(x_t)} = \fO(\epsilon)
    \end{align*}
    with probability at least $1 - \delta$.
     In addition, the total SFO complexity is 
    \begin{align*}
        T M + T K 
        = & \Tilde{O}\left(  \frac{\Delta_\Phi \kappa^{\frac{2p -1 }{p-1}} \ell \sigma^{\frac{p}{p-1}}}{\epsilon^\frac{3p-2}{p-1}}  + \frac{\Delta_\Phi  \kappa^2 \ell}{ \epsilon^2} \right).
    \end{align*}
\end{proof}

\section{Theoretical Results for Strongly Convex Problem}\label{appendix:strongly_convex}

In this section, we present both in-expectation and high-probability guarantees of clipped-SGD for minimizing strongly convex functions under heavy-tailed noise, 
which serves as our analysis for bilevel and minimax problems.

\subsection{The In-Expectation Convergence Guarantee (The Proof of Lemma \ref{lemma:y_recur_strongly})}
We first present the following recursive relation for inexact gradient descent established by \citet{gorbunov2022clipped}. 

\begin{lem}[{\citet[Lemma~D.3]{gorbunov2022clipped}}]
    Suppose the function $h:\BR^d\to\BR$ is $\ell_h$-smooth and $\mu_h$-strongly convex, then the iteration
    \begin{align*}
        \hat{y}_{k+1} = \hat{y}_{k} - \eta_y \hat{g}_{k}
    \end{align*}
    for $\eta_y\in(0,1/\ell_h]$ and $k = 0, 1, \dots, K$ holds that 
    \begin{align*}
        \Norm{\hat{y}_{K+1} - y^*}^2 \leq & (1 - \eta_y \mu_h)^{K+1} \Norm{\hat{y}_{0} - y^*}^2 +  \eta_y^2 \sum_{k=0}^K (1 - \eta_y \mu_h)^{K-k} \Norm{\omega_{k}}^2\\
        &- 2 \eta_y \sum_{k=0}^K (1 - \eta_y \mu_h)^{K-k}\langle \hat{y}_{k}  - y^* - \eta_y \nabla h( \hat{y}_{k}), \omega_{k} \rangle .
    \end{align*}
    where $\omega_k = \hat{g}_{k} - \nabla h(\hat{y}_{k})$ and $y^* = \argmin_{y \in \BR^{d_y}} h(y)$.
    \label{lemma:strongly_concave_recur}
\end{lem}

We then provide the proof of Lemma~\ref{lemma:y_recur_strongly}.
\begin{proof}
    For update~(\ref{alg:sgd}), we define
    \begin{align*}
        R_{k}^2 = \Norm{\hat{y}_{k} - y^*}^2, \qquad
        \hat{g}_{k} = {\rm clip}(\nabla H(\hat{y}_k; \nu_k), \tau_k), 
        \qquad\text{and}\qquad
        \omega_k = \hat{g}_{k} - \nabla h(\hat{y}_{k}).
    \end{align*}
    We then use induction to show that the inequality
    \begin{equation}
        \BE[{R}_{k}^2] \leq 2 \exp(- \eta_y \mu_h k) \hat{R}^2,
        \label{induction_res}
    \end{equation}
    holds for all $k = 0, 1, \dots, K$.
    For the induction base $k=0$, we have $\BE[R_{0}^2] \leq \hat{R}^2 \leq 2 \hat{R}^2$ by definitions of $R_0$ and $\hat R$. 
    For the induction step, we suppose \eqref{induction_res} holds for all $k = 0, 1, \dots, K' - 1$. 
    We then consider the case of $k=K'$.
    For all $k=0,1,\dots,K'-1$, the smoothness of $h$ implies 
    \begin{align}
        \BE[\Norm{\nabla h(\hat{y}_{k})}] \leq \ell_h \BE[\Norm{\hat{y}_{k} - y^*}] \leq \sqrt{2} \ell_h \hat{R} \exp\left(-\frac{\eta_y \mu_h k}{2}\right)  \leq \frac{\tau_k}{2},
        \label{grad_bound}
    \end{align}
    where the second step is based on the induction hypothesis (\ref{induction_res}) and the last step is based on the parameter settings of $\eta_y$  and $\tau_k$.

       For the ease of the notation, we define
\begin{align*}
    \iota_{k} = \hat{y}_{k} - y^* - \eta_y \nabla h(\hat{y}_{k}).
\end{align*}
Then we have
\begin{equation}
   \BE[\Norm{\iota_{k}}] \leq \BE[\Norm{\hat{y}_{k} - y^*}] + \eta_y \BE[\Norm{\nabla h(\hat{y}_{k})}]  \leq \sqrt{2}\hat{R}(1 + \eta_y \ell_h )  \exp\left(-\frac{\eta_y \mu_h k}{2}\right)
   \label{iota_bound}
\end{equation}
for $k=0, 1, \dots, K'-1$, where the second inequality follows from equations  (\ref{induction_res}) and (\ref{grad_bound}).
According to Lemma~\ref{lemma:strongly_concave_recur} and the fact $(1 - \eta_y \mu_h)^{K'} \leq \exp(- \eta_y \mu_h K')$, we obtain
\begin{align}\label{eq:cvx-omega}
\begin{split}
    R_{K'}^2 \leq & \exp( - \eta_y \mu_h K') R_{0} + 2 \eta_y^2 \sum_{k=0}^{K'-1} (1 - \eta_y \mu_h)^{K'-1-k} \Norm{\omega_{k}}^2\\
    & - 2 \eta_y \sum_{k=0}^{K'-1} (1 - \eta_y \mu_h)^{K'- 1 -k}\langle \hat{y}_{k}  - y^* - \eta_y \nabla h(\hat{y}_{k}), \omega_{k} \rangle  \\
    = & \exp( - \eta_y \mu_h K') R_{0}^2   
         + 2 \eta_y^2 \sum_{k=0}^{K'-1} (1 - \eta_y \mu_h)^{K'-1-k} \Norm{\omega_{k}}^2 \\
         & - 2 \eta_y \sum_{k=0}^{K'-1} (1 - \eta_y \mu_h)^{K'- 1 -k}\langle \iota_{k}, \omega_{k} \rangle.
\end{split}         
\end{align}
For the right-hand side of above inequality, we split $\omega_{k}$ into the unbiased term and the biased term as 
\begin{align*}
    \omega_{k} = \underbrace{\hat{g}_{k} - \BE[\hat{g}_{k}]}_{\omega_{k}^u} + \underbrace{\BE[\hat{g}_{k}] - \nabla h( \hat{y}_{k})}_{\omega_{k}^b}.
\end{align*}
Therefore, the result of \eqref{eq:cvx-omega} can be written as
\begin{align}\label{eq:RK2p}
\small\begin{split}
       \BE[ R_{K'}^2] \leq & \exp( - \eta_y \mu_h K') \BE[R_{0}^2] +  \underbrace{\left(-2 \eta_y \sum_{k=0}^{K'-1} (1 - \eta_y \mu_h)^{K'- 1 -k} \BE[\langle \iota_{k}, \omega_{k}^u \rangle ]\right)}_{A_{1}}\\
        & + \underbrace{\left(-2 \eta_y \sum_{k=0}^{K'-1} (1 - \eta_y \mu_h)^{K'- 1 -k}\BE[\langle \iota_{k}, \omega_{k}^b \rangle]\right)}_{A_{2}} + \underbrace{2 \eta_y^2 \sum_{k=0}^{K'-1} (1 - \eta_y \mu_h)^{K'-1-k} \BE[\Norm{\omega_{k}}^2]}_{A_{3}}.
\end{split}        
\end{align}

Now we bound the terms $A_1$, $A_2$, and $A_3$ on the right-hand side of the above inequality respectively.
Consider that \eqref{grad_bound} implies that $\Norm{\nabla h(\hat{y}_{k})} \leq \tau_k / 2$, then we can apply Lemma~\ref{lemma:clip_grad} to achieve
\begin{equation}
    \big\|\omega_{k}^b\big\| \leq \frac{2^{p} \sigma_h^p}{\tau_k^{p-1}} \qquad\text{and}\qquad
    \BE[\Norm{\omega_{k}}^2] \leq 18 \tau_k^{2- p} \sigma_h^p,
    \label{omega_bound_1}
\end{equation}
for all $k = 0, 1, \dots, K'-1$.

For the term $A_1$, the fact $\BE[\omega_{k}^u] = 0$ implies
\begin{align*}
    A_1=\BE\left[-2 \eta_y \sum_{k=0}^{K'-1} (1 - \eta_y \mu_h)^{K'- 1 -k}\langle \iota_{k}, \omega_{k}^u \rangle \right] = 0.
\end{align*}

For the term $A_{2}$, we first establish the inequality
\begin{align*}
    & -2 \eta_y \sum_{k=0}^{K'-1} (1 - \eta_y \mu_h)^{K'- 1 -k} \langle \iota_{k}, \omega_{k}^b \rangle \\
    \leq & 2 \eta_y \exp(- \eta_y \mu_h (K' - 1)) \sum_{k=0}^{K' - 1} \frac{\Norm{\iota_{k}} \Norm{\omega_{k}^b}}{\exp(- \eta_y \mu_h k)} \\
    \leq & 2^{p+1} \eta_y  \exp(- \eta_y \mu_h (K' - 1)) \sigma_h^p \sum_{k=0}^{K' - 1} \frac{\Norm{\iota_{k}}}{\tau_k^{p-1}\exp(- \eta_y \mu_h k)},
\end{align*}
where the first inequality follows from Cauchy--Schwartz inequality, and the second inequality is due to (\ref{omega_bound_1}).
Taking expectations on both sides of above inequality, we obtain
\begin{align*}
\begin{split}    
    A_2 = & - 2 \eta_y \sum_{k=0}^{K'-1} (1 - \eta_y \mu_h)^{K'- 1 -k} \BE[\langle \iota_{k}, \omega_{k}^b \rangle ]\\
      \leq &  2^{p + 1}   \eta_y (1 + \eta_y \ell_h ) \exp(- \eta_y \mu_h (K' - 1))\sigma_h^p \sum_{k=0}^{K' - 1} \frac{ \sqrt{2} \exp(- \eta_y \mu_h k/ 2) \hat{R} }{\tau_k^{p-1} \exp(- \eta_y \mu_h k)} \\
    \leq & \frac{2^{p + 3} 120^{p-1} \sqrt{2}\eta_y^p (1 + \eta_y \ell_h ) \exp(- \eta_y \mu_h K' )  \sigma_h^p \sum_{k=0}^{K' - 1} \exp(p \eta_y \mu_h k/ 2)}{\hat{R}^{p-2}} \\
    \leq & \frac{2^{p + 3} 120^{p-1} \sqrt{2}\eta_y^p (1 + \eta_y \ell_h ) \exp(- \eta_y \mu_h K')  \sigma_h^p K' \exp(p \eta_y \mu_h K'/ 2)}{\hat{R}^{p-2}} \\
 \leq & \frac{1}{2} \exp(- \eta_y \mu_h K') \hat{R}^2.
\end{split} 
\end{align*}
where the first inequality is due to \eqref{iota_bound},
the second inequality follows the setting of~$\tau_k$, and 
the last inequality follows the setting $\eta_y$.

For the term $A_{3}$, we have
\begin{align*}
   A_3 = & 2 \eta_y^2 \sum_{k=0}^{K'-1} (1 - \eta_y \mu_h)^{K'-1-k} \BE\left[\Norm{\omega_{ k}}^2\right] \\
    \leq & 144 \eta_y^2 \exp(- \eta_y \mu_h (K' - 1)) \sigma_h^{p} \sum_{k=0}^{K'-1} \frac{\tau_k^{2 - p}}{\exp(-\eta_y \mu_h k)} \\
    \leq & 144 \eta_y^p \exp(- \eta_y \mu_h K' ) \sigma_h^{p} \hat{R}^{2-p} \sum_{k=0}^{K'-1} \frac{\exp(- \eta_y \mu_h (2 - p) k /2)}{120^{2 - p}\exp(-\eta_y \mu_h k)} \\
     \leq & 144 \eta_y^p \exp(- \eta_y \mu_h K') \sigma_h^{p} \hat{R}^{2-p} K' \frac{\exp( \eta_y \mu_h  p K' /2)}{120^{2 - p}} \\
     \leq  & \frac{\exp(- \eta_y \mu_h K') \hat{R}^2}{2},
\end{align*}
where the first inequality follows from the formula (\ref{omega_bound_1}), the second inequality is due to the choice of $\tau_k$, and the last inequality is due to the choice of $\eta_y$.

Combining above upper bounds on $A_1$, $A_2$, and $A_3$ with equation (\ref{eq:RK2p}), we obtain
\begin{align*}
    \BE[R_{K'}^2] \leq 2 \exp(- \eta_y \mu_h K') \hat{R}^2,
\end{align*}
which finishes the induction.

Applying \eqref{induction_res} with $k=K$, we have
\begin{align*}
    \BE[R_{K}^2] \leq & 2 \exp(- \eta_y \mu_h K) \hat{R}^2 
      \leq 2 \hat{R}^2 \max \left\{\exp\left(- \frac{K \mu_h}{400 \ell_h} \right), \frac{5400^{\frac{2}{p}} \sigma_h^2 (\ln (B_K))^2 }{\mu_h^2 K^{ \frac{2 (p - 1)}{p}} \hat{R}^2} \right\},
\end{align*}
where the last step is based on the settings of $\eta_y$ and $B_K$.
\end{proof}

\subsection{The High-Probability Convergence Analysis}
The following result states the high-probability convergence guarantee of clipped SGD for a strongly convex function, as established in Lemma 3.1 in \citet{sadiev2023high}. 

\begin{lem}[{\citet[Lemma 3.1]{sadiev2023high}}]
Assume the function ${h}(y) = \BE [{H}(y; \nu)]$ is $\ell_h$-smooth and $\mu_h$-strongly convex with the stochastic index $\nu\sim\fD_h$, and the stochastic gradient estimator $\nabla H(y; \nu)$ satisfies the conditions $\BE[\Norm{\nabla {H}(y; \nu) - \nabla h(y)}^p] \leq \sigma_h^p$ and $\BE[\nabla H(y; \nu)] = \nabla h(y)$ for some $\sigma_h>0$. 
We run the clipped stochastic gradient descent 
    \begin{equation}
    \begin{cases}    
        \nu_k \sim \fD_h \\
        \hat{y}_{k+1} = \hat{y}_{k} - \eta_y {\rm clip}(\nabla H(\hat{y}_k; \nu_k), \tau_k)
    \end{cases}
    \end{equation}
    for $k = 0, 1, \dots, K$ with 
\begin{align*}
    \tau_k =& \frac{\exp(- \eta_y \mu_h (1 + k/ 2) )\hat{R}}{120 \eta_y \ln\left({4( K+1 )}/{\hat\delta}\right)}, \quad
    \eta_y =  \min \left\{\frac{1}{400 \ell_h \ln\left({4 ( K+1 )}/{\hat\delta}\right)}, \frac{\ln (B_K)}{\mu_h (K+1)}\right\},\\
    B_K= & \fO\left(\max\left\{2, \frac{\mu_h^2 K^{ \frac{2 (p - 1)}{p}} \hat{R}^2}{ \sigma_h^2 \left(\ln \left(\max\left\{2, \frac{\mu_h^2 K^{\frac{2(p-1)}{p}} \hat{R}^2}{\sigma_h^2 (\ln(K / \hat{\delta}))^{\frac{2(p-1)}{p}}}\right\}\right)\right)^2 \left(\ln\left({4 (K + 1)}/{\hat\delta}\right)\right)^{\frac{2(p-1)}{p}} }\right\}\right),
\end{align*}    
Then for all $\hat{R} \geq \Norm{\hat{y}_0 - y^*}$ and $\hat{\delta} \in (0,1)$ satisfying $\ln\big({4 ( K+1 )}/{\hat\delta}\big) \geq 1$, 
we have
\begin{align*}
        \Norm{\hat{y}_{K+1} - y^*}^2 = \fO \left( \max \left\{ \hat{R}^2 \exp\left( -\frac{K\mu_h }{\ell_h \ln(K / \hat\delta)}\right), \frac{\sigma_h^2 (\ln (B_K))^2 \big(\ln\big({    K  }/{\hat\delta}\big)\big)^{\frac{2(p-1)}{p}}}{\mu_h^2 K^{ \frac{2 (p - 1)}{p}}} \right\} \right).
    \end{align*}
with probability at least $1 - \hat\delta$.
\label{lemma:high_prob_strongly_convex_converg}
\end{lem}

\section{The Improved SFO Bound for \texttt{F$^2$BSA}}\label{appendix:F2BDA}
In this section, we improved the upper bound in applying stochastic gradient descent to solve the stochastic strongly convex problem provided by
\citet[Theorem D.2]{chen2025near}. 
By replacing Theorem D.2 of \citet{chen2025near} with the following theorem, we directly obtain an improved SFO complexity of $\tilde\fO(\kappa^{11} \sigma^2 \epsilon^{-6})$ for the stochastic bilevel optimization under the bounded variance.

\begin{thm}
Suppose $h(x) \colon \BR^d \to \BR$ is $\beta$-smooth and $\alpha$-strongly convex.
Consider the following update of stochastic gradient descent
\begin{align*}
    x_{t+1} = x_t - \frac{1}{\beta} \nabla h(x_t; \fB_t),
\end{align*}
for $t=0,\dots,T-1$, where the gradient estimator satisfies
\begin{align*}
    \BE[\nabla h(x_t; \fB_t)] = \nabla h(x_t)
    \qquad\text{and}\qquad
    \BE[\Norm{\nabla h(x_t; \fB_t) - \nabla h(x_t)}^2] \leq \frac{\sigma^2}{B}.
\end{align*}
Then it holds that
 \begin{align*}
        \BE[ \Norm{x_{T} - x^*}^2] \leq \left(1 - \frac{\alpha}{\beta}\right)^T\BE[ \Norm{ x_0 - x^*}^2] + \frac{\sigma^2}{\alpha \beta B}.
    \end{align*}
where $x^* = \argmin_{x \in \BR^d} h(x)$.
\end{thm}

\begin{proof}
    According to the update rule, we have
    \begin{align*}
        \Norm{x_{t+1} - x^*}^2 = & \Norm{x_{t} - x^*}^2  - \frac{2}{\beta} \langle x_t - x^*, \nabla h(x_t; \fB_t) \rangle + \frac{1}{\beta^2}\Norm{\nabla h(x_t; \fB_t)}^2.
    \end{align*}
    Taking the expectation on both sides of above inequality, we have
    \begin{align*}
       \BE[ \Norm{x_{t+1} - x^*}^2] = & \BE[\Norm{x_{t} - x^*}^2]  - \frac{2}{\beta} \BE[\langle x_t - x^*, \nabla h(x_t) \rangle] + \frac{1}{\beta^2}\BE[\Norm{\nabla h(x_t; \fB_t)}^2] \\
       \leq & \BE[\Norm{x_{t} - x^*}^2]  - \frac{2}{\beta} \BE[\langle x_t - x^*, \nabla h(x_t) \rangle] + \frac{\sigma^2}{\beta^2 B} \\
       \leq & \BE[\Norm{x_{t} - x^*}^2]  - \frac{\alpha}{\beta}\BE[ \Norm{ x_t - x^*}^2] + \frac{2}{\beta}\BE[h(x^*) - h(x_t)] + \frac{\sigma^2}{\beta^2 B} \\
       \leq & \left(1 - \frac{\alpha}{\beta}\right)\BE[ \Norm{ x_t - x^*}^2] + \frac{\sigma^2}{\beta^2 B},
    \end{align*}
    where the first inequality is due to the bounded variance assumption, the second inequality follows from the strong convexity assumption.
    The last inequality comes from the optimality of $x^*$.
    Unrolling the above inequality for $T$ iterations, we have
    \begin{align*}
        \BE[ \Norm{x_{T} - x^*}^2] \leq \left(1 - \frac{\alpha}{\beta}\right)^T\BE[ \Norm{ x_0 - x^*}^2] + \frac{\sigma^2}{\alpha \beta B}.
    \end{align*}
\end{proof}

\section{Discussion on the Concurrent Work}\label{sec:concurrent}

In a concurrent work, \cite{zhang2025nonconvex} studied decentralized stochastic bilevel optimization under heavy-tailed noise, while their pre-print version on arXiv does not provide the proofs for theoretical results. 
We clarify the difference between our setting and that of related work as follows.
\begin{itemize}[leftmargin=1cm,topsep=0.05cm,itemsep=0.03cm]
    \item The concurrent work proposed a normalized stochastic gradient descent algorithm based on momentum-based variance reduction \citep{cutkosky2019momentum}, which requires all stochastic components $F(x,y;\xi)$ and $G(x,y;\zeta)$ have the Lipschitz continuous gradient \citep[Assumption 3.1]{zhang2025nonconvex}. In contrast, our algorithm design and analysis only require the upper-level function $f(x,y)$ and the lower-level function $g(x,y)$ have the Lipschitz continuous gradient, which is weaker than that of \cite{zhang2025nonconvex}.
    \item The concurrent work assumes both the lower-level function $g(x,y)$ and the penalized function $\delta f(x,y)+g(x,y)$ satisfies Polyak--Lojasiewicz (PL) condition with respect to $y$ \citep[Assumption 3.1]{zhang2025nonconvex}, where $\delta>0$ is the penalized parameter which is similar to the role of $1/\lambda$ in our analysis.
    Although the PL condition on~$g(x,y)$ is weaker than the strong convexity used in our work, the PL condition on $\delta f(x,y)+g(x,y)$ is difficult to verify since the choice of their $\delta$ depends on the iteration number of the algorithm \cite[Theorem 4.5]{zhang2025nonconvex}.
\end{itemize}

\bibliographystyle{plainnat}
\bibliography{reference}
\end{document}